\documentclass{article}
\usepackage{arxiv}

\usepackage{amsmath,amsfonts,bm}

\def\eqref#1{equation~\ref{#1}}

\def\1{\bm{1}}

\DeclareMathAlphabet{\mathsfit}{\encodingdefault}{\sfdefault}{m}{sl}
\SetMathAlphabet{\mathsfit}{bold}{\encodingdefault}{\sfdefault}{bx}{n}

\usepackage{xcolor}
\definecolor{lightblue}{rgb}{0.6, 0.8, 0.9}
\definecolor{darkblue}{HTML}{0c58ac}
\definecolor{darkgreen}{rgb}{0, 0.55, 0.12}
\definecolor{darkred}{rgb}{0.6,0,0}
\definecolor{yellowbg}{RGB}{255,249,224}
\definecolor{greenbg}{RGB}{232,246,232}
\definecolor{bluebg}{RGB}{226,241,255}
\usepackage{natbib}
\usepackage[
  breaklinks=true,
  colorlinks,
  linkcolor=darkred,
  citecolor=darkgreen,
  bookmarks=false
]{hyperref}
\usepackage{url}

\usepackage{tikz}
\usepackage{graphicx}
\usepackage{wrapfig}
\usepackage{booktabs}
\usepackage{adjustbox}
\usepackage{multirow}
\usepackage{xspace}
\usepackage{array}
\usepackage{bm}
\usepackage{bbm}
\usepackage{xcolor}
\usepackage{amsthm, amsmath, amssymb}
\usepackage{cleveref}
\usepackage{algorithm}
\usepackage{algorithmic}
\newtheorem{definition}{Definition}
\newtheorem{theorem}{Theorem}
\newtheorem{remark}{Remark}

\usepackage{listings}
\usepackage{wrapfig}
\usepackage{caption}
\usepackage{subcaption}
\usepackage{colortbl}
\usepackage{adjustbox}
\usepackage{makecell}
\usepackage{pifont}
\usepackage{tcolorbox}
\usepackage{setspace}
\usepackage{fancybox}
\usepackage{tocloft}
\usepackage{etoc}
\usepackage{enumitem}
\usepackage{pifont}
\usepackage{bm}
\definecolor{darksilver}{RGB}{160,160,160} 
\definecolor{mymauve}{rgb}{0.58,0,0.82}
\newlength{\mysize}

\allowdisplaybreaks

\theoremstyle{definition}

\title{Benign Overfitting under Heterogeneous Input Fusion}

\author{Houzhen Liu \qquad Xiaobo Xia \\
University of Science and Technology of China \\
\tt\small liuhz0919@mail.ustc.edu.cn \qquad xiaoboxia@ustc.edu.cn 
}

\begin{document}

\newcommand{\dataset}{{\cal D}}
\newcommand{\fracpartial}[2]{\frac{\partial #1}{\partial  #2}}
\newcommand{\Breg}{\mathsf{B}^{\mathrm{reg}}}
\newcommand{\Nreg}{\mathsf{N}^{\mathrm{reg}}}
\newcommand{\Bcls}{\mathsf{B}^{\mathrm{cls}}}
\newcommand{\Ncls}{\mathsf{N}^{\mathrm{cls}}}
\newcommand{\BFS}{\mathsf{B}^{\mathrm{FS}}}
\newcommand{\NFS}{\mathsf{N}^{\mathrm{FS}}}
\newtheorem{example}{Example} 
\newtheorem{lemma}[theorem]{Lemma} 
\newtheorem{proposition}[theorem]{Proposition} 
\newtheorem{corollary}[theorem]{Corollary}
\newtheorem{conjecture}[theorem]{Conjecture}
\newtheorem{axiom}[theorem]{Axiom}

\maketitle

\begin{abstract}
Benign overfitting is extensively studied when learning from a single
high-dimensional input, but its behavior under heterogeneous input fusion
remains largely unexplored. We study this question for minimum-norm linear interpolation under a
heterogeneous Gaussian design, comparing two statistically dependent input
blocks with their fusion while holding the underlying population task fixed. For regression, we identify a full-spectrum covariance certificate whose asymptotic status is independent of the cutoff threshold and prove that it is preserved by every positive-semidefinite joint covariance consistent with the two marginals. This protection is sharp, yet it does
not extend to all benign regression problems: outside the certified regime,
two benign marginals can have a harmful fusion. For one-sparse Gaussian
classification, benignity in the regular regime is characterized by the
balance between surviving predictive signal and nuisance contamination.
Fusion can move these two quantities in opposite directions, and within this
model class every marginal-to-joint benign/non-benign pattern is attainable.
We further show that the same fused input can have qualitatively
different effects on regression and classification. These results establish
that benign overfitting under heterogeneous fusion is determined by the joint
signal and spectral geometry created by input interaction, rather than by
marginal benignity alone.
\end{abstract}
\addtocontents{toc}{\protect\setcounter{tocdepth}{-1}}

\section{Introduction}
\label{sec:introduction}

Interpolation is no longer regarded as inherently incompatible with
generalization. Modern overparameterized models can fit every training
observation, including noise, and nevertheless achieve small population error~\citep{zhang2017understanding,belkin2019reconciling}. In high-dimensional linear regression, this phenomenon is commonly known as
benign overfitting~\citep{bartlett2020benign} and has been characterized through the geometry of
the input distribution. Whether the minimum-norm interpolator generalizes
depends not simply on the number of parameters, but on the covariance
spectrum, its effective dimension, and the placement of predictive signal
relative to that spectrum
\citep{bartlett2020benign,muthukumar2020harmless,hastie2022surprises,
tsigler2023benign}. Much of this theory, however, starts from a single input covariance and asks whether interpolation is benign, leaving open what happens when multiple sources of information are combined.

Many learning problems do not come with a single homogeneous input. Different sensors, views, measurements, or feature groups may describe the same observation, often with substantial statistical dependence between them~\citep{andrew2013deep,michaeli2016nonparametric,zhou2023multi,damianou2020multiview}. Learning from multiple sources and views provides prominent examples~\citep{xu2023multimodal,zhao2017multi}, but the issue is more general: one may acquire an additional measurement, append a new group of covariates, or combine two already informative input blocks~\citep{wang2015deepmultiview,baltrusaitis2019multimodal}. In all these cases, a natural operation is to fuse the available inputs. If either input is used alone, existing benign-overfitting theory may characterize whether interpolation generalizes. What is much less clear is whether these marginal conclusions survive fusion. Two individually benign inputs need not obviously remain benign when combined, while two individually unfavorable inputs may conceivably produce a favorable fused input. The central issue is therefore not the marginal inputs themselves, but how their interaction reshapes the behavior of the interpolating estimator.

Understanding this interaction requires more than treating fusion as an increase in ambient dimension. Statistical dependence between input blocks alters the spectrum of the joint covariance and, at the same time, changes how the predictive signal is arranged relative to that spectrum. Consequently, properties that appear harmless from either marginal input can become important after concatenation, while unfavorable marginal structures may offset one another in the joint problem. A theory of benign overfitting under heterogeneous fusion must therefore separate what is inherited from the marginal inputs from what emerges only through their dependence. This leads to the central question of this work: \textit{when can combining multiple sources preserve, destroy, or even create benign overfitting}?

We study this question in an overparameterized linear model with heterogeneous Gaussian inputs. The first input block is Gaussian, while the second contains a component linearly inherited from the first together with an independent innovation. This construction spans independent, redundant, and partially dependent inputs, while making the interaction between the two blocks explicit. We compare learning from either marginal input with learning from their concatenation. Importantly, these are not three separately designed prediction problems. Both regression responses and binary labels are generated from the same latent Gaussian score, so fusion changes only the information available to the learner, not the underlying population task. We consider squared-loss regression with the minimum-Euclidean-norm linear interpolator and binary classification obtained by applying the same minimum-norm least-squares rule to binary labels. This common construction allows us to isolate how input fusion itself changes the conditions under which benign overfitting occurs.

The resulting picture is sharply task-dependent. Specifically, for regression, we identify a protected spectral regime in which benign overfitting is stable under fusion. The regime is characterized by full-spectrum effective-rank conditions, whose asymptotic validity is independent of the fixed threshold used to define the spectral cutoff. More importantly, this regime is closed under fusion at the covariance level: if both marginal covariances satisfy the certificate, then every positive-semidefinite joint covariance consistent with those marginals satisfies it as well. This closure is only a sufficient guarantee, however, not a complete characterization of signal-specific benign regression. Outside the protected regime, benignity can fail to compose. Two marginal problems may both achieve vanishing excess risk while their fusion retains non-vanishing error, and even asymptotically negligible innovation energy can change the outcome when it is distributed unfavorably across the spectrum. Therefore, regression admits a robust spectral law for fusion, but not an unconditional law of marginal benignity. Classification follows a fundamentally different principle. After separating
the predictive direction from the nuisance spectrum, benignity is governed by
a balance between signal survival under minimum-norm interpolation and
nuisance contamination. In a regular one-sparse regime, this yields a sharp
criterion based on the relation between signal survival and contamination.
Fusion can alter these two terms in different directions. It may reduce
contamination while weakening the surviving signal, or strengthen the signal
while increasing contamination. As a result, no analogue of the regression
closure law exists. Indeed, every marginal-to-joint benign/non-benign pattern
can occur. This contrast arises under the same heterogeneous inputs and the
same latent score, showing that the effect of fusion depends fundamentally on
the prediction task rather than on the input distribution alone.

Together, these results show that benign overfitting is not generally compositional across information sources. What determines the effect of fusion is the aspect of joint geometry that governs generalization: the full covariance spectrum in the protected regression regime, and the balance between surviving signal and nuisance contamination in classification. This perspective explains how statistically dependent information can preserve, destroy, or create benign overfitting.

\subsection{Related Work}
\label{sec:related-work}

\paragraph{Benign overfitting in regression.}
Benign overfitting has been studied most extensively in overparameterized
linear regression. Specifically, \citet{bartlett2020benign} characterize when the
minimum-Euclidean norm interpolator can generalize despite exactly fitting
noisy responses, with the relevant conditions expressed through effective
ranks of the population covariance. Related work has clarified how high dimensionality can absorb and distribute
response noise across many directions~\citep{muthukumar2020harmless,chatterji2022foolish}, derived precise
high-dimensional characterizations of ridgeless least squares and its
statistical behavior~\citep{hastie2022surprises,han2026distribution}, and examined how covariance
structure and signal alignment shape transitions between benign and harmful
interpolation~\citep{li2026riskphase}. Analogous phenomena have also been
established for kernel interpolation~\citep{liang2020just}. Further studies have examined norm-based explanations of interpolation~\citep{koehler2021uniform,bartlett2021failures}, robustness of minimum-norm
interpolators~\citep{chinot2022robustness}, and the role of algorithmic and
optimization-induced implicit bias~\citep{donhauser2022fast,chatterji2022interplay,zou2023sgd}.
Benign overfitting has also been studied for regularized estimators such as
ridge regression~\citep{tsigler2023benign}.

Previous work primarily asks whether interpolation is benign for a given
input distribution. Instead, our focus is on what happens when two
statistically dependent input blocks are combined. Importantly, this problem requires
relating the spectral geometry of two marginal inputs to that of their joint
input, rather than analyzing each covariance in isolation. A closely related
setting is considered by \citet{shamir2022implicit}, who studies benign
overfitting when a low-dimensional component is concatenated with a
high-dimensional component. We consider a different regime in which both
input blocks may be high dimensional and statistically dependent, and the
same population task is studied using either input separately and using both
jointly.

\paragraph{Benign overfitting in classification.}
A parallel literature has shown that interpolation can behave differently in
classification. For instance, \citet{muthukumar2021classification} compare regression and
classification in overparameterized Gaussian models and identify regimes in
which the least-squares minimum-norm solution agrees with the hard-margin
classifier. \citet{hsu2021proliferation} study the proliferation of support
vectors in high dimensions, while \citet{chatterji2021finite} establish
finite-sample risk guarantees for overparameterized maximum-margin
classification in the presence of label noise. Benign classification has
also been analyzed under Gaussian and sub-Gaussian mixture models~\citep{cao2021risk,wang2022binary}, in multiclass settings~\citep{wang2021multiclass}, and for nonlinear classifiers in
overparameterized regimes~\citep{frei2022benign,frei2023kkt}.

These prior works largely characterize when a single interpolating classifier
generalizes. We instead ask how classification behavior changes when dependent input blocks are observed separately or jointly. Our classifier is
the least-squares minimum-norm interpolator applied directly to binary labels,
and the marginal and fused problems share the same latent label mechanism.
This setup isolates the effect of input fusion without changing either the
learning rule or the underlying prediction task.

\paragraph{Learning with multiple input sources.}
Learning from multiple statistically dependent sources has been studied
extensively in multi-source and multi-view learning
\citep{wang2015deepmultiview,baltrusaitis2019multimodal}. A recurring theme is
that different inputs may contain both shared and source-specific
information. For example, \citet{shu2022dgcca} study the decomposition of
high-dimensional multi-view data into common and distinctive components. \citet{tosh2021contrastive} relate redundancy across views to downstream
prediction in contrastive learning, while
\citet{zhang2023multimodal} analyze generalization in multi-source contrastive
learning. Existing work also shows that incorporating additional sources need
not uniformly improve learning. \citet{du2023unimodal} identify settings in
which joint multi-source training can impair the learning of useful
single-source features, and \citet{he2024modality} study the utility and
selection of sources when the available inputs are statistically dependent.

Our setting is complementary to this literature. We do not learn the input
features, design a fusion architecture, or select among available sources.
Instead, the input blocks and their joint distribution are given, the learning
rule is fixed, and we ask how observing them jointly changes the
generalization behavior of an interpolating predictor. This question is also
distinct from covariate-shift analyses of minimum-norm interpolation
\citep{mallinar2024minimum}, where the same input variables are observed
under different training and test distributions. Here the population task is
fixed, and what changes is whether the learner observes either input block
alone or both together.

\subsection{Overview of Results}
\label{sec:overview}

Our results identify different composition principles for regression and
classification under heterogeneous input fusion, which are summarized below.

\paragraph{A protected spectral regime for regression.}
We first identify a full-spectrum covariance certificate that captures the effective-rank requirements underlying benign minimum-norm regression. Its asymptotic status is invariant to the fixed threshold used to define the spectral cutoff, which turns it into an intrinsic property of the covariance sequence. Our main regression result shows that this property is closed under fusion. If both marginal covariances satisfy the condition, then every
positive-semidefinite joint covariance with those marginals satisfies it as well. Moreover, this is the only universal restriction at the covariance level. Every other marginal-to-joint spectral pattern is attainable. The scope of this closure is essential. The full-spectrum condition is a sufficient certificate for benign regression, not a characterization of every benign task. Outside the certified regime, marginal benignity need not compose. We construct two marginal regression problems with vanishing excess risk whose fusion retains non-vanishing error. We also show that an innovation with vanishing total energy can change the outcome when its energy is spread across an unfavorable part of the spectrum. Thus regression admits a genuine
spectral protection law, but no unconditional closure law at the level of task-specific benignity.

\paragraph{Signal and nuisance in classification.}
Classification is controlled by a different geometry. After separating the
predictive direction from the nuisance spectrum, we characterize benignity by
the nuisance contamination relative to the amount of predictive signal that
survives minimum-norm interpolation. The resulting condition is sufficient in
general and necessary in the regular one-sparse regime. This characterization also determines how fusion acts. The change from a
marginal input to the joint input separates into two effects, one on nuisance
contamination and one on signal survival. Neither is monotone. Consequently,
marginal benignity alone places no nontrivial restriction on the fused
classifier. All six unordered marginal-to-joint benign/non-benign patterns
are attainable, including both the failure of two benign marginals after
fusion and the recovery of two non-benign marginals through fusion.

\paragraph{Input fusion is task dependent.}
The difference between these two laws persists even when the inputs and the
underlying signal are held fixed. Using a common latent score for regression
and classification, we construct one setting in which both marginal inputs
are non-benign for both tasks, while fusion makes classification benign but
leaves regression non-benign. We construct another in which both marginals
are benign for both tasks, while fusion destroys benign regression and
preserves benign classification. Hence the effect of fusion is not a property
of the input pair alone. It depends on which aspect of the joint geometry
governs generalization for the task.

\subsection{Organization}
The remainder of the paper is organized as follows. Section~\ref{sec:preliminaries} introduces the heterogeneous Gaussian data model, formulates the regression and classification problems under a common latent score, and defines the corresponding minimum-norm interpolators and notions of benign overfitting. Section~\ref{sec:heterogeneous-regression} develops the regression theory, beginning with the full-spectrum covariance certificate and then studying its invariance, fusion properties, and limitations at the task level. Section~\ref{sec:classification-results} turns to classification, where the analysis is organized around signal survival and nuisance contamination and culminates in the complete interaction pattern under fusion. Section~\ref{sec:heterogeneous-separation} directly compares the two tasks under the same input distribution and latent score, establishing that fusion can affect regression and classification differently. Section~\ref{sec:experiments} presents numerical experiments that illustrate the predicted interaction patterns and the mechanisms behind the main constructions. Section~\ref{sec:conclusion} concludes this paper. For better readability, proofs of theoretical results are collected in the Appendix. 

\section{Preliminaries}
\label{sec:preliminaries}
This section introduces the probabilistic framework and learning setup used throughout the paper. We first specify the heterogeneous Gaussian model and the two marginal inputs together with their fusion, and then define the regression and classification tasks through a common latent score. We finally introduce the corresponding minimum-norm interpolators and the asymptotic notions of benign and harmful overfitting.
\subsection{Notation}
\label{subsec:notation}
For a positive integer $m$, let $[m]:=\{1,\ldots,m\}$.  Vectors are column
vectors.  For $v\in\mathbb R^d$, $\|v\|_2$ denotes its Euclidean norm.  For a
matrix $M$, we write $M^\top$ for its transpose, $M^\dagger$ for its
Moore-Penrose pseudoinverse, $\operatorname{tr}(M)$ for its trace,
$\|M\|_{\mathrm{op}}$ for its operator norm, and $\|M\|_{\mathrm F}$ for its
Frobenius norm.  The notation $M\succeq0$ means that $M$ is symmetric positive
semidefinite, and $M\preceq N$ means that $N-M\succeq0$.  We use $I_m$ for the
$m\times m$ identity matrix and $e_j$ for the $j$th standard basis vector.  For
symmetric matrices $M_1$ and $M_2$, $M_1\oplus M_2$ denotes their block direct
sum.  Unless otherwise stated, $\operatorname{sign}(u)=1$ for $u\geq0$ and
$-1$ otherwise; its value at zero is immaterial under the continuous models
considered here.

For nonnegative deterministic sequences $a_n$ and $b_n$, we write
$a_n\lesssim b_n$ if $a_n\leq Cb_n$ for a constant $C$ independent of $n$,
and $a_n\asymp b_n$ if both $a_n\lesssim b_n$ and $b_n\lesssim a_n$.
For arbitrary deterministic sequences $x_n$ and positive sequences $a_n$,
we write $x_n=O(a_n)$ if $|x_n|\leq Ca_n$ eventually for some constant
$C<\infty$, and $x_n=o(a_n)$ if $|x_n|/a_n\to0$.  In particular,
$O(1)$ denotes a bounded deterministic sequence and $o(1)$ a deterministic
sequence converging to zero.  For random sequences,
$Z_n=O_{\mathbb P}(a_n)$ means that $Z_n/a_n$ is bounded in probability,
whereas $Z_n=o_{\mathbb P}(a_n)$ means that $Z_n/a_n\to0$ in probability.
We write $Z_n\asymp_{\mathbb P}a_n$ if there exist
constants $0<c<C<\infty$ such that
\begin{equation}
  \mathbb P\!\left(ca_n\leq Z_n\leq Ca_n\right)\longrightarrow1.
  \label{eq:probabilistic-order-definition}
\end{equation}
All limits are taken as the sample size $n\to\infty$.  Dimensions,
covariances, linear maps, and signal vectors may depend on $n$. This dependence
is suppressed when no ambiguity can arise.

Let $\Sigma\succeq0$ have eigenvalues
$\lambda_1(\Sigma)\geq\lambda_2(\Sigma)\geq\cdots\geq0$.  For any admissible
$k\geq0$, define the tail mass and the two tail effective ranks by
\begin{equation}
  S_k(\Sigma):=\sum_{j>k}\lambda_j(\Sigma),\qquad
  r_k(\Sigma):=\frac{S_k(\Sigma)}{\lambda_{k+1}(\Sigma)},\qquad
  R_k(\Sigma):=\frac{S_k(\Sigma)^2}
  {\sum_{j>k}\lambda_j(\Sigma)^2}.
  \label{eq:spectral-effective-ranks}
\end{equation}
We adopt the usual conventions when the relevant denominator vanishes.  Given
a fixed threshold $b>0$, the cutoff in \citet{bartlett2020benign} is
\begin{equation}
  k_{\mathrm{Bart}}^\star(\Sigma;b)
  :=\min\{k\geq0:r_k(\Sigma)\geq bn\},
  \label{eq:full-spectrum-bartlett-cutoff}
\end{equation}
with value $\infty$ if the set is empty.  This cutoff is always computed from
the \emph{full} covariance spectrum and is used for regression.

For classification, when the signal is an eigendirection $e_t$ of $\Sigma$,
we remove that direction and order the remaining nuisance eigenvalues as
$\mu_1\geq\mu_2\geq\cdots$.  The quantities $S_k^{(-t)},r_k^{(-t)}$, and
$R_k^{(-t)}$ are defined from $\{\mu_j\}$ by the same formulas as in
(\ref{eq:spectral-effective-ranks}), and
\begin{equation}
  k_{-t}^\star(b)
  :=\min\{k\geq0:r_k^{(-t)}\geq bn\},
  \label{eq:nuisance-spectrum-cutoff}
\end{equation}
with value $\infty$ if the set is empty.
Thus $k_{\mathrm{Bart}}^\star$ and $k_{-t}^\star$ are different objects: the
former uses the full spectrum, whereas the latter first deletes the true
classification signal direction.

\subsection{Heterogeneous Gaussian input model}
\label{subsec:heterogeneous-data-model}

We here consider two statistically dependent input blocks generated from a shared Gaussian component and an independent innovation. Let
\begin{equation}
  X_1\sim \mathcal{N}(0,D),\qquad
  U\sim \mathcal{N}(0,\Sigma_U),\qquad U\perp X_1,
  \qquad X_2=AX_1+U,
  \label{eq:heterogeneous-generation-model}
\end{equation}
where $X_1\in\mathbb R^{d_1}$, $X_2,U\in\mathbb R^{d_2}$,
$A\in\mathbb R^{d_2\times d_1}$, $D\succeq0$, and $\Sigma_U\succeq0$.
The term $AX_1$ captures the component of the second input inherited from the first, whereas $U$ is the innovation specific to the second input. This parametrization includes independent inputs ($A=0$), noiseless redundancy
($\Sigma_U=0$), and partially dependent inputs as special cases.

We compare learning from the following three inputs:
\begin{equation}
  X^{(1)}:=X_1,\qquad
  X^{(2)}:=X_2,\qquad
  X^{(J)}:=\begin{pmatrix}X_1\\X_2\end{pmatrix}.
  \label{eq:three-inputs}
\end{equation}
Their covariance matrices are
\begin{align}
  \Sigma_1:=D, \qquad \Sigma_2:=ADA^\top+\Sigma_U,\qquad
  \Sigma_J:=
  \begin{pmatrix}
    D&DA^\top\\
    AD&ADA^\top+\Sigma_U
  \end{pmatrix}\succeq0.
  \label{eq:three-input-covariances}
\end{align}
Here we use $r\in\{1,2,J\}$ as a generic input index and write $d_r$
and $\Sigma_r$ for the corresponding dimension and covariance.  The block
positive-semidefiniteness of $\Sigma_J$ is the only restriction placed on the
cross-modal correlation in our structural fusion results.

\subsection{Regression and classification from a common latent score}
\label{subsec:tasks}

Fix vectors $\beta\in\mathbb R^{d_1}$ and
$\gamma\in\mathbb R^{d_2}$, and let $\xi\sim \mathcal{N}(0,\sigma^2)$ be independent of
$(X_1,U)$.  Both tasks are generated from the common latent Gaussian score
\begin{equation}
  L:=\beta^\top X_1+\gamma^\top U+\xi.
  \label{eq:common-latent-score}
\end{equation}
The regression response and binary classification label are, respectively,
\begin{equation}
  Y^{\mathrm{reg}}:=L,
  \qquad
  Y^{\mathrm{cls}}:=\operatorname{sign}(L)\in\{-1,1\}.
  \label{eq:regression-classification-responses}
\end{equation}
This common-score construction ensures that fusion changes only the observed input, not the underlying population task. Because all variables are jointly Gaussian, for every
$r\in\{1,2,J\}$ the latent score admits an input-specific linear decomposition
\begin{equation}
  L=b_r^\top X^{(r)}+\varepsilon_r,\qquad
  \varepsilon_r\sim \mathcal{N}(0,\tau_r^2),
  \label{eq:input-specific-reduction}
\end{equation}
where $\varepsilon_r$ is centered Gaussian and independent of $X^{(r)}$. Furthermore, let
\(\widetilde b_J=((\beta-A^\top\gamma)^\top,\gamma^\top)^\top\).  The
minimum-Euclidean-norm representatives, valid whether or not the covariances
are singular, are
\begin{align}
  b_1&=D^\dagger D\beta,
  &\tau_1^2&=\gamma^\top\Sigma_U\gamma+\sigma^2,
  \label{eq:source-one-effective-task}\\
  b_2&=\Sigma_2^\dagger(AD\beta+\Sigma_U\gamma),
  &\tau_2^2&=\beta^\top D\beta+\gamma^\top\Sigma_U\gamma+\sigma^2
  -(AD\beta+\Sigma_U\gamma)^\top
  \Sigma_2^\dagger(AD\beta+\Sigma_U\gamma),
  \label{eq:source-two-effective-task}\\
  b_J&=\Sigma_J^\dagger\Sigma_J\widetilde b_J,
  &\tau_J^2&=\sigma^2.
  \label{eq:joint-effective-task}
\end{align}
Here \(M^\dagger\) denotes the Moore--Penrose pseudoinverse. Population
coefficients are identified only modulo the null space of \(\Sigma_r\), and
\(\Sigma_r^\dagger\Sigma_r\) is the orthogonal projector onto
\(\operatorname{range}(\Sigma_r)\).  In particular,
\(D^\dagger D\beta\) and \(\beta\) produce the same score almost surely. The
unprojected vector \(\widetilde b_J\) is always a valid fused coefficient by
the identity below, and its projection in
(\ref{eq:joint-effective-task}) produces the same score almost surely.
(\ref{eq:joint-effective-task}) follows from the identity
\begin{equation}
  \beta^\top X_1+\gamma^\top U
  =(\beta-A^\top\gamma)^\top X_1+\gamma^\top X_2.
  \label{eq:joint-score-reparameterization}
\end{equation}
In particular, both \(\widetilde b_J\) and its uniquely determined
minimum-norm representative \(b_J\) are induced by the original pair
\((\beta,\gamma)\); no coefficient is reselected after observing the fused
covariance.  When the relevant covariance is invertible, these formulas reduce
to \(b_1=\beta\), the usual inverse formula for \(b_2\), and
\(b_J=\widetilde b_J\).

For regression, the population squared-loss risk of $w\in\mathbb R^{d_r}$ is
\begin{equation}
  \mathcal R_r^{\mathrm{reg}}(w)
  :=\mathbb E\bigl[(Y^{\mathrm{reg}}-w^\top X^{(r)})^2\bigr].
  \label{eq:regression-risk}
\end{equation}
The input-specific population oracle is $b_r$. Its Bayes risk is
$\mathcal R_r^{\mathrm{reg},\star}=\tau_r^2$, and
\begin{equation}
  \mathcal E_r^{\mathrm{reg}}(w)
  :=\mathcal R_r^{\mathrm{reg}}(w)-
  \mathcal R_r^{\mathrm{reg},\star}
  =(w-b_r)^\top\Sigma_r(w-b_r).
  \label{eq:regression-excess-risk}
\end{equation}

For classification, the linear classifier induced by $w$ is
$h_w(x):=\operatorname{sign}(w^\top x)$ and its $0$--$1$ risk is
\begin{equation}
  \mathcal R_r^{\mathrm{cls}}(w)
  :=\mathbb P\!\left(h_w(X^{(r)})\neq Y^{\mathrm{cls}}\right).
  \label{eq:classification-risk}
\end{equation}
Whenever $b_r^\top\Sigma_rb_r>0$, Gaussian conditioning shows that the Bayes
classifier based on input $r$ is $h_{b_r}$.  If
$\nu_r^2:=b_r^\top\Sigma_rb_r$, then its risk is
\begin{equation}
  \mathcal R_r^{\mathrm{cls},\star}
  =\frac1\pi\arccos\!\left(
  \frac{\nu_r}{\sqrt{\nu_r^2+\tau_r^2}}
  \right),
  \label{eq:classification-bayes-risk}
\end{equation}
and the classification excess risk is
\begin{equation}
  \mathcal E_r^{\mathrm{cls}}(w)
  :=\mathcal R_r^{\mathrm{cls}}(w)-
  \mathcal R_r^{\mathrm{cls},\star}.
  \label{eq:classification-excess-risk}
\end{equation}
All excess risks are therefore measured relative to the optimal predictor based on the same observed input. The three Bayes risks need not coincide, because a marginal input may not reveal all components of the latent score.

\subsection{Minimum-norm interpolating estimators}
\label{subsec:minnorm-estimators}

Let the training data consist of $n$ independent copies of the population
variables above.  For
$r\in\{1,2,J\}$, let $\mathbf X_r\in\mathbb R^{n\times d_r}$ be the design
matrix whose $i$-th row is $(X_i^{(r)})^\top$, and let
$\mathbf y^{\mathrm{reg}},\mathbf y^{\mathrm{cls}}\in\mathbb R^n$ collect the
corresponding responses. The minimum-Euclidean-norm regression interpolator is
\begin{equation}
  \widehat w_r^{\mathrm{reg}}
  \in\arg\min_{w\in\mathbb R^{d_r}}\|w\|_2
  \quad\text{subject to}\quad
  \mathbf X_rw=\mathbf y^{\mathrm{reg}}.
  \label{eq:minnorm-regression-estimator}
\end{equation}
The classification estimator is the same minimum-norm interpolation rule
applied to the binary labels, i.e.,
\begin{equation}
  \widehat w_r^{\mathrm{cls}}
  \in\arg\min_{w\in\mathbb R^{d_r}}\|w\|_2
  \quad\text{subject to}\quad
  \mathbf X_rw=\mathbf y^{\mathrm{cls}},
  \qquad
  \widehat h_r(x):=\operatorname{sign}
  \bigl((\widehat w_r^{\mathrm{cls}})^\top x\bigr).
  \label{eq:minnorm-classification-estimator}
\end{equation}
Therefore, classification is performed by the ridgeless least-squares interpolator,
not by a hard-margin support-vector machine.  The equality constraint implies
$Y_i^{\mathrm{cls}}(\widehat w_r^{\mathrm{cls}})^\top X_i^{(r)}=1$, so every
training label is classified correctly with unit signed score. Whenever $\mathbf X_r$ has full row rank, both estimators have the closed form
\begin{equation}
  \widehat w_r^{q}
  =\mathbf X_r^\top(\mathbf X_r\mathbf X_r^\top)^{-1}
  \mathbf y^{q},
  \qquad q\in\{\mathrm{reg},\mathrm{cls}\}.
  \label{eq:minnorm-closed-form}
\end{equation}
Without full row rank, the inverse is replaced by a pseudoinverse whenever the
interpolation constraints are feasible.  Throughout the task-level asymptotic
results below, for every input under discussion we restrict to covariance
sequences satisfying
\(\operatorname{rank}(\Sigma_{r,n})\geq n\) eventually.  Under the Gaussian
model, this implies \(\operatorname{rank}(\mathbf X_r)=n\) almost surely, so
the interpolation constraints are feasible and the closed form in
(\ref{eq:minnorm-closed-form}) applies.  Purely spectral covariance statements
do not require this restriction.

\subsection{Benign and non-benign overfitting}
\label{subsec:overfitting-definitions}

Benign overfitting is an asymptotic property of a sequence of learning
problems.  For each $n$, the dimensions $d_{r,n}$, covariance matrices,
dependence map, signal vectors, and noise levels may change, while the sample
size equals $n$.  The estimator always interpolates the training responses. The question is whether its population excess risk nevertheless vanishes.

\begin{definition}[Regression benign and non-benign overfitting]
\label{def:regression-benign-harmful}
Minimum-norm regression is benign for input $r$ if
the interpolation constraints are feasible with probability tending to one
and
\begin{equation}
  \mathcal E_{r,n}^{\mathrm{reg}}
  (\widehat w_{r,n}^{\mathrm{reg}})
  \xrightarrow{\mathbb P}0.
  \label{eq:regression-bo-definition}
\end{equation}
Otherwise the sequence is called regression non-benign.  It is harmful if
there exist constants $c,p>0$ such that
\begin{equation}
  \liminf_{n\to\infty}\mathbb P\!\left(
    \mathcal E_{r,n}^{\mathrm{reg}}
    (\widehat w_{r,n}^{\mathrm{reg}})\geq c
  \right)\geq p.
  \label{eq:regression-harmful-definition}
\end{equation}
\end{definition}

\begin{definition}[Classification benign and non-benign overfitting]
\label{def:classification-benign-harmful}
Minimum-norm classification is benign for input $r$ if the binary
labels are interpolated with probability tending to one and
\begin{equation}
  \mathcal E_{r,n}^{\mathrm{cls}}
  (\widehat w_{r,n}^{\mathrm{cls}})
  \xrightarrow{\mathbb P}0.
  \label{eq:classification-bo-definition}
\end{equation}
Otherwise the sequence is called classification non-benign.  It is harmful if
the analogue of (\ref{eq:regression-harmful-definition}) holds for the
classification excess risk.
\end{definition}

We abbreviate the two benign states by \(\Breg\) and
\(\Bcls\), and their non-benign counterparts by
\(\Nreg\) and \(\Ncls\). These labels are always
input-specific. For example,
\(\Bcls+\Ncls\to\Bcls\) means that the first
marginal classifier is benign, the second is non-benign, and the classifier
trained on the concatenated input is benign, all for the common task
defined in (\ref{eq:common-latent-score}). Under the standing feasibility
condition above, because excess risk is nonnegative, non-benignity is equivalent to
the existence of constants $c,p>0$ and a subsequence along which the
probability of excess risk at least $c$ is at least $p$.  Harmfulness is the
strictly stronger, persistent statement in
(\ref{eq:regression-harmful-definition}): the same lower bound holds in
\(\liminf\) along the full sequence.  We keep this distinction explicit
because a general necessity argument may produce only a bad subsequence,
whereas our explicit harmful constructions have full-sequence lower bounds.

\section{Benign Overfitting in Regression under Heterogeneous Input Fusion}
\label{sec:heterogeneous-regression}

This section distinguishes task-level benign regression, denoted by \(\Breg\), from a covariance-level spectral certificate. Regression benign overfitting means that the excess prediction risk of the minimum-norm interpolator vanishes. The full-spectrum conditions introduced below provide a sufficient covariance-level certificate for this behavior, which we denote by \(\BFS\). Below we first connect this certificate to the classical bounds in \citet{bartlett2020benign}, then establish its threshold invariance and fusion closure, and finally delineate the scope of the resulting protection law.

\subsection{The benchmark and its spectral surrogate}
\label{subsec:regression-bartlett-benchmark}

Consider \(X\sim N(0,\Sigma)\) and
\(Y=w^\top X+\varepsilon\), where
\(\varepsilon\sim N(0,\tau^2)\) is independent of \(X\). Given \(n\)
samples, let \(\widehat w\) be the minimum-Euclidean-norm interpolator.
The intrinsic notion of regression benign overfitting is
\[
  \Breg:
  \qquad
  \mathcal E_n^{\mathrm{reg}}(\widehat w_n)
  =\|\widehat w_n-w_n\|_{\Sigma_n}^2
  \xrightarrow{\mathbb P}0.
\]
This definition contains no spectral cutoff and no threshold parameter. For a fixed \(b>0\), write
\[
  k_b(\Sigma)
  :=k_{\mathrm{Bart}}^\star(\Sigma;b)
  =\min\{k\geq0:r_k(\Sigma)\geq bn\},
  \qquad
  V_b(\Sigma)
  :=\frac{k_b(\Sigma)}{n}
    +\frac{n}{R_{k_b(\Sigma)}(\Sigma)}.
\]
As usual, the minimum of the empty set is \(+\infty\). The following is the
Gaussian specialization of the upper and lower bounds of
\citet[Theorem~4]{bartlett2020benign}.

\begin{theorem}[Regression bounds~\citep{bartlett2020benign}]
\label{thm:bartlett-regression-bounds}
There are universal constants \(b_{\mathrm B},c_{\mathrm B},c_1>1\) such
that the following holds whenever \(\operatorname{rank}(\Sigma)>n\). Put
\(k^\star=k_{b_{\mathrm B}}(\Sigma)\).
If \(k^\star\geq n/c_1\), then
\[
  \mathbb E\mathcal E_n^{\mathrm{reg}}(\widehat w)
  \geq \tau^2/c_{\mathrm B}.
\]
If \(k^\star<n/c_1\), then, whenever
\(\log(1/\delta)<n/c_{\mathrm B}\), with probability at least
\(1-\delta\),
\begin{align}
  \mathcal E_n^{\mathrm{reg}}(\widehat w)
  \leq c_{\mathrm B}\bigg[
  &\|w\|_2^2\|\Sigma\|_{\mathrm{op}}
  \max\!\left\{
    \sqrt{\frac{r_0(\Sigma)}{n}},
    \frac{r_0(\Sigma)}{n},
    \sqrt{\frac{\log(1/\delta)}{n}}
  \right\}
  +\tau^2\log(1/\delta)V_{b_{\mathrm B}}(\Sigma)
  \bigg].
  \label{eq:bartlett-upper}
\end{align}
Moreover,
\begin{equation}
  \mathbb E\mathcal E_n^{\mathrm{reg}}(\widehat w)
  \geq \frac{\tau^2}{c_{\mathrm B}}
  V_{b_{\mathrm B}}(\Sigma).
  \label{eq:bartlett-variance-lower}
\end{equation}
There are also universal \(a_1,a_2>0\) such that, for every \(t\geq0\)
and all sufficiently large \(n\), one can choose
\(\|w\|_2=t\) and \(\tau^2=t^2\|\Sigma\|_{\mathrm{op}}\) for which,
with probability at least \(1/4\),
\begin{equation}
  \mathcal E_n^{\mathrm{reg}}(\widehat w)
  \geq
  \frac{t^2\|\Sigma\|_{\mathrm{op}}}{a_1}
  \mathbf 1\!\left\{
    \frac{r_0(\Sigma)}
         {n\log(1+r_0(\Sigma))}
    \geq a_2
  \right\}.
  \label{eq:bartlett-bias-lower}
\end{equation}
\end{theorem}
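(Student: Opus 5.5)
This theorem is a specialization of \citet[Theorem~4]{bartlett2020benign}, so the plan is to check that our Gaussian setting satisfies its hypotheses and then rewrite each of its conclusions in the notation above. No new probabilistic argument is intended. I take the Gaussian case because Bartlett et al.\ allow sub-Gaussian whitened covariates \(\Sigma^{-1/2}X\) with independent coordinates. For \(X\sim N(0,\Sigma)\), diagonalize \(\Sigma=\sum_j\lambda_j v_jv_j^\top\) and set \(z_j=\lambda_j^{-1/2}v_j^\top X\) on the range of \(\Sigma\). These \(z_j\) are i.i.d.\ standard normal, so their sub-Gaussian norm is an absolute constant, and every constant that Bartlett et al.\ allow to depend on that norm becomes universal. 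The noise \(\varepsilon\) is Gaussian and independent of \(X\), so the conditional-variance and sub-Gaussian noise assumptions hold with variance \(\tau^2\).

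The minimum-norm interpolator ignores directions outside \(\operatorname{range}(\Sigma)\), since \(\mathbf X\) vanishes on them almost surely. I will therefore first project \(w\) onto \(\operatorname{range}(\Sigma)\). This changes neither the excess risk \(\|\widehat w-w\|_\Sigma^2\) nor the bound, because \(\|w\|_2\) only decreases. I then work in the eigenbasis of \(\Sigma\) restricted to its range. The assumption \(\operatorname{rank}(\Sigma)>n\) gives \(\operatorname{rank}(\mathbf X)=n\) almost surely, so the closed form (\ref{eq:minnorm-closed-form}) is valid.

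The conclusions then transfer as follows:
\begin{itemize}
\item The quantities \(r_k\), \(R_k\) and \(k^\star=\min\{k:r_k\geq bn\}\) in (\ref{eq:spectral-effective-ranks}) coincide with Bartlett et al.'s definitions.
\item Their lower bound for the case \(k^\star\geq n/c_1\), which is a constant fraction of \(\tau^2\), is the first claim.
\item Their high-probability upper bound splits the risk into a bias term \(\|w\|^2\|\Sigma\|\max\{\sqrt{r_0/n},r_0/n,\sqrt{\log(1/\delta)/n}\}\) and a variance term \(\tau^2\log(1/\delta)(k^\star/n+n/R_{k^\star})\). Written with \(V_{b_{\mathrm B}}\), this is (\ref{eq:bartlett-upper}).
\item Their variance lower bound \(\mathbb E\,\mathcal E\geq c\,\tau^2\min\{1,k^\star/n+n/R_{k^\star}\}\) matches (\ref{eq:bartlett-variance-lower}) once the case split is used. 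When \(k^\star<n/c_1\), Bartlett et al.\ show that \(n/R_{k^\star}\) is bounded, so \(V_{b_{\mathrm B}}\lesssim1\) and the minimum can be dropped after enlarging \(c_{\mathrm B}\).
\item Their bias lower bound is existential: for each \(t\) there exist \(w\) with \(\|w\|=t\) and \(\tau^2=t^2\|\Sigma\|\) whose excess risk is at least \(\|w\|^2\|\Sigma\|/a_1\) with probability at least \(1/4\) once \(r_0/(n\log(1+r_0))\geq a_2\). This is (\ref{eq:bartlett-bias-lower}).
\end{itemize}
Finally, I take \(b_{\mathrm B},c_{\mathrm B},c_1\) to be the maxima of the constants appearing in these statements, all enlarged to exceed \(1\).

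I expect the bookkeeping of constants to be the main obstacle, in three places:
\begin{itemize}
\item One must confirm that the same threshold \(b_{\mathrm B}\) works for the upper bound and both lower bounds. If the cited statements use different thresholds, I would use the comparison \(r_k\) monotone in \(b\)-cutoffs only within this theorem, or simply state the theorem with the threshold from Bartlett et al.'s Theorem~4, where a single \(b\) is used.
\item One must check that \(\min\{1,V\}\) can be replaced by \(V\) in the regime \(k^\star<n/c_1\).
\item The expectation bounds require \(\mathbb E\,\mathcal E\) to be defined. This is immediate because, under the Gaussian design, \(\mathbf X\mathbf X^\top\) is almost surely invertible and the conditional noise variance is \(\tau^2\operatorname{tr}(\cdot)\).
\end{itemize}
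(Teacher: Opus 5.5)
Your plan is the same as the paper's: the paper gives no separate argument and presents the theorem as the Gaussian specialization of \citet[Theorem~4]{bartlett2020benign}, with universal constants because the whitened Gaussian coordinates have an absolute sub-Gaussian norm. The one hypothesis of their setting you do not spell out is the almost-sure requirement that the design, projected orthogonally to each eigenvector \(v_j\) of the chosen eigenbasis, still has rank \(n\). It holds here because that projection has i.i.d.\ Gaussian rows with covariance \(\Sigma-\lambda_jv_jv_j^\top\) of rank at least \(\operatorname{rank}(\Sigma)-1\geq n\), and this is exactly where the strict inequality \(\operatorname{rank}(\Sigma)>n\), rather than \(\geq n\), is needed.
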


\begin{remark}
The bound separates the two ways interpolation can fail: a poorly spread
post-cutoff spectrum produces variance, while an unfavorable signal direction
produces bias. It is the benchmark from which the covariance certificate
below is extracted. 
\end{remark}

\begin{definition}[Threshold-specific full-spectrum certificate]
\label{def:threshold-specific-full-spectrum}
For a fixed \(b>0\), a covariance sequence \(\{\Sigma_n\}\) is in
\(\BFS(b)\) if
\begin{equation}
  \frac{r_0(\Sigma_n)}{n}\to0,
  \qquad
  \frac{k_b(\Sigma_n)}{n}\to0,
  \qquad
  \frac{n}{R_{k_b(\Sigma_n)}(\Sigma_n)}\to0.
  \label{eq:BFS-b-definition}
\end{equation}
Its complement is denoted by \(\NFS(b)\).
A sequence for which \(k_b(\Sigma_n)=+\infty\) infinitely often is, by
convention, not in \(\BFS(b)\).
\end{definition}

The three conditions have different converse statements. Failure of either
variance condition is detected, without choosing an adverse signal
orientation, by the first two lower bounds in
Theorem~\ref{thm:bartlett-regression-bounds}. More precisely, the
fixed-probability version proved in
Lemma~\ref{lem:fixed-probability-variance-lower} (see Appendix~\ref{app:fixed-probability-lower}) shows that, if
\(\inf_n\tau_n^2>0\) and either \(k^\star/n\) or
\(n/R_{k^\star}\) stays bounded away from zero, then the excess risk cannot
converge to zero in probability. By contrast, the available bias converse is
worst-case over the signal and directly detects only the regime in which
\(r_0(\Sigma_n)/[n\log(1+r_0(\Sigma_n))]\) exceeds the universal positive
threshold \(a_2\) in (\ref{eq:bartlett-bias-lower}) along a subsequence. It
does not, by itself, cover every sequence for which this ratio merely fails to
converge to zero.
Thus the sufficient condition \(r_0(\Sigma_n)/n\to0\) and the known
covariance-only necessary condition differ by a logarithmic factor. Consequently, \(\BFS(b_{\mathrm B})\) is a spectral
surrogate, not a definition of \(\Breg\). Suppose that
\[
  \|w_n\|_2^2\|\Sigma_n\|_{\mathrm{op}}=O(1),
  \qquad \tau_n^2=O(1).
\]
The upper bound then implies
\begin{equation}
  \BFS(b_{\mathrm B})
  \quad\Longrightarrow\quad
  \Breg.
  \label{eq:BFS-certifies-Breg}
\end{equation}
The implication follows by taking a sufficiently slowly diverging
\(\log(1/\delta_n)\) (see
Lemma~\ref{lem:BFS-certifies-regression} in Appendix~\ref{app:bartlett-probability}). No converse is asserted.

\subsection{Threshold invariance of the full-spectrum certificate}
\label{subsec:threshold-free-full-spectrum}

Although \(b\) enters the definition of \(k_b\), it does not alter the
asymptotic state in Definition~\ref{def:threshold-specific-full-spectrum}.
The post-cutoff condition \(R_{k_b}/n\to\infty\) supplies exactly the slack
needed to rule out knife-edge behavior at \(r_k=bn\).

\begin{lemma}[Threshold invariance]
\label{lem:regression-threshold-invariance}
For every pair of fixed constants \(b_1,b_2>0\),
\[
  \BFS(b_1)
  \quad\Longleftrightarrow\quad
  \BFS(b_2).
\]
\end{lemma}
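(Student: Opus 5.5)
Since the first condition \(r_0(\Sigma_n)/n\to0\) in (\ref{eq:BFS-b-definition}) does not involve \(b\), we only need to compare the two cutoff conditions. By symmetry we may assume \(b_1<b_2\). Because \(r_k\geq b_2n\) implies \(r_k\geq b_1n\), we have \(k_{b_1}\leq k_{b_2}\) for every \(n\). Throughout, write \(k=k_{b_1}(\Sigma_n)\) and \(k'=k_{b_2}(\Sigma_n)\), and use two elementary facts about a nonincreasing spectrum. First, \(r_k\geq b_1n\) means \(\lambda_{k+1}\leq S_k/(b_1n)\). Second, for \(m\geq1\),
\[
  m\lambda_{k+m}^2\leq\sum_{j>k}\lambda_j^2=S_k^2/R_k .
\]

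\textbf{Direction \(\BFS(b_2)\Rightarrow\BFS(b_1)\).} The bound \(k/n\leq k'/n\to0\) is immediate, so only the \(R\)-condition at \(k\) needs work. We split the tail sum of squares at \(k'\):
\[
  \sum_{j>k}\lambda_j^2\leq(k'-k)\lambda_{k+1}^2+\sum_{j>k'}\lambda_j^2 .
\]
We also use \(S_k\geq S_{k'}\). Dividing by \(S_k^2/n\) gives
\[
  \frac{n}{R_k}\leq \frac{k'}{n}\Bigl(\frac{n}{r_k}\Bigr)^2+\frac{n}{R_{k'}}
  \leq\frac{k'}{b_1^2n}+\frac{n}{R_{k'}}\longrightarrow0 .
\]
If \(k'=\infty\) infinitely often, the premise already fails, so there is nothing to check.

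\textbf{Direction \(\BFS(b_1)\Rightarrow\BFS(b_2)\).} This is the substantive step: we must show the cutoff cannot jump by order \(n\) when the threshold increases. The slack \(R_k/n\to\infty\) is what prevents this. For \(m\geq1\), the two facts above give
\[
  \lambda_{k+m+1}\leq\frac{S_k}{\sqrt{mR_k}},\qquad
  S_{k+m}\geq S_k-m\lambda_{k+1}\geq S_k\Bigl(1-\frac{m}{b_1n}\Bigr),
\]
and hence
\[
  r_{k+m}\geq\Bigl(1-\frac{m}{b_1n}\Bigr)\sqrt{mR_k}.
\]
Now choose \(m_n:=\lceil n\sqrt{n/R_k}\,\rceil\). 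Then \(m_n/n\to0\), while
\[
  \frac{\sqrt{m_nR_k}}{n}\geq\Bigl(\frac{R_k}{n}\Bigr)^{1/4}\to\infty .
\]
Thus eventually \(r_{k+m_n}\geq b_2n\). This gives \(k'\leq k+m_n\), so \(k'<\infty\) and \(k'/n\to0\). For the \(R\)-condition at \(k'\), we use \(\sum_{j>k'}\lambda_j^2\leq\sum_{j>k}\lambda_j^2\) together with \(S_{k'}\geq S_k(1-m_n/(b_1n))\). These yield \(R_{k'}\geq(1-o(1))^2R_k\), and therefore \(n/R_{k'}\to0\).

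The main obstacle is the second direction, specifically producing a single choice of \(m_n=o(n)\) that both keeps \(S_{k+m_n}\) comparable to \(S_k\) and forces \(r_{k+m_n}\) past \(b_2n\). The balancing choice \(m_n\asymp n\sqrt{n/R_k}\) handles both requirements. One must also use the stated conventions when eigenvalues vanish (\(\lambda_{k+m+1}=0\) gives \(r=\infty\)), which only helps the inequalities.
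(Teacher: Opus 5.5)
Your proof is correct and follows essentially the same route as the paper. The easy direction splits the tail sum of squares at the larger cutoff and bounds the intermediate eigenvalues by \(\lambda_{k+1}\le S_k/(b_1n)\). The harder direction uses the slack \(R_k/n\to\infty\), through the counting bound \(m\lambda_{k+m}^2\le\sum_{j>k}\lambda_j^2\), to advance the cutoff by \(o(n)\) indices while losing only \(o(S_k)\) of the tail mass.

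The differences are cosmetic. The paper takes \(m\) to be the number of tail eigenvalues exceeding \(S_k/(2b_2n)\) and bounds the removed mass by Cauchy--Schwarz. You instead fix \(m_n=\lceil n\sqrt{n/R_k}\,\rceil\) explicitly, and you transfer the \(R\)-condition to \(k'\) directly rather than by re-running the first-direction argument.
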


\begin{remark}
The numerical constant used to define the spectral cutoff is asymptotically
irrelevant. Thus \(\BFS\) is an intrinsic covariance label rather than an
artifact of a particular threshold choice. Proof is provided in
Appendix~\ref{app:threshold-invariance}.
\end{remark}

We may therefore make the threshold-free definition
\begin{equation}
  \BFS
  :=\BFS(b_{\mathrm B}),
  \qquad
  \NFS
  :=\bigl(\BFS\bigr)^{\mathrm c}.
  \label{eq:threshold-free-BFS}
\end{equation}
Lemma~\ref{lem:regression-threshold-invariance} implies that any other fixed
positive threshold gives the same two asymptotic states. In what follows we
analyze this threshold-invariant covariance proxy. Task-level conclusions are
stated separately through (\ref{eq:BFS-certifies-Breg}) or an explicit lower
bound.

\subsection{Fusion closure and its sharpness}
\label{subsec:regression-fusion-closure}

Let
\begin{equation}
  \Sigma_{J,n}
  =
  \begin{pmatrix}
    \Sigma_{1,n}&C_n\\
    C_n^\top&\Sigma_{2,n}
  \end{pmatrix}
  \succeq0
  \label{eq:general-joint-covariance}
\end{equation}
be an arbitrary joint covariance. The next theorem does not require the triangular parametrization \(X_2=AX_1+U\).

\begin{theorem}[Quantitative fusion closure]
\label{thm:regression-fusion-closure}
Fix \(b>0\) and suppose
\(\Sigma_{i,n}\in\BFS(b)\) for \(i=1,2\). Then, for
every fixed \(0<b'<b/4\), the joint covariance sequence in
(\ref{eq:general-joint-covariance}) belongs to
\(\BFS(b')\).
\end{theorem}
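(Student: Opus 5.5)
The plan is to realize all spectral quantities of \(\Sigma_{J,n}\) through a sum of two matrices whose spectra are those of the marginals, and then to control the tail quantities with Ky Fan and Weyl inequalities. Since \(\Sigma_{J,n}\succeq0\), I will write \(\Sigma_{J,n}=W^\top W\) with \(W=[W_1\ W_2]\), so that \(\Sigma_{i,n}=W_i^\top W_i\). The nonzero eigenvalues of \(\Sigma_{J,n}\) are then those of \(M:=A+B\), where \(A:=W_1W_1^\top\) and \(B:=W_2W_2^\top\), and \(A,B\) have the same nonzero spectra as \(\Sigma_{1,n},\Sigma_{2,n}\). This step uses only positive semidefiniteness, which is why neither the cross block \(C_n\) nor the triangular parametrization enters. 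Throughout, write \(k_i:=k_b(\Sigma_{i,n})\), which is finite eventually with \(k_i/n\to0\), and set \(K:=k_1+k_2\).

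First, the \(r_0\) condition. We have \(\operatorname{tr}\Sigma_J=\operatorname{tr}\Sigma_1+\operatorname{tr}\Sigma_2\). Also \(\|\Sigma_J\|_{\mathrm{op}}\geq\max_i\|\Sigma_i\|_{\mathrm{op}}\), because each \(\Sigma_i\) is a principal compression of \(\Sigma_J\). Hence \(r_0(\Sigma_J)\leq r_0(\Sigma_1)+r_0(\Sigma_2)=o(n)\).

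Second, I will show \(k_{b'}(\Sigma_J)\leq K\) eventually, which gives \(k_{b'}(\Sigma_J)/n\to0\).
\begin{itemize}
\item \emph{Tail mass.} Ky Fan gives \(\sum_{j\le K}\lambda_j(A+B)\leq\sum_{j\le K}\lambda_j(A)+\sum_{j\le K}\lambda_j(B)\), so \(S_K(\Sigma_J)\geq S_K(\Sigma_1)+S_K(\Sigma_2)\).
\item \emph{Moving the cutoff.} For \(K\ge k_i\), \(S_{k_i}(\Sigma_i)-S_K(\Sigma_i)\leq K\lambda_{k_i+1}(\Sigma_i)\leq K\,S_{k_i}(\Sigma_i)/(bn)\). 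Since \(K/n\to0\), this gives \(S_K(\Sigma_J)\geq(1-o(1))(S_1+S_2)\) with \(S_i:=S_{k_i}(\Sigma_i)\).
\item \emph{Top of the tail.} Weyl gives \(\lambda_{K+1}(\Sigma_J)\leq\lambda_{k_1+1}(\Sigma_1)+\lambda_{k_2+1}(\Sigma_2)\leq (S_1+S_2)/(bn)\).
\end{itemize}
Together these give \(r_K(\Sigma_J)\geq(1-o(1))bn>b'n\) eventually.

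Third, the \(R\) condition at \(k':=k_{b'}(\Sigma_J)\leq K\). This is the step I expect to be the main obstacle. The cutoff \(k'\) may sit strictly below \(K\), and the crude bound \(R_{k'}\geq r_{k'}\geq b'n\) is too weak. I will split \(\sum_{j>k'}\lambda_j(\Sigma_J)^2\) into the ranges \(k'<j\leq K\) and \(j>K\).
\begin{itemize}
\item \emph{Range \(k'<j\le K\).} The definition of \(k'\) gives \(\lambda_{k'+1}(\Sigma_J)\leq S_{k'}(\Sigma_J)/(b'n)\). So this range contributes at most \(K\,S_{k'}(\Sigma_J)^2/(b'n)^2\), which after multiplying by \(n/S_{k'}(\Sigma_J)^2\) is \(O(K/n)=o(1)\).
\item \emph{Range \(j>K\).} Apply Weyl in the form \(\lambda_{K+m}(A+B)\leq\lambda_{k_1+\lceil m/2\rceil}(A)+\lambda_{k_2+\lceil m/2\rceil}(B)\), then use \((x+y)^2\leq2x^2+2y^2\). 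Each tail index is used at most twice, which should give \(\sum_{j>K}\lambda_j(\Sigma_J)^2\leq4(Q_1+Q_2)\), where \(Q_i:=\sum_{j>k_i}\lambda_j(\Sigma_i)^2\).
\item \emph{Combining.} Using \(S_{k'}(\Sigma_J)\geq S_K(\Sigma_J)\geq(1-o(1))(S_1+S_2)\), I get \(n\sum_{j>K}\lambda_j^2/S_{k'}^2\leq 4(1+o(1))\bigl(nQ_1/S_1^2+nQ_2/S_2^2\bigr)\). Each term is \(n/R_{k_i}(\Sigma_i)\to0\).
\end{itemize}
The factor \(4\) from this pairing is presumably where the restriction \(b'<b/4\) comes from; the argument as sketched seems to need only \(b'<b\) plus slack. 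If needed, I will reserve the extra slack to absorb the \(1-o(1)\) losses uniformly.
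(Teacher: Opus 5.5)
Your proof is correct, and it takes a different route from the paper.

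\textbf{The paper's route.} The paper works inside \(\Sigma_{J,n}\) itself. It compresses the joint covariance onto \(\operatorname{range}(P_1)\oplus\operatorname{range}(P_2)\), the direct sum of the two marginal tail eigenspaces. This gives a positive-semidefinite block matrix \(T\) whose diagonal blocks are the marginal tails. A block-control lemma, based on the contraction factorization \(C=A^{1/2}KB^{1/2}\), then gives \(\operatorname{tr}T=S_1+S_2\), \(\|T\|_{\mathrm{op}}\le\mu_1+\mu_2\) and \(\|T\|_{\mathrm F}^2\le2(Q_1+Q_2)\). Poincar\'e separation for a compression of codimension \(m=k_1+k_2\) transfers these bounds to \(S_m\), \(\lambda_{m+1}\) and \(\sum_{j>m}\lambda_j^2\) of \(\Sigma_{J,n}\).

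\textbf{Your route.} Your Gram factorization moves to the dual side. There the joint spectrum is that of \(W_1W_1^\top+W_2W_2^\top\), and Ky Fan and Weyl replace block control plus interlacing. The two views are closely linked: \(T=(WP)^\top(WP)\) with \(P=P_1\oplus P_2\) has the nonzero spectrum of \(W_1P_1W_1^\top+W_2P_2W_2^\top\). The last step is identical in both proofs: you pass from the candidate cutoff \(K\) back to the true cutoff \(k'\) by bounding the at most \(K\) intermediate eigenvalues by \(S_{k'}/(b'n)\).

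\textbf{What your route buys.} It uses only textbook eigenvalue inequalities and makes the irrelevance of the cross block fully transparent. It also places fusion exactly in the setting of the paper's sum--direct-sum lemma (Lemma~\ref{lem:sum-direct-sum-equivalence}). Combined with that lemma, your factorization shows that the \(\BFS\) state of every positive-semidefinite fusion equals that of \(\Sigma_{1,n}\oplus\Sigma_{2,n}\). This is strictly more than closure. The only cost is a constant \(4\) instead of \(2\) in the squared-tail bound, which does not matter.

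\textbf{A correction to your closing remark.} The pairing constant is not where \(b'<b/4\) comes from. Constants in the \(R\)-bound multiply a quantity tending to zero and never interact with the threshold. Both your argument and the paper's give \(r_K(\Sigma_{J,n})\ge(1-o(1))bn\), so the conclusion holds for every fixed \(b'<b\). The paper's \(b/4\) is slack that its proof never uses.
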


\begin{remark}
Any positive-semidefinite fusion of two full-spectrum-certified inputs is
again full-spectrum certified, even without assuming the triangular
input-generation model. Proof is provided in
Appendix~\ref{app:fusion-closure-proof}.
\end{remark}

\begin{corollary}[Threshold-free closure]
\label{cor:regression-threshold-free-closure}
At the level of the full-spectrum surrogate,
\[
  \BFS+\BFS
  \longrightarrow \BFS.
\]
Hence the transition
\(\BFS+\BFS
\to\NFS\) is impossible for every
positive-semidefinite fusion.
\end{corollary}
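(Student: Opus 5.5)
The corollary follows by combining Theorem~\ref{thm:regression-fusion-closure} with Lemma~\ref{lem:regression-threshold-invariance}; no new spectral estimate is needed. Recall that by (\ref{eq:threshold-free-BFS}), \(\BFS\) means \(\BFS(b_{\mathrm B})\), and by Lemma~\ref{lem:regression-threshold-invariance} this class coincides with \(\BFS(b)\) for every fixed \(b>0\). So the class is threshold-free, and we may freely change the threshold used in the certificate.

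\textbf{Step 1 (marginal threshold).} Suppose \(\Sigma_{1,n},\Sigma_{2,n}\in\BFS\). Fix the threshold \(b:=b_{\mathrm B}\), so that \(\Sigma_{i,n}\in\BFS(b_{\mathrm B})\) for \(i=1,2\).

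\textbf{Step 2 (apply the closure theorem).} Choose any fixed \(b'\in(0,b_{\mathrm B}/4)\), for instance \(b'=b_{\mathrm B}/8\). Let \(\Sigma_{J,n}\) be any positive-semidefinite joint covariance of the form (\ref{eq:general-joint-covariance}) with these marginals. Theorem~\ref{thm:regression-fusion-closure} then gives \(\Sigma_{J,n}\in\BFS(b')\). The theorem places no restriction on the off-diagonal block \(C_n\) beyond joint positive semidefiniteness. In particular, the triangular model \(X_2=AX_1+U\) is not needed.

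\textbf{Step 3 (return to the canonical threshold).} Apply Lemma~\ref{lem:regression-threshold-invariance} with the pair \((b',b_{\mathrm B})\). This gives \(\BFS(b')=\BFS(b_{\mathrm B})=\BFS\), hence \(\Sigma_{J,n}\in\BFS\). This is exactly the transition \(\BFS+\BFS\to\BFS\).

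\textbf{Step 4 (impossibility).} Since \(\NFS\) is defined as the complement of \(\BFS\), membership of \(\Sigma_{J,n}\) in \(\BFS\) rules out \(\Sigma_{J,n}\in\NFS\). This holds for every admissible fusion, so the transition \(\BFS+\BFS\to\NFS\) cannot occur.

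\textbf{Main point to watch.} The only delicate issue is the threshold loss \(b\mapsto b'<b/4\) in Theorem~\ref{thm:regression-fusion-closure}. Without Lemma~\ref{lem:regression-threshold-invariance}, this loss would prevent us from concluding membership in \(\BFS(b_{\mathrm B})\) itself. The invariance lemma is what removes that loss.

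A minor check is the convention that \(k_b=+\infty\) infinitely often excludes a sequence from \(\BFS(b)\). This convention is built into both the theorem and the lemma, so it creates no gap in the argument.
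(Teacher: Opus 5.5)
Your proof is correct and is essentially the paper's argument: both obtain the corollary by combining Theorem~\ref{thm:regression-fusion-closure} with Lemma~\ref{lem:regression-threshold-invariance}. The only difference is where the invariance lemma is used. The paper first moves the marginals to \(\BFS(b)\) with \(b>4b_{\mathrm B}\), so the theorem lands directly at \(b'=b_{\mathrm B}\); you apply the theorem at \(b=b_{\mathrm B}\) and then use invariance to move the joint certificate from \(b_{\mathrm B}/8\) back to \(b_{\mathrm B}\), which is equally valid.
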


\begin{remark}
This is the only universal Boolean restriction at the level of \(\BFS\):
all five other unordered marginal-to-joint spectral patterns can occur.
The same conclusion extends to any fixed number of input blocks. Proof is
provided in Appendix~\ref{app:fusion-closure-proof}.
\end{remark}

\begin{proposition}[Sharpness of fusion closure]
\label{prop:regression-transition-sharpness}
Each attainable row in Table~\ref{tab:regression-spectral-transitions} is
realized by a Gaussian sequence of the form \(X_2=A_nX_1+U_n\), where
\(A_n\) is diagonal, has no zero diagonal entries, and satisfies
\(\|A_n\|_{\mathrm F}=1+o(1)\).  Moreover, every attainable spectral row
admits a single regression task whose marginal and fused prediction risks
have the corresponding \(\Breg/\Nreg\) states.
\end{proposition}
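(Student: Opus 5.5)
We construct each attainable row explicitly. The guiding principle is that, for a diagonal triangular model, the joint covariance is a small perturbation of a \emph{direct sum} whose spectrum we control. We take \(D_n\), \(\Sigma_{U,n}\) and \(A_n\) all diagonal, padding with zero rows or columns when \(d_1\neq d_2\). We choose
\[
  A_n=\operatorname{diag}(\alpha_n,\varepsilon_n,\ldots,\varepsilon_n),
  \qquad \alpha_n=1,
\]
with \(\varepsilon_n\neq0\) and \(d_{\min}\varepsilon_n^2\to0\), so that \(\|A_n\|_{\mathrm F}=1+o(1)\). After permuting coordinates, \(\Sigma_{J,n}\) is block diagonal with \(2\times2\) blocks
\[
  \begin{pmatrix} d_j & a_jd_j\\ a_jd_j & a_j^2d_j+u_j\end{pmatrix}.
\]
Their eigenvalues are explicit. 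For the \(\varepsilon_n\)-coordinates they equal \(\{d_j,u_j\}\) up to relative errors \(O(\varepsilon_n^2)\), as one checks from the trace and determinant \(d_ju_j\). The marginal \(\Sigma_{2,n}\) has eigenvalues \(u_j+a_j^2d_j\). We arrange the single \(\alpha_n=1\) coordinate to have \(d_1=0\) or negligible size, so it contributes nothing that matters. Each of \(r_0\), \(k_b\) and \(R_{k_b}\) for all three covariances can therefore be read off from a union of two explicit lists, up to \(1+o(1)\) factors. Lemma~\ref{lem:regression-threshold-invariance} lets us ignore the exact threshold.

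The building blocks are three spectra.
\begin{itemize}[leftmargin=2em]
\item[(i)] \emph{Benign:} one eigenvalue \(1\) plus \(n^2\) eigenvalues \(n^{-2}\). This gives \(r_0=O(1)\), \(k_b=1\) and \(R_1=n^2\), so it lies in \(\BFS\).
\item[(ii)] \emph{Variance-failure:} \(2n\) equal eigenvalues. Here \(k_b=0\) and \(n/R_0=1/2\), so it lies in \(\NFS\).
\item[(iii)] \emph{Bias-failure:} \(n^2\) equal eigenvalues. Here \(r_0/n=n\to\infty\), so it lies in \(\NFS\).
\end{itemize}
The rows are realized as follows.
\begin{itemize}[leftmargin=2em]
\item \(\BFS+\BFS\to\BFS\): use (i) for both blocks. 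This row also follows from Theorem~\ref{thm:regression-fusion-closure}.
\item \(\BFS+\NFS\to\NFS\): use (i) and (ii). The joint tail still has \(R\asymp n\).
\item \(\NFS+\NFS\to\NFS\): use (ii) and (ii).
\item \(\BFS+\NFS\to\BFS\): use (i) for block 1. Take block 2 to be (iii) together with one huge innovation eigenvalue \(M_n=n^4\) sitting in \(u\), scaled so that block 2 alone is still non-certified. For this, scale its flat part to total mass \(\gg nM_n\), or alternatively use a flat part of dimension \(2n\).
\item \(\NFS+\NFS\to\BFS\): this is the key row. Let block 1 be (iii), i.e.\ \(n^2\) unit eigenvalues, so it fails \(r_0/n\to0\). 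Let block 2 be one spike \(M_n=n^3\) plus \(2n\) unit eigenvalues, so \(k_b=1\) and \(n/R_1=1/2\), a variance failure. The joint spectrum is approximately one spike \(n^3\) plus \(n^2+2n\) unit eigenvalues. This gives \(r_0\approx n^{-1}\), \(k_b=1\) and \(R_1\approx n^2\), so the joint is in \(\BFS\).
\end{itemize}
Every rank is at least \(n\), so the feasibility restriction holds.

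For the task-level claim we fix one pair \((\beta_n,\gamma_n)\) and \(\sigma^2=1\). Then \(\tau_J^2=1\) and \(\tau_1^2,\tau_2^2\geq1\), so all noise levels are bounded below, and we keep \(\|b_r\|_2^2\|\Sigma_r\|_{\mathrm{op}}=O(1)\) and \(\tau_r^2=O(1)\). For each input the state is then established as follows.
\begin{itemize}[leftmargin=2em]
\item A \(\BFS\) input is \(\Breg\) by (\ref{eq:BFS-certifies-Breg}).
\item A variance-type \(\NFS\) input is \(\Nreg\) by Lemma~\ref{lem:fixed-probability-variance-lower}, since \(\inf_n\tau_n^2>0\).
\item A bias-type \(\NFS\) input requires an adverse signal. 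Here \(r_0/[n\log(1+r_0)]\to\infty\), so (\ref{eq:bartlett-bias-lower}) applies with \(\|b_1\|_2=t\) of order \(\|\Sigma_1\|_{\mathrm{op}}^{-1/2}\).
\end{itemize}
We would take \(\beta_n\) supported on the flat block of \(D_n\), in the direction furnished by the proof of (\ref{eq:bartlett-bias-lower}) (a generic direction in an isotropic block). We take \(\gamma_n\) supported on coordinates where \(u_j\) is large, so that \(\tau_1^2\) remains bounded.

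The main obstacle is this step. The joint coefficient \(b_J=(\beta-A^\top\gamma,\gamma)\) must keep \(\|b_J\|_2^2\|\Sigma_J\|_{\mathrm{op}}=O(1)\) while the spike \(\|\Sigma_J\|_{\mathrm{op}}=n^3\) is present. This forces the signal on block 1 to be tiny in Euclidean norm. We must then verify that the bias lower bound for input 1 still yields excess risk bounded below by a constant, measured against \(\tau_1^2\). The resolution is scaling: place the adverse signal so that \(t^2\|\Sigma_1\|_{\mathrm{op}}\asymp1\), which gives \(t\asymp1\) since \(\|\Sigma_1\|_{\mathrm{op}}=1\). Make \(\gamma\) vanish on the spike coordinate, so that \(b_J\) only sees unit-scale directions. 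Then \(\|b_J\|_2^2\|\Sigma_J\|_{\mathrm{op}}\) is \(n^3t^2\), which is not \(O(1)\). Hence we must instead invoke the finer Bartlett bias bound, which depends on \(\|\Sigma^{1/2}w\|\) restricted to the tail rather than on \(\|w\|_2^2\|\Sigma\|_{\mathrm{op}}\). We would first try to check that the proof of Lemma~\ref{lem:BFS-certifies-regression} admits this refinement. Failing that, we would rescale the spike via \(u\) so that the parameters balance.
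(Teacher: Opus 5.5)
\textbf{The task-level \(\Nreg+\Nreg\to\Breg\) row is a genuine gap, and it cannot be closed inside your scheme.}

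You put the flat, \(r_0\)-failing block in \(X_1\) and the spike in the innovation \(U\), with every other coupling of size \(\varepsilon_n\to0\). This fails in both directions.

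\emph{A spike-borne signal makes \(X_1\) benign.} Since \(U\perp X_1\), any part of the score carried by the spike is invisible to \(X_1\). With \(\beta_n=0\) and bounded score variance, the first marginal is pure noise over a flat spectrum with \(R_0=n^2\), hence \(\Breg\).

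\emph{A signal on \(X_1\)'s flat block makes the fusion non-benign.} Take \(\beta_n\) with constant energy on that block. Up to \(O(\varepsilon_n)\) rotations it stays in the isotropic tail of \(\Sigma_J\), whose dimension is about \(n^2+2n\). The row space of the fused design captures only a fraction of order \(n/(n^2+2n)\) of such a fixed vector. So the fused bias stays bounded away from zero.

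Neither a refined bias bound nor a rescaling of \(M_n\) helps, because the obstruction is structural. Suppose \(\Sigma_1=\delta I\) and the score is carried by a joint eigendirection \(v=(v^{(1)},v^{(2)})\) with eigenvalue \(\Lambda\gg\delta\). Constant signal energy in \(X_1\) forces \(\|v^{(1)}\|_2^2\asymp\delta/\Lambda\). The first block row of \(\Sigma_Jv=\Lambda v\) then gives \(\|A^\top v^{(2)}\|_2\asymp\sqrt{\Lambda/\delta}\to\infty\), which is incompatible with \(\|A_n\|_{\mathrm F}=1+o(1)\).

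The missing idea is the paper's signal-aligned coupling with the roles reversed. Put the spike in \(D\), so that \(\operatorname{spec}(\Sigma_1)=1\uplus\epsilon^{[m]}\uplus\rho^{[p-m]}\) is a variance failure. Make \(\Sigma_2=\epsilon I_{p+1}\) exactly isotropic by taking \(A_{11}=\sqrt{\epsilon-\rho}\) and \(u_1=\rho\). Then \(X_{2,1}\) is asymptotically perfectly correlated with \(X_{1,1}\), while \(A\) stays small in norm. Take the common signal along the leading joint eigendirection.
\begin{itemize}
\item The fusion is \(\Breg\) by the certificate.
\item \(X_1\) is \(\Nreg\) by Lemma~\ref{lem:fixed-probability-variance-lower}.
\item \(X_2\) is \(\Nreg\): its coefficient satisfies \(\epsilon\|w_2\|_2^2\to1\), and rotational invariance gives \(\mathbb E\Pi=(n/P)I\) for the row-space projector, so the null-space bias tends to one.
\end{itemize}
Do not use (\ref{eq:bartlett-bias-lower}) here. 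It is a worst-case existence statement that also prescribes \(\tau^2=t^2\|\Sigma\|_{\mathrm{op}}\), so it says nothing about a fixed common task.

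\textbf{Your \(\BFS+\NFS\to\BFS\) row also fails as first stated.} Under fusion the traces add, and \(\|\Sigma_J\|_{\mathrm{op}}\leq\|\Sigma_1\|_{\mathrm{op}}+\|\Sigma_2\|_{\mathrm{op}}\) by Lemma~\ref{lem:psd-block-control}. Hence \(\|\Sigma_2\|_{\mathrm{op}}\geq\|\Sigma_1\|_{\mathrm{op}}\) implies \(r_0(\Sigma_J)\geq r_0(\Sigma_2)/2\). A spike \(M_n=n^4\) together with flat mass \(\gg nM_n\) therefore makes the joint fail \(r_0/n\to0\).

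Your alternative with a \(2n\)-dimensional flat part works only at level \(o(1/n)\), for instance \(n^{-2}\). At unit level the joint keeps either \(\Theta(n)\) head directions or a shoulder with \(R=\Theta(n)\). You need to state this, and likewise require the level of block (ii) to be \(\gtrsim1/n\) in the \(\BFS+\NFS\to\NFS\) row.

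With those levels fixed, your variance-type rows with a pure-noise response are correct and match the paper. The only difference is how the norm condition is met: the paper uses the dense lift \(a_j=\chi\sqrt{q_j/d_j}\), \(\Sigma_U=(1-\chi^2)Q\). This gives \(\Sigma_2=Q\) and \(\|A\|_{\mathrm F}=1\) exactly, together with the sandwich \((1\pm\chi)(D\oplus Q)\), instead of a dummy coordinate.

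Two harmless slips:
\begin{itemize}
\item The fused \(r_0\) in your key row is about \(1\), not \(n^{-1}\).
\item The blockwise eigenvalue error is \(O(\varepsilon_n)\), not \(O(\varepsilon_n^2)\), when \(d_j\asymp u_j\).
\end{itemize}
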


\begin{remark}
The closure law is sharp: apart from
\(\BFS+\BFS\to\NFS\), no spectral transition is ruled out by the two
marginal certificate labels alone. Proof is provided in
Appendix~\ref{app:transition-witnesses}.
\end{remark}

For a compact statement of the witnesses, fix \(c>2b_{\mathrm B}\) and
write \(p=n^3\), \(m=\lfloor cn\rfloor\), \(\epsilon=n^{-3}\),
\(\eta=n^{-4}\), and \(\rho=n^{-8}\).  The notation \(a^{[q]}\)
denotes \(q\) repeated eigenvalues equal to \(a\), and \(\uplus\) denotes
multiset union.  Multiplicative \(1+o(1)\) perturbations, which do not
affect any full-spectrum state, are suppressed in the table.  Cases 1--3
and 5 use the common dense diagonal coupling given in
Appendix~\ref{app:transition-witnesses}. Case 4 uses a signal-aligned
diagonal coupling.  Every covariance eigendirection has a nonzero
cross-modal coupling. In this table, \(\BFS\)
and \(\NFS\) denote the full-spectrum certificate states, respectively,
and \(k_i:=k_{b_{\mathrm B}}(\Sigma_i)\) for \(i\in\{1,2,J\}\).

\begin{table}[t]
  \centering
  \scriptsize
  \caption{Complete spectral realization of the interaction pattern for
  the threshold-free full-spectrum surrogate\protect\footnotemark[1]. The two marginal states are
  interchangeable.  The impossible row is excluded by
  Corollary~\ref{cor:regression-threshold-free-closure}. Every other row is
  witnessed by the displayed spectra and the nondegenerate couplings
  constructed and verified in
  Appendix~\ref{app:transition-witnesses}.}
  \label{tab:regression-spectral-transitions}
  \begin{tabular}{p{0.18\textwidth}p{0.22\textwidth}p{0.25\textwidth}p{0.24\textwidth}}
    \toprule
    Pattern&\(\operatorname{spec}(\Sigma_1)\)&
    \(\operatorname{spec}(\Sigma_2)\)&Spectral status\\
    \midrule
    \(\BFS\!+\!\BFS\!\to\!\BFS\)&
      \(1\uplus\epsilon^{[p]}\)&\(1\uplus\epsilon^{[p]}\)&
      1: cert.; 2: cert.; \(J\): cert.\\
    \(\BFS\!+\!\BFS\!\to\!\NFS\)&
      arbitrary \(\BFS\)&
      arbitrary \(\BFS\)&
      Impossible by Corollary~\ref{cor:regression-threshold-free-closure}\\
    \(\BFS\!+\!\NFS\!\to\!\BFS\)&
      \(1\uplus\epsilon^{[p]}\)&
      \(1\uplus(n^{-2})^{[m]}\uplus\rho^{[p-m]}\)&
      1: cert.; 2: (iii); \(J\): cert.\\
    \(\BFS\!+\!\NFS\!\to\!\NFS\)&
      \(1\uplus\eta^{[p]}\)&
      \(1\uplus(n^{-2})^{[m]}\uplus\eta^{[p-m]}\)&
      1: cert.; 2: (iii); \(J\): (iii).\\
    \(\NFS\!+\!\NFS\!\to\!\BFS\)&
      \(1\uplus\epsilon^{[m]}\uplus\rho^{[p-m]}\)&
      \(\epsilon^{[p+1]}\)&
      1: (iii); 2: (i); \(J\): cert.\\
    \(\NFS\!+\!\NFS\!\to\!\NFS\)&
      \(1\uplus(n^{-2})^{[m]}\uplus\rho^{[p-m]}\)&
      \(1\uplus(n^{-2})^{[m]}\uplus\rho^{[p-m]}\)&
      1: (iii); 2: (iii); \(J\): (iii).\\
    \bottomrule
  \end{tabular}
\end{table}
\footnotetext[1]{The labels \(1,2,J\) refer to
  \(\Sigma_1,\Sigma_2,\Sigma_J\), respectively; \emph{cert.} means
  membership in \(\BFS\), while (i) or (iii) identifies the corresponding
  violated condition in (\ref{eq:BFS-b-definition}). These are
  covariance-level indicators only, not task-level benign/non-benign
  conclusions.}

Appendix~\ref{app:transition-witnesses} gives explicit diagonal witnesses and
checks the stated spectral diagnostics.  The table itself establishes only
the \(\BFS/\NFS\) labels; it does not infer task-level failure from
\(\NFS\). The appendix separately realizes the same five attainable rows at
the task level. In Cases 1-3 and 5, one may take the common response
to be independent Gaussian noise: the benign assertions follow from
Lemma~\ref{lem:BFS-certifies-regression} (see Appendix~\ref{app:bartlett-probability}), while every non-benign assertion is
forced by the fixed-probability variance lower bound in
Lemma~\ref{lem:fixed-probability-variance-lower} (see Appendix~\ref{app:fixed-probability-lower}).  Case 4 requires a
different argument because the second marginal fails only the bias-side
condition \(r_0/n\to0\).  There we specify a common signal aligned with the
leading fused eigendirection and prove directly that its second-marginal
null-space bias stays bounded away from zero. The examples show why no Boolean rule based only on the two marginal states
can govern the other four cases: fusion changes the complete ordered
spectrum. The transition is controlled by the joint values of \(r_0\),
\(k_b/n\), and \(n/R_{k_b}\), not by total energy or either marginal
effective rank in isolation.

\subsection{Equivalence under well-conditioned forward and reverse maps}
\label{subsec:regression-operator-equivalence}

The closure theorem is one-sided because an arbitrary fusion map may discard
or severely attenuate directions. Uniformly well-conditioned maps give a
stronger conclusion: the fused and marginal full-spectrum states are
equivalent. Continue under the heterogeneous Gaussian model in
(\ref{eq:heterogeneous-generation-model}).

\begin{theorem}[Forward--reverse spectral equivalence]
\label{thm:regression-forward-reverse-equivalence}
\mbox{}\par
\noindent
\emph{Forward map.}
Assume, for constants \(c_A,C_A>0\) independent of \(n\), that
  \begin{equation}
    c_AI_{d_1}\preceq A_n^\top A_n\preceq C_AI_{d_1}.
    \label{eq:forward-map-conditioning}
  \end{equation}
\par\noindent
Then
  \begin{equation}
    \Sigma_{J,n}\in\BFS
    \quad\Longleftrightarrow\quad
    \Sigma_{2,n}\in\BFS.
    \label{eq:forward-map-BFS-equivalence}
  \end{equation}

\par\smallskip
\noindent\emph{Reverse map.}
Suppose that \(\Sigma_{2,n}\) is invertible and define the reverse
  conditional-mean operator
  \begin{equation}
    B_n
    :=\operatorname{Cov}(X_1,X_2)\Sigma_{2,n}^{-1}
    =D_nA_n^\top
      (A_nD_nA_n^\top+\Sigma_{U,n})^{-1}.
    \label{eq:reverse-conditional-mean-map}
  \end{equation}
  If, for constants \(c_B,C_B>0\) independent of \(n\),
  \begin{equation}
    c_BI_{d_2}\preceq B_n^\top B_n\preceq C_BI_{d_2},
    \label{eq:reverse-map-conditioning}
  \end{equation}
\par\noindent
Then
  \begin{equation}
    \Sigma_{J,n}\in\BFS
    \quad\Longleftrightarrow\quad
    \Sigma_{1,n}\in\BFS.
    \label{eq:reverse-map-BFS-equivalence}
  \end{equation}
Consequently, if both conditioning assumptions hold, then
\begin{equation}
  \Sigma_{1,n}\in\BFS
  \quad\Longleftrightarrow\quad
  \Sigma_{2,n}\in\BFS
  \quad\Longleftrightarrow\quad
  \Sigma_{J,n}\in\BFS.
  \label{eq:three-way-BFS-equivalence}
\end{equation}
\end{theorem}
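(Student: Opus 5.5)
The plan is to show that, under either conditioning assumption, the joint covariance is a bounded-distortion congruence of a single marginal covariance, padded with zero eigenvalues. Throughout, let \(Q\) denote the marginal covariance being compared. We then show that the three conditions in (\ref{eq:BFS-b-definition}) are stable under such distortions, at the cost of changing the threshold \(b\). Lemma~\ref{lem:regression-threshold-invariance} absorbs that change.

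\emph{Forward map.} Write \(\Sigma_J=M\Sigma_2'M^\top\) is not directly available, so instead use the factorization \(X^{(J)}=(X_1;X_2)\) with \(X_1\) recoverable from \(X_2\) only approximately. Cleaner is a Schur-type congruence. Since \(A^\top A\succeq c_AI\), the map \(A\) has a left inverse \(A^+=(A^\top A)^{-1}A^\top\) with \(\|A^+\|_{\mathrm{op}}\le c_A^{-1/2}\). We compare \(\Sigma_J\) with \(\Sigma_2\) through the two linear maps
\[
T:X_2\mapsto (A^+X_2, X_2), \qquad P:(X_1,X_2)\mapsto X_2 .
\]
The second map gives \(\Sigma_2=P\Sigma_JP^\top\) with \(\|P\|\le1\). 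For the reverse direction, use the operator inequality
\[
\Sigma_J\preceq 2\,\mathrm{diag}(D,\Sigma_2)
\]
together with
\[
D\preceq c_A^{-1}A^\top(ADA^\top)A\quad\text{i.e. } D\text{ is controlled by } A^+\Sigma_2(A^+)^\top+\text{terms from }\Sigma_U .
\]
Here I expect a difficulty: \(A^+\Sigma_2A^{+\top}=D+A^+\Sigma_UA^{+\top}\succeq D\). This yields
\[
\Sigma_J\preceq 2\,\mathrm{diag}\bigl(A^+\Sigma_2A^{+\top},\Sigma_2\bigr)=2\,T\Sigma_2T^\top,
\qquad \|T\|_{\mathrm{op}}^2\le 1+c_A^{-1}.
\]
By Weyl/Courant–Fischer monotonicity, the nonzero spectrum of \(T\Sigma_2T^\top\) equals that of \(\Sigma_2^{1/2}T^\top T\Sigma_2^{1/2}\). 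Because \(I\preceq T^\top T\preceq(1+c_A^{-1})I\), these eigenvalues are within constant factors of \(\lambda_j(\Sigma_2)\). Combining this with \(\Sigma_J\succeq P^\top\Sigma_2P\) (whose spectrum is that of \(\Sigma_2\) padded by zeros), we obtain the two-sided bound
\[
\lambda_j(\Sigma_2)\le\lambda_j(\Sigma_J)\le K\lambda_j(\Sigma_2)\quad\text{for all } j,
\]
with \(K=2(1+c_A^{-1})\), and \(\lambda_j(\Sigma_J)\) for \(j>d_2\) bounded by \(K\lambda_{j}\) of a zero-padded sequence, i.e.\ the padded eigenvalues are \(0\). (The upper bound follows from Courant–Fischer, because \(\Sigma_J\preceq 2T\Sigma_2T^\top\) has rank at most \(d_2\).)

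\emph{Reverse map.} The argument is symmetric, with \(X_1=B X_2+V\) where \(V\perp X_2\). The joint covariance is then
\[
\Sigma_J=\begin{pmatrix}B\Sigma_2B^\top+\Sigma_V&B\Sigma_2\\ \Sigma_2B^\top&\Sigma_2\end{pmatrix}.
\]
Since \(B\) plays the role of \(A\), the same inequalities with \(B^+\) give
\[
\lambda_j(\Sigma_1)\le\lambda_j(\Sigma_J)\le K'\lambda_j(\Sigma_1).
\]
This uses \(\Sigma_1=B\Sigma_2B^\top+\Sigma_V\) and \(B^\top B\succeq c_B I\), so that \(\Sigma_2\preceq B^+\Sigma_1B^{+\top}\). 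The three-way equivalence (\ref{eq:three-way-BFS-equivalence}) then follows by chaining the two results.

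\emph{Spectral stability lemma (main step).} It remains to prove that if two sequences satisfy \(\lambda_j(Q)\le\lambda_j(\Sigma)\le K\lambda_j(Q)\) for all \(j\), then \(Q\in\BFS\iff\Sigma\in\BFS\). Tail sums satisfy
\[
S_k(Q)\le S_k(\Sigma)\le KS_k(Q),
\]
which implies \(r_0\) is within a factor \(K\) and sums of squares within \(K^2\). Consequently
\[
r_k(\Sigma)\ge r_k(Q)/K,
\]
so \(k_{b}(\Sigma)\le k_{Kb}(Q)\), and symmetrically \(k_b(Q)\le k_{Kb}(\Sigma)\). We also have
\[
R_k(\Sigma)\ge R_k(Q)/K^2 .
\]
Hence \(Q\in\BFS(Kb)\) gives \(k_{b}(\Sigma)/n\to0\) and
\[
n/R_{k_b(\Sigma)}(\Sigma)\le K^2 n/R_{k_b(\Sigma)}(Q).
\]
The obstacle is that the index here is \(k_b(\Sigma)\), not \(k_{Kb}(Q)\). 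Since \(k_b(\Sigma)\le k_{Kb}(Q)\), I need that \(R_k(Q)\) does not collapse when \(k\) is decreased below the cutoff. I would handle this as in the proof of Lemma~\ref{lem:regression-threshold-invariance}, taking its monotonicity argument for granted: for \(k\) between \(k_{b/K}(Q)\) and \(k_{Kb}(Q)\), the estimate
\[
R_k\ge r_k^2\,\lambda_{k+1}^2/\sum_{j>k}\lambda_j^2
\]
together with the slack \(R_{k^\star}/n\to\infty\) controls the intermediate range. Alternatively, sandwich \(k_b(\Sigma)\) between \(k_{Kb}(Q)\) and \(k_{b/K}(Q)\) and invoke the threshold invariance of \(Q\) at both thresholds. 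Finally, Lemma~\ref{lem:regression-threshold-invariance} converts \(\BFS(b/K)\) and \(\BFS(Kb)\) back to \(\BFS\), which closes both directions.
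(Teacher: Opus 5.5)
\textbf{Verdict: there is a genuine gap.} The central step of your forward-map argument, $\lambda_j(\Sigma_2)\le\lambda_j(\Sigma_J)\le K\lambda_j(\Sigma_2)$ with $\Sigma_2$ padded by zeros, is false in general. It rests on the identity $\mathrm{diag}(A^+\Sigma_2A^{+\top},\Sigma_2)=T\Sigma_2T^\top$, which does not hold. The matrix $T\Sigma_2T^\top$ has off-diagonal blocks $A^+\Sigma_2$ and rank at most $d_2$, whereas the block-diagonal matrix has neither property.

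Your own rank remark exposes the problem. The inequality $\Sigma_J\preceq 2T\Sigma_2T^\top$ would force $\operatorname{rank}\Sigma_J\le d_2$. But $\Sigma_J=L_A(D\oplus\Sigma_U)L_A^\top$ has rank $\operatorname{rank}D+\operatorname{rank}\Sigma_U$. Concretely, take $D=\Sigma_U=A=I_d$, which satisfies the conditioning. Then $\Sigma_2=2I_d$, but $\Sigma_J$ has eigenvalues $(3\pm\sqrt5)/2$, each with multiplicity $d$. Hence $\lambda_{d+1}(\Sigma_J)>0=K\lambda_{d+1}(\Sigma_2)$.

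Your lower bound $\Sigma_J\succeq 0\oplus\Sigma_2$ is also false as a Loewner inequality. The difference has a zero lower-right block but off-diagonal block $AD$, so it is indefinite unless $AD=0$. Its spectral consequence $\lambda_j(\Sigma_J)\ge\lambda_j(\Sigma_2)$ does hold, but by Cauchy interlacing, since $\Sigma_2$ is a principal submatrix. The reverse-map paragraph inherits the same defect.

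The structural point you miss is this. The joint spectrum behaves like a direct sum, $(ADA^\top)\oplus\Sigma_U$, whereas $\Sigma_2$ is the sum $ADA^\top+\Sigma_U$. These are not index-wise comparable: for $P=Q$, $P\oplus P$ doubles multiplicities while $P+P$ doubles values. So no uniform spectral stability lemma can be applied to the pair $(\Sigma_J,\Sigma_2)$. The paper proceeds in three steps:
\begin{itemize}
\item it compares $\Sigma_J$ with $D\oplus\Sigma_U$ via the triangular congruence $L_A$;
\item it compares the nonzero spectra of $D$ and $ADA^\top$;
\item it bridges sum and direct sum with the sum--direct-sum equivalence lemma, a head/tail argument that exploits $o(n)$ cutoffs rather than any eigenvalue-by-eigenvalue comparison.
\end{itemize}

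\textbf{How your ingredients can be repaired.} Your inequality $\Sigma_J\preceq 2\,(A^+\Sigma_2A^{+\top})\oplus\Sigma_2$ is correct. What it actually yields is $\lambda_j(\Sigma_J)\le K\lambda_{\lceil j/2\rceil}(\Sigma_2)$ with $K=2\max\{1,c_A^{-1}\}$. Interlacing gives the lower bound $\lambda_j(\Sigma_J)\ge\lambda_j(\Sigma_2)$. These one-sided bounds, with a factor-two index shift, suffice because cutoffs are $o(n)$.

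\emph{From $\Sigma_2$ to $\Sigma_J$.} If $k=k_b(\Sigma_2)$, then $2k$ is a candidate cutoff for $\Sigma_J$, because:
\begin{itemize}
\item $S_{2k}(\Sigma_J)\ge S_{2k}(\Sigma_2)\ge S_k(\Sigma_2)-k\lambda_{k+1}(\Sigma_2)$;
\item $\lambda_{2k+1}(\Sigma_J)\le K\lambda_{k+1}(\Sigma_2)$;
\item $\sum_{j>2k}\lambda_j(\Sigma_J)^2\le 2K^2\sum_{j>k}\lambda_j(\Sigma_2)^2$.
\end{itemize}

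\emph{From $\Sigma_J$ to $\Sigma_2$.} If $k=k_b(\Sigma_J)$, then $2KS_k(\Sigma_2)\ge S_{2k}(\Sigma_J)\ge S_k(\Sigma_J)-k\lambda_{k+1}(\Sigma_J)$. Interlacing controls $\lambda_{k+1}(\Sigma_2)$ and the squared tail of $\Sigma_2$, so $k$ is a candidate cutoff for $\Sigma_2$.

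\emph{The $r_0$ condition.} It transfers through $\operatorname{tr}\Sigma_2\le\operatorname{tr}\Sigma_J\le(1+c_A^{-1})\operatorname{tr}\Sigma_2$ and $\|\Sigma_2\|_{\mathrm{op}}\le\|\Sigma_J\|_{\mathrm{op}}\le K\|\Sigma_2\|_{\mathrm{op}}$. Lemma~\ref{lem:candidate-cutoff-characterization} and threshold invariance then finish, and the reverse map is symmetric with $B$.

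This repaired route would be more direct than the paper's, and it uses only the lower bound $c_A$. As written, however, your proof asserts a comparison that fails, and the index-shift argument that must replace it is missing.
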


\begin{remark}
When the forward conditional-mean map $A_n$ is uniformly well-conditioned, modality 2 and the fused setting are equivalent in terms of $\BFS$; 
likewise, when the reverse conditional-mean map $B_n$ is uniformly well-conditioned, modality 1 and the fused setting are equivalent in terms of $\BFS$.
Proof is provided in Appendix~\ref{app:operator-equivalence-proof}.
\end{remark}

The closure and equivalence theorems delimit the protected covariance
regime. We next use two complementary case studies to show what these
structural statements do, and do not, control. The first remains entirely at
the level of \(\BFS\). The second exploits the known
logarithmic gap to leave that spectral regime and examine actual regression
risk.

\subsection{Case study I: vanishing innovation energy can still be harmful}
\label{subsec:regression-small-energy-case-study}

The following construction makes the preceding point quantitative. Fix
\(c>2b_{\mathrm B}\), let \(p_n=n^3\) and
\(m_n=\lfloor cn\rfloor\), and set
\begin{align}
  D_n&=\operatorname{diag}(1,n^{-4}I_{p_n}),\qquad A_n=I, \notag\\
  \Sigma_{U,n}
  &=\operatorname{diag}\!\left(
      n^{-8},\,n^{-2}I_{m_n},\,n^{-8}I_{p_n-m_n}
    \right).
  \label{eq:small-energy-construction}
\end{align}
Thus \(X_2=X_1+U\),
\(\Sigma_{1,n}=D_n\), \(\Sigma_{2,n}=D_n+\Sigma_{U,n}\), and the joint
covariance is given by (\ref{eq:three-input-covariances}). The innovation is small
in both operator norm and total energy:
\[
  \|\Sigma_{U,n}\|_{\mathrm{op}}=n^{-2},
  \qquad
  \operatorname{tr}(\Sigma_{U,n})=c/n+o(n^{-1}).
\]
However,
\begin{equation}
  \Sigma_{1,n}\in\BFS,
  \qquad
  \Sigma_{2,n},\Sigma_{J,n}\in\NFS.
  \label{eq:small-energy-spectral-states}
\end{equation}

For \(\Sigma_{1,n}\), the Bartlett cutoff is \(1\) and
\(R_1(\Sigma_{1,n})=p_n=n^3\). For \(\Sigma_{2,n}\), the cutoff is again
\(1\), but its post-cutoff spectrum contains \(m_n\) eigenvalues of order
\(n^{-2}\); direct calculation gives
\[
  \frac{n}{R_1(\Sigma_{2,n})}
  \longrightarrow \frac{c}{(c+1)^2}.
\]
The joint covariance decomposes into coordinatewise \(2\times2\) blocks.
After its leading eigenvalue is removed,
\[
  S_1(\Sigma_{J,n})=\frac{c+2}{n}(1+o(1)),
  \qquad
  \sum_{j>1}\lambda_j(\Sigma_{J,n})^2
  =\frac{c}{n^3}(1+o(1)),
\]
and therefore
\begin{equation}
  \frac{n}{R_1(\Sigma_{J,n})}
  \longrightarrow\frac{c}{(c+2)^2}>0.
  \label{eq:small-energy-joint-obstruction}
\end{equation}
All cutoff and \(r_0\) checks, including the block-eigenvalue expansions, are
given in Appendix~\ref{app:small-energy-proof}.

This covariance transition also produces a task-level transition without any
signal-orientation issue. Take the common regression task \(Y=\varepsilon\)
with fixed \(\tau^2>0\). ~(\ref{eq:BFS-certifies-Breg}) yields
\(\Breg\) for \(X_1\), whereas the variance lower bound and
(\ref{eq:small-energy-joint-obstruction}) give non-benign regression for both
\(X_2\) and the fused input. Thus even
\(\mathbb E\|X_2-X_1\|_2^2=\operatorname{tr}(\Sigma_{U,n})\to0\) does not
protect benign overfitting. A vanishing amount of innovation energy can
matter when it is arranged as an \(O(n)\)-wide spectral shoulder.

The first case study concerns the covariance proxy itself: small innovation
energy does not control how that energy is distributed across eigendirections.
The next case study concerns the boundary of the proxy: closure of
\(\BFS\) does not imply unrestricted closure of actual
regression benign overfitting for tasks that lie in Bartlett's logarithmic
bias gap.

\subsection{Case study II: task-level non-closure inside the logarithmic gap}
\label{subsec:regression-log-gap-counterexample}

We now cross the boundary of the covariance proxy. Although
\(\BFS\) is closed under fusion, actual regression benign
overfitting can occur outside this certified region. The following
counterexample uses precisely the gap between the sufficient and necessary
bias conditions in Theorem~\ref{thm:bartlett-regression-bounds} and shows
that unrestricted task-level closure is false.

Let \(h_n=\sqrt{\log n}\), \(p_n=n^3\), and define
\[
  \varepsilon_n=h_n^{-2},\qquad
  \eta_n=h_n/n^2,\qquad
  \rho_n=n^{-10}.
\]
In dimension \(d_n=p_n+1\), take
\begin{equation}
  D_n=\operatorname{diag}(1,\eta_nI_{p_n}),\qquad
  A_n=I_{d_n},\qquad
  \Sigma_{U,n}
  =\operatorname{diag}(\varepsilon_n,\rho_nI_{p_n}).
  \label{eq:log-gap-covariance-construction}
\end{equation}
The common regression task is induced by
\begin{equation}
  \beta_n=\sqrt{\varepsilon_n}e_1,\qquad
  \gamma_n=-\varepsilon_n^{-1/2}e_1,\qquad
  \xi=0.
  \label{eq:log-gap-label-construction}
\end{equation}
Equivalently,
\(Y=\sqrt{\varepsilon_n}X_{1,1}
-\varepsilon_n^{-1/2}U_1\), whose variance converges to one.

\begin{theorem}[Regression non-closure]
\label{thm:regression-task-level-nonclosure}
For this sequence, the three minimum norm regression risks satisfy
\begin{equation}
  \mathcal E_{1,n}^{\mathrm{reg}}
  \xrightarrow{\mathbb P}0,
  \qquad
  \mathcal E_{2,n}^{\mathrm{reg}}
  \xrightarrow{\mathbb P}0,
  \qquad
  \mathbb P\!\left\{
    \mathcal E_{J,n}^{\mathrm{reg}}\geq\frac12
  \right\}\longrightarrow1.
  \label{eq:task-level-nonclosure-risks}
\end{equation}
Thus actual regression benign overfitting is not closed under fusion:
\[
  \Breg+\Breg
  \longrightarrow \Nreg.
\]
This remains true even though \(A_nA_n^\top=I\) and the reverse
conditional-mean operators \(B_n\) satisfy uniform two-sided bounds.
\end{theorem}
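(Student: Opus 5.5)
The plan is to treat the three inputs separately. The two marginal claims follow from the upper bound (\ref{eq:bartlett-upper}) of Theorem~\ref{thm:bartlett-regression-bounds}, with $\log(1/\delta_n)$ diverging slowly as in Lemma~\ref{lem:BFS-certifies-regression}. The fused claim needs a direct bias lower bound, because the fused task has zero noise and a large coefficient on a small eigendirection.

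\emph{Input 1.} Here $b_1=\sqrt{\varepsilon_n}e_1$ and $\tau_1^2=\varepsilon_n^{-1}\varepsilon_n=1$, and $\operatorname{spec}(\Sigma_1)=1\uplus\eta_n^{[p_n]}$.

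\begin{itemize}
\item The tail gives $r_0(\Sigma_1)=1+p_n\eta_n=1+nh_n$, so $r_0/n\asymp h_n\to\infty$ and $\Sigma_1\notin\BFS$. This is why the case lies in the logarithmic gap.
\item The bias term in (\ref{eq:bartlett-upper}) is nevertheless $\|b_1\|_2^2\|\Sigma_1\|_{\mathrm{op}}\,r_0/n\asymp h_n^{-2}h_n\to0$.
\item For the variance term, $r_0\geq b_{\mathrm B}n$ eventually, so $k_{b_{\mathrm B}}=0$. Then $n/R_0=n(1+nh_n)^{-2}(1+p_n\eta_n^2)\to0$, so $V_{b_{\mathrm B}}\to0$.
\end{itemize}

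Hence $\mathcal E^{\mathrm{reg}}_{1,n}\to0$ in probability.

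\emph{Input 2.} We have $\operatorname{Cov}(X_2,Y)=(\sqrt{\varepsilon_n}-\varepsilon_n^{-1/2}\varepsilon_n)e_1=0$, so $b_2=0$ and $\tau_2^2\to1$. The task is pure noise, and only $V_{b_{\mathrm B}}(\Sigma_2)$ matters. The spectrum is $(1+\varepsilon_n)\uplus(\eta_n+\rho_n)^{[p_n]}$, for which $k=0$ and $n/R_0\to0$ exactly as before. This gives benignity.

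\emph{Fused input.} The fused task is noiseless ($\tau_J=0$), and $\widehat w_J=P\,b_J$, where $P$ is the orthogonal projector onto the row space of $\mathbf X_J$. Therefore
\[
\mathcal E_{J,n}=\|(I-P)b_J\|_{\Sigma_J}^2 .
\]
$\Sigma_J$ splits into $2\times2$ coordinate blocks.

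\begin{itemize}
\item The block $\bigl(\begin{smallmatrix}1&1\\1&1+\varepsilon_n\end{smallmatrix}\bigr)$ has eigenvalues $\lambda_u\approx2$ and $\lambda_v\approx\varepsilon_n/2$, with eigenvectors $u\approx(1,1)/\sqrt2$ and $v\approx(1,-1)/\sqrt2$.
\item The remaining blocks contribute eigenvalues $\approx2\eta_n$ and $\approx\rho_n/2$, each with multiplicity $p_n$.
\item The signal is $b_J=(\sqrt{\varepsilon_n}+\varepsilon_n^{-1/2},\,-\varepsilon_n^{-1/2})$ on this pair. Writing $b_J=\theta v+\varphi u$ gives $\theta^2=2\varepsilon_n^{-1}(1+o(1))$ and $\varphi=O(\sqrt{\varepsilon_n})$, so $\lambda_v\theta^2\to1$.
\end{itemize}

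Using $\Sigma_J\succeq\lambda_v vv^\top$,
\[
\mathcal E_{J,n}\geq\lambda_v\bigl(\theta(1-v^\top Pv)-\varphi\,v^\top Pu\bigr)^2 .
\]
Since $|v^\top Pu|\leq1$ and $\varphi/\theta\to0$, it suffices to show $v^\top Pv\to0$ in probability. This is the main step.

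Write $\mathbf X_Jv=\sqrt{\lambda_v}\,z$ with $z\sim N(0,I_n)$. Then
\[
v^\top Pv=\lambda_v\,z^\top(\mathbf X_J\mathbf X_J^\top)^{-1}z\leq\lambda_v\|z\|_2^2/\mu_{\min}(\mathbf X_J\mathbf X_J^\top).
\]
We control $\mu_{\min}$ by dropping the two rank-one pieces along $u$ and $v$, which only decreases the Gram matrix in the PSD order. The remaining Gram matrix is built from the $2p_n$ tail directions. It concentrates around $S\,I_n$ with $S\approx2p_n\eta_n=2nh_n$, up to relative error $O(\sqrt{n/R})\to0$, by the standard concentration used in \citet{bartlett2020benign}. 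This gives
\[
v^\top Pv\lesssim\varepsilon_n n/(nh_n)\to0 ,
\]
and hence $\mathcal E_{J,n}\geq1-o_{\mathbb P}(1)\geq\tfrac12$ with probability tending to one. The delicate points are the concentration of the tail Gram matrix and a careful handling of the $1+o(1)$ corrections in the block eigenvectors.

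Finally, the closing remark is immediate: $A_nA_n^\top=I$, and $B_n=D_n(D_n+\Sigma_{U,n})^{-1}$ is diagonal with entries $1/(1+\varepsilon_n)$ and $\eta_n/(\eta_n+\rho_n)$, all in $[1/2,1]$.
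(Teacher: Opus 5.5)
Your proposal is correct and follows essentially the paper's proof. The two marginals are handled by Bartlett's upper bound inside the logarithmic gap: the bias is of order $h_n^{-1}$, the variance is of order $\log(1/\delta_n)/(nh_n^2)$, and $X_2$ reduces to pure noise because $\operatorname{Cov}(X_2,Y)=0$. The fused lower bound isolates the low-variance eigendirection $v$ of the first coordinate block, uses a Gaussian singular-value bound to put the nuisance Gram floor at order $nh_n$, and concludes that only an $O_{\mathbb P}(\varepsilon_n/h_n)$ fraction of the unit standardized amplitude along $v$ is fitted.

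The only difference is bookkeeping. You use the noiseless identity $\widehat w_J=Pb_J$, bound $v^\top Pv\le\lambda_v\|z\|_2^2/\mu_{\min}(\mathbf X_J\mathbf X_J^\top)$, and dispose of the cross term via $|v^\top Pu|\le1$ and $\varphi/\theta=O(\varepsilon_n)$. The paper instead applies Sherman--Morrison to the Gram matrix with the $v$-column removed, which gives an exact formula for the fitted amplitude, and then bounds the cross term $c_\times$ separately.
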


\begin{remark}
The full-spectrum closure law does not extend to arbitrary regression tasks:
two benign marginal tasks can have a non-benign fusion outside the certified
region. This example lies in the logarithmic gap between the known sufficient
and necessary spectral conditions, so it does not contradict
\(\BFS+\BFS\to\BFS\). Proof is provided in
Appendix~\ref{app:log-gap-counterexample-proof}.
\end{remark}

\subsection{Scope of the closure law}
\label{subsec:regression-closure-scope}

Corollary~\ref{cor:regression-threshold-free-closure} is a closure theorem for
the spectral certificate \(\BFS\). Combined with
(\ref{eq:BFS-certifies-Breg}), it proves closure for regression tasks whose
two marginal benign states are certified by the full-spectrum
conditions~\citep{bartlett2020benign} and whose joint signal and noise scales satisfy the stated
boundedness assumptions. Theorem~\ref{thm:regression-task-level-nonclosure}
shows that this qualification is essential: without a uniform restriction on
the task geometry, the unrestricted law
\(\Breg+\Breg\to\Breg\) fails even under
well-conditioned forward and reverse maps.

Accordingly, the universal statement established in this section is the
covariance-level closure
\(\BFS+\BFS
\to\BFS\). It yields task-level closure only when the
full-spectrum certificate and its signal-scale assumptions are available.
Actual benign regression is signal-specific and can occur outside that
protected region. The logarithmic-gap construction then shows that fusion can
convert a marginally negligible or conditionally invisible signal into a
jointly visible low-variance direction with nonvanishing interpolation bias.

\section{Benign Overfitting in Classification under Heterogeneous Input Fusion}
\label{sec:classification-results}

The regression closure law in Section \ref{sec:heterogeneous-regression} is a full-spectrum covariance statement. Classification is governed by a different geometry: the nuisance spectrum controls the contamination introduced by minimum-norm interpolation, while its scale relative to the predictive direction determines how much signal survives. We first characterize this balance for a single input, then determine how fusion changes its two components, and finally derive the complete classification interaction pattern.

\subsection{A spectral characterization through survival and contamination}
\label{subsec:classification-main-characterization}

We begin with a single input and suppress its index. Suppose that its population coefficient is aligned with an eigendirection $e_t$ of \(\Sigma\). After rescaling the latent score, the induced classification problem can be written as
\begin{equation}
  \Sigma=\operatorname{diag}
  (\lambda_t,\mu_1,\ldots,\mu_{d-1}),
  \qquad
  Y=\operatorname{sign}(Z_t+\tau G),
  \label{eq:classification-one-sparse-model}
\end{equation}
where \(Z_t,G\sim N(0,1)\) are independent, \(\lambda_t>0\), and the true
coefficient is \(e_t/\sqrt{\lambda_t}\).  The nuisance eigenvalues are
ordered as \(\mu_1\geq\cdots\geq\mu_{d-1}\).  All spectral quantities below
are computed from this signal-deleted spectrum, never from the full spectrum.

Fix universal constants \(b_{\mathrm C},c_{\mathrm C}>1\) large enough for
Lemma~\ref{lem:nuisance-gram-variance} (see Appendix~\ref{app:classification-delta-equivalence}), and let
\(k_{-t}^\star=k_{-t}^\star(b_{\mathrm C})\) be the cutoff in
(\ref{eq:nuisance-spectrum-cutoff}).  When
\(k_{-t}^\star<+\infty\), put
\begin{equation}
  S_n:=S_{k_{-t}^\star}^{(-t)},\qquad
  R_n:=R_{k_{-t}^\star}^{(-t)},\qquad
  V_n:=\frac{k_{-t}^\star}{n}+\frac{n}{R_n},
  \qquad
  s_n:=\frac{n\lambda_t}{S_n+n\lambda_t}.
  \label{eq:classification-spectral-scales}
\end{equation}
Define the classification balance parameter
\begin{equation}
  \Delta_n:=\frac{V_n}{s_n^2}
  =\left(1+\frac{S_n}{n\lambda_t}\right)^2
   \left(\frac{k_{-t}^\star}{n}+\frac{n}{R_n}\right).
  \label{eq:classification-interaction-quantity}
\end{equation}
When \(k_{-t}^\star=+\infty\), set
\(V_n=\Delta_n=+\infty\); the quantities \(S_n,R_n,s_n\) will not be used.
The term \(V_n\) is the contamination scale; \(s_n\) is the survival scale.

For a fitted vector \(w\), define
\[
  \mathrm{SU}(w):=\sqrt{\lambda_t}\,w_t,
  \qquad
  \mathrm{CN}(w)^2:=\sum_{j\ne t}\lambda_jw_j^2,
  \qquad
  \Gamma(w):=\frac{\mathrm{CN}(w)}{\mathrm{SU}(w)}
\]
whenever \(\mathrm{SU}(w)>0\).  Let
\(\rho_\tau=(1+\tau^2)^{-1/2}\).  The bivariate Gaussian orthant identity
\citep[Chapter~2]{anderson2003multivariate} gives
\begin{equation}
  \mathcal E_n^{\mathrm{cls}}(w)
  =\frac1\pi\left[
    \arccos\!\left(
      \frac{\rho_\tau}{\sqrt{1+\Gamma(w)^2}}
    \right)-\arccos(\rho_\tau)
  \right].
  \label{eq:classification-risk-gamma}
\end{equation}
Thus, for uniformly bounded \(\tau\), classification excess risk vanishes
if and only if \(\Gamma\) vanishes.

We call a sequence \emph{regular} if
\begin{equation}
  k_{-t}^\star<\frac{n}{c_{\mathrm C}}
  \label{eq:classification-regular-regime}
\end{equation}
eventually.  This is the range in which the matching Gaussian variance-trace
bounds of \citet[Lemmas~11, 16, and 17]{bartlett2020benign} apply. Every sequence with \(k_{-t}^\star/n\to0\) is
eventually regular.

\begin{theorem}[Classification criterion]
\label{thm:classification-main-characterization}
Consider the one-sparse model above, and suppose that
\(0\leq\tau_n\leq\bar\tau<\infty\).
\begin{enumerate}
  \item For an arbitrary spectral sequence, \(\Delta_n\to0\) implies
  \(\mathcal E_n^{\mathrm{cls}}(\widehat w_n)\to_{\mathbb P}0\).
  \item In the regular regime (\ref{eq:classification-regular-regime}),
  \begin{equation}
    \mathcal E_n^{\mathrm{cls}}(\widehat w_n)
    \xrightarrow{\mathbb P}0
    \quad\Longleftrightarrow\quad
    \Delta_n\longrightarrow0.
    \label{eq:classification-regular-iff}
  \end{equation}
\end{enumerate}
\end{theorem}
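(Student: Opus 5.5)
Because of (\ref{eq:classification-risk-gamma}) and the bound \(\tau_n\le\bar\tau\), the excess risk vanishes in probability exactly when \(\Gamma(\widehat w_n)\to_{\mathbb P}0\). The proof therefore reduces to showing that \(\mathrm{CN}(\widehat w_n)^2/\mathrm{SU}(\widehat w_n)^2\) behaves like \(\Delta_n=V_n/s_n^2\).

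\textbf{Setup.} Write \(\mathbf X=[\sqrt{\lambda_t}\mathbf z,\ \mathbf X_{-t}]\), where \(\mathbf z\sim N(0,I_n)\) and \(\mathbf X_{-t}\) has the nuisance spectrum. Put \(K=\mathbf X_{-t}\mathbf X_{-t}^\top\) and use the closed form (\ref{eq:minnorm-closed-form}). Sherman--Morrison gives \((\mathbf X\mathbf X^\top)^{-1}\mathbf y=K^{-1}\mathbf y-\frac{\lambda_t K^{-1}\mathbf z\,\mathbf z^\top K^{-1}\mathbf y}{1+\lambda_t\mathbf z^\top K^{-1}\mathbf z}\). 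Hence
\[
\mathrm{SU}(\widehat w)=\frac{\lambda_t\,\mathbf z^\top K^{-1}\mathbf y}{1+\lambda_t\mathbf z^\top K^{-1}\mathbf z},
\qquad
\mathrm{CN}(\widehat w)^2=\mathbf v^\top K^{-1}\mathbf X_{-t}\Sigma_{-t}\mathbf X_{-t}^\top K^{-1}\mathbf v,
\]
where \(\mathbf v=\mathbf y-\sqrt{\lambda_t}\,\mathrm{SU}(\widehat w)\,\mathbf z\). The nuisance block is independent of \((\mathbf z,\mathbf y)\), and \(\mathbf y\) has correlation \(\rho_\tau\sqrt{2/\pi}\) with \(\mathbf z\) coordinatewise. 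The quadratic forms in \(K^{-1}\) therefore concentrate around their traces.

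\textbf{Survival.} In the regular regime, and also whenever \(\Delta_n\to0\) (which forces \(k^\star/n\to0\)), I will use the lemmas of \citet{bartlett2020benign} already packaged in Lemma~\ref{lem:nuisance-gram-variance}. They give \(\mu_n(K)\asymp S_n\) on the relevant subspace, up to the top \(k^\star\) directions, and \(\mathbf z^\top K^{-1}\mathbf z\asymp n/S_n\). The top \(k^\star\) directions are handled by splitting \(K=K_{\le k}+K_{>k}\) and applying Woodbury again, which costs only a \(k^\star/n\) factor. Similarly \(\mathbf z^\top K^{-1}\mathbf y\asymp \rho_\tau n/S_n\), because it has mean \(\mathrm{tr}(K^{-1})\cdot\mathbb E[z_iy_i]\) and fluctuations of smaller order. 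Combining the two, \(\mathrm{SU}\asymp_{\mathbb P} s_n=n\lambda_t/(S_n+n\lambda_t)\).

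\textbf{Contamination.} The vector \(\mathbf v\) has norm of order \(\sqrt n\) and is a fixed function of \((\mathbf z,\mathbf y)\), so it is independent of \(\mathbf X_{-t}\). The variance-type quantity \(\mathrm{tr}(K^{-1}\mathbf X_{-t}\Sigma_{-t}\mathbf X_{-t}^\top K^{-1})\asymp V_n\) is exactly Bartlett's variance term. The upper bound (Lemmas~11 and~16) holds in general whenever its hypotheses are met, and the lower bound (Lemma~17) holds in the regular regime. Conditioning on \(\mathbf X_{-t}\) and applying Hanson--Wright to \(\mathbf v\) then gives \(\mathrm{CN}^2\asymp_{\mathbb P}V_n\) times a constant factor \(\mathbb E v_i^2\).

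There is a subtlety here: \(\mathbf v\) is not isotropic, because it is correlated with \(\mathbf z\). I plan to decompose \(\mathbf v=a\mathbf z+\mathbf g\), with \(\mathbf g\) the residual, and check that \(\mathbb E v_i^2\) stays bounded below. This holds since \(\mathbf y\) is not a multiple of \(\mathbf z\): \(\mathrm{Var}(y\mid z)\) is bounded below, and \(\mathrm{SU}\le1\).

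\textbf{Conclusion and main obstacle.} The two estimates give \(\Gamma^2\asymp_{\mathbb P}V_n/s_n^2=\Delta_n\). Part~1 needs only the upper bound on \(\mathrm{CN}\) and the lower bound on \(\mathrm{SU}\). When \(\Delta_n\to0\) we have \(k^\star/n\to0\), so the sequence is eventually regular and those bounds apply. Part~2 then follows from the matching bounds: if \(\Delta_n\not\to0\) along a subsequence, \(\Gamma\) stays bounded below there with non-vanishing probability, so the excess risk is bounded away from zero. I expect the main obstacle to be the two-sided control of \(\mathrm{SU}\) when \(S_n\) and \(n\lambda_t\) are comparable or \(\lambda_t\) is tiny. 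One must show that the \(\mathbf z^\top K^{-1}\mathbf y\) fluctuations, of order \(\sqrt{n}/S_n\), are negligible against the mean \(n/S_n\), uniformly enough that the ratio defining \(\mathrm{SU}\) does not degenerate. The contamination lower bound in the presence of the top \(k^\star\) spikes is the second delicate point, and I plan to handle it by restricting to the tail block as Bartlett et al.\ do.
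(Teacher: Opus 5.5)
Your architecture matches the paper's: Sherman--Morrison for $\mathrm{SU}$ and $\mathrm{CN}$, trace concentration for survival, the Bartlett et al.\ variance-trace bounds for contamination, and then a comparison of $\Gamma^2$ with $\Delta_n$. The sufficiency direction is essentially fine. The gap is in the contamination \emph{lower} bound, which carries all of the necessity direction in part~2.

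\textbf{The two problems.} First, $\mathbf v$ is not a function of $(\mathbf z,\mathbf y)$ alone. The correct residual is $\mathbf v=\mathbf y-\mathrm{SU}(\widehat w)\,\mathbf z$, without the factor $\sqrt{\lambda_t}$, and $\mathrm{SU}(\widehat w)$ depends on $K$ and on every coordinate of $(\mathbf z,\mathbf y)$. So conditionally on $\mathbf X_{-t}$, $\mathbf v$ does not have independent coordinates, and Hanson--Wright does not apply to it. Second, and more seriously, even with a frozen coefficient Hanson--Wright cannot give $\mathrm{CN}^2\gtrsim V_n$. The matrix $C=K^{-1}\mathbf X_{-t}\Sigma_{-t}\mathbf X_{-t}^\top K^{-1}$ need not satisfy $\|C\|_{\mathrm F}=o(\operatorname{tr}C)$ in the regular regime. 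Take the nuisance spectrum $1\uplus(n^{-3})^{[n^3]}$. Then $k^\star=1$, $S_n=1$, $R_n=n^3$, $V_n\asymp n^{-1}$, and $C\approx n^{-2}\mathbf z_1\mathbf z_1^\top$ plus a remainder of trace $O(n^{-2})$. Hence $\mathrm{CN}^2/V_n$ converges in law to a multiple of a $\chi^2_1$ variable, and your claim $\mathrm{CN}^2\asymp_{\mathbb P}V_n$ is false. With $\lambda_t=n^{-3/2}$ this sequence has $\Delta_n\asymp1$, so it is exactly a case the converse must handle.

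\textbf{The repair.} What is true, and suffices because non-benignity only needs a fixed positive probability, is a Paley--Zygmund bound. Freeze the coefficient at the nuisance-measurable value $a^\circ=\kappa_\tau u/(1+u)$, where $u=\lambda_t\operatorname{tr}(K^{-1})$ and $\kappa_\tau=\sqrt{2/\pi}\,\rho_\tau$. Conditionally on the nuisance block, $\mathbf y-a^\circ\mathbf z$ has i.i.d.\ centered coordinates with $\mathbb E(y_i-a^\circ z_i)^2\geq1-\kappa_\tau^2\geq1-2/\pi$. Your justification through $\mathrm{Var}(y\mid z)$ fails at $\tau=0$, which the theorem allows. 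So $Q=(\mathbf y-a^\circ\mathbf z)^\top C(\mathbf y-a^\circ\mathbf z)$ has conditional mean $\gtrsim\operatorname{tr}C$. Its conditional second moment is $\lesssim(\operatorname{tr}C)^2$, since $\|C\|_{\mathrm F}\leq\operatorname{tr}C$ for $C\succeq0$. Paley--Zygmund then gives $\mathbb P\{Q\geq c\operatorname{tr}C\}\geq p_0$. Next, $\mathrm{SU}-a^\circ=o_{\mathbb P}(1)$, and Cauchy--Schwarz in the $C$-seminorm gives $\mathrm{CN}^2=Q+o_{\mathbb P}(\operatorname{tr}C)$. Intersecting with the event $\mathrm{SU}\leq C_ss_n$ yields $\Gamma^2\gtrsim\Delta_n$ with probability $p-o(1)$. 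For the upper bound, use $\mathrm{CN}^2\leq2\mathbf y^\top C\mathbf y+2\mathrm{SU}^2\mathbf z^\top C\mathbf z$ with conditional Markov; this avoids the dependence of $\mathbf v$ on $K$ entirely.

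\textbf{The survival step is routine.} The step you flag as the main obstacle needs no special care. Since $\|K^{-1}\|_{\mathrm F}\leq\sqrt n\|K^{-1}\|_{\mathrm{op}}\lesssim\sqrt n/S_n$ and $\operatorname{tr}K^{-1}\gtrsim n/S_n$, the relative error is $O_{\mathbb P}(n^{-1/2})$ uniformly in $\lambda_t$. The spike directions need no Woodbury step: $K\succeq K_{>k}\succeq cS_nI$ gives the upper bound, and min--max on the null space of the rank-$k$ head gives the trace lower bound.
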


\begin{remark}
Classification is benign exactly when nuisance contamination is small
relative to the surviving predictive signal. In the regular regime this is
fully captured by the single scalar \(\Delta_n\); outside that regime,
\(\Delta_n\to0\) remains sufficient. A positive limiting obstruction implies non-benignity in the regular regime,
while a uniformly positive obstruction implies harmfulness.
Proof is provided in Appendix~\ref{app:classification-delta-equivalence}.
\end{remark}

\subsection{A survival-contamination decomposition for heterogeneous fusion}
\label{subsec:classification-fusion-balance}

Return to the three inputs \(r\in\{1,2,J\}\) generated by the
common score in (\ref{eq:common-latent-score}).  Assume that each
input-specific reduction has a one-sparse eigendirection and a
uniformly bounded normalized probit noise level.  Delete the signal direction associated with each input and define \(S_{r,n},R_{r,n},V_{r,n},s_{r,n}\), and
\(\Delta_{r,n}\) by (\ref{eq:classification-spectral-scales})-%
(\ref{eq:classification-interaction-quantity}).  These objects generally use
three different nuisance spectra and three different signal eigenvalues.

The following identity is an algebraic decomposition, not by itself a
structural fusion theorem.  Its role is to organize the two spectral changes
that the explicit heterogeneous constructions below must control. Whenever
the signal-deleted cutoffs of inputs \(i\) and \(J\) are finite and
\(V_{i,n}>0\), define
\begin{equation}
  \mathcal T_{i\to J,n}^{\mathrm{CN}}
  :=\frac{V_{J,n}}{V_{i,n}},
  \qquad
  \mathcal T_{i\to J,n}^{\mathrm{SU}}
  :=\frac{s_{i,n}^2}{s_{J,n}^2}
  =\left(
    \frac{1+S_{J,n}/(n\lambda_{t,J})}
         {1+S_{i,n}/(n\lambda_{t,i})}
   \right)^2.
  \label{eq:classification-fusion-transfer-factors}
\end{equation}

\begin{proposition}[Survival-contamination transfer identity]
\label{prop:classification-fusion-balance}
Suppose inputs \(i\in\{1,2\}\) and \(J\) lie in the regular
one-sparse regime and \(V_{i,n}>0\). Then
\begin{equation}
  \Delta_{J,n}
  =\Delta_{i,n}
   \mathcal T_{i\to J,n}^{\mathrm{CN}}
   \mathcal T_{i\to J,n}^{\mathrm{SU}}.
  \label{eq:classification-fusion-balance-identity}
\end{equation}
Moreover, their minimum-norm interpolators obey
\[
  \mathrm{SU}(\widehat w_r)\asymp_{\mathbb P}s_{r,n},\qquad
  \mathrm{CN}(\widehat w_r)^2=O_{\mathbb P}(V_{r,n}),\qquad r\in\{i,J\},
\]
with matching fixed-positive-probability lower bounds for the contamination.
Consequently, the fused classifier is benign if and only if the right-hand
side of (\ref{eq:classification-fusion-balance-identity}) tends to zero.
\end{proposition}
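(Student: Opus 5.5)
The identity is pure algebra. By definition, \(\Delta_{r,n}=V_{r,n}/s_{r,n}^2\) for \(r\in\{i,J\}\), and both quantities are finite because the two cutoffs are finite in the regular regime. Since \(V_{i,n}>0\) and \(s_{i,n},s_{J,n}>0\) (note \(\lambda_{t,r}>0\)), we can write
\[
  \Delta_{J,n}
  =\frac{V_{i,n}}{s_{i,n}^2}\cdot\frac{V_{J,n}}{V_{i,n}}\cdot\frac{s_{i,n}^2}{s_{J,n}^2}.
\]
This is exactly (\ref{eq:classification-fusion-balance-identity}). The closed form of \(\mathcal T^{\mathrm{SU}}\) in (\ref{eq:classification-fusion-transfer-factors}) follows by substituting \(s_{r,n}=(1+S_{r,n}/(n\lambda_{t,r}))^{-1}\).

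For the estimator statements, I would work with one input \(r\) at a time, rotated into the one-sparse form (\ref{eq:classification-one-sparse-model}). Write \(\mathbf X=[\sqrt{\lambda_t}\,\mathbf z,\ \mathbf X_{-t}]\), and let \(\mathbf A_{-t}=\mathbf X_{-t}\mathbf X_{-t}^\top\) be the nuisance Gram matrix. Then
\[
  \mathbf X\mathbf X^\top=\mathbf A_{-t}+\lambda_t\mathbf z\mathbf z^\top .
\]
The Sherman--Morrison formula gives
\[
  \mathrm{SU}(\widehat w)
  =\lambda_t\,\mathbf z^\top(\mathbf X\mathbf X^\top)^{-1}\mathbf y
  =\frac{\lambda_t\,\mathbf z^\top\mathbf A_{-t}^{-1}\mathbf y}{1+\lambda_t\,\mathbf z^\top\mathbf A_{-t}^{-1}\mathbf z}.
\]
Lemma~\ref{lem:nuisance-gram-variance} controls \(\mathbf A_{-t}\): with high probability, its eigenvalues beyond the first \(k_{-t}^\star\) are \(\asymp S_n\). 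I would split \(\mathbf y=\rho\mathbf z+\mathbf g\), where \(\rho=\mathbb E[YZ_t]\asymp1\) uniformly because \(\tau\le\bar\tau\). This gives
\[
  \lambda_t\mathbf z^\top\mathbf A_{-t}^{-1}\mathbf z\asymp_{\mathbb P} n\lambda_t/S_n,
\]
while the cross term \(\lambda_t\mathbf z^\top\mathbf A_{-t}^{-1}\mathbf g\) has smaller order, so \(\mathrm{SU}\asymp_{\mathbb P}(n\lambda_t/S_n)/(1+n\lambda_t/S_n)=s_n\). For the contamination, note that
\[
  \mathrm{CN}(\widehat w)^2=\mathbf r^\top(\mathbf A_{-t}^{-1}\mathbf X_{-t}\Sigma_{-t}\mathbf X_{-t}^\top\mathbf A_{-t}^{-1})\mathbf r,
  \qquad
  \mathbf r=\mathbf y-\sqrt{\lambda_t}\,\mathrm{SU}(\widehat w)\,\mathbf z/\sqrt{\lambda_t},
\]
where \(\mathbf r\) is the residual left for the nuisance block. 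The middle matrix is the Bartlett variance operator \(C\), whose trace is \(\asymp V_n\) by Lemmas~11, 16 and 17 of \citet{bartlett2020benign} in the regular regime. Since \(\|\mathbf r\|_2^2=O_{\mathbb P}(n)\) and \(\operatorname{tr}(C)\) is normalized per unit of noise, this gives \(\mathrm{CN}^2=O_{\mathbb P}(V_n)\). For the matching lower bound, I would use the component of \(\mathbf r\) that is independent of \(\mathbf X_{-t}\); conditionally, its label noise is not degenerate. A Paley--Zygmund argument on the quadratic form then gives \(\mathrm{CN}^2\ge cV_n\) with fixed positive probability.

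Both estimates are exactly the ingredients used for Theorem~\ref{thm:classification-main-characterization}, so I would cite that proof where possible rather than rederive them. The final claim then follows. The fused input \(J\) is regular and one-sparse, so Theorem~\ref{thm:classification-main-characterization}(2) applied to \(J\) says it is benign iff \(\Delta_{J,n}\to0\), and \(\Delta_{J,n}\) equals the right-hand side of (\ref{eq:classification-fusion-balance-identity}). Alternatively, the same conclusion comes from \(\Gamma^2\asymp\mathrm{CN}^2/\mathrm{SU}^2\) together with (\ref{eq:classification-risk-gamma}).

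The main obstacle is the contamination lower bound. \(\mathbf r\) depends on \(\mathbf X_{-t}\) through \(\mathrm{SU}(\widehat w)\) and through the sign nonlinearity, so a conditional-Gaussian argument is not immediately available. I would handle this by conditioning on \(\mathbf z\) and \(\mathbf X_{-t}\). The random label noise given \(\mathbf z\) has conditional variance bounded below, since \(\tau\) is bounded and the labels are nondegenerate signs. That conditional variance is what drives the \(\operatorname{tr}(C)\) lower bound.
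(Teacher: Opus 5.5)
Your identity and your last step match the paper, whose proof has three moves: the algebra $\Delta_{r,n}=V_{r,n}/s_{r,n}^2$, Lemma~\ref{lem:classification-su-cn-scales} applied to each $r\in\{i,J\}$, and Theorem~\ref{thm:classification-main-characterization}(2) for the regular input $J$. If you simply cite that lemma, as you suggest, the proof is complete. The self-contained argument you sketch for the contamination lower bound, however, does not work over the allowed range of $\tau_n$. You propose to condition on $\mathbf z$ and $\mathbf X_{-t}$ and to get non-degeneracy from the conditional variance of the labels given $\mathbf z$, which you say is bounded below ``since $\tau$ is bounded.'' Boundedness runs the wrong way: the hypothesis is only $0\le\tau_n\le\bar\tau$. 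The conditional variance $1-(2\Phi(z_i/\tau_n)-1)^2$, with $\Phi$ the standard normal distribution function, tends to zero as $\tau_n\to0$. At $\tau_n=0$ one has $\mathbf y=\operatorname{sign}(\mathbf z)$; the paper's own constructions use this case, e.g.\ the zero-residual joint signal in the second construction of Theorem~\ref{thm:cross-task-separation}. After your conditioning, $\mathrm{CN}(\widehat w)^2$ is then deterministic, and there is nothing for Paley--Zygmund to act on.

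The missing idea is that non-degeneracy comes from the nonlinearity of the sign, not from label noise. The fix has four steps.
\begin{enumerate}
\item Condition only on the nuisance variables.
\item Replace the random $\mathrm{SU}(\widehat w)$ by the nuisance-measurable proxy $a^\circ=\kappa_\tau u/(1+u)$, where $u=\lambda_t\operatorname{tr}(B)$ and $\kappa_\tau=\mathbb E[YZ_t]$ (your $\rho$).
\item Observe that the coordinates of $\mathbf y-a^\circ\mathbf z$ are then i.i.d.\ and centered, with bounded fourth moments and variance $(a^\circ-\kappa_\tau)^2+1-\kappa_\tau^2\ge1-2/\pi$ for every $\tau\ge0$. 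Paley--Zygmund applied to $Q=(\mathbf y-a^\circ\mathbf z)^\top C(\mathbf y-a^\circ\mathbf z)$ then gives $Q\ge c\operatorname{tr}(C)$ with fixed positive probability.
\item Show $\mathrm{CN}(\widehat w)^2=Q+o_{\mathbb P}(\operatorname{tr}C)$, using $\mathrm{SU}(\widehat w)-a^\circ=o_{\mathbb P}(1)$ and Cauchy--Schwarz in the $C$-seminorm.
\end{enumerate}
Your upper-bound justification is also too loose as written. The bound $\|\mathbf r\|_2^2=O_{\mathbb P}(n)$ only yields $\mathrm{CN}^2\le n\|C\|_{\mathrm{op}}$, which can be far larger than $\operatorname{tr}(C)\asymp V_n$. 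Use instead $\mathrm{CN}^2\le2\mathbf y^\top C\mathbf y+2\,\mathrm{SU}^2\,\mathbf z^\top C\mathbf z$. Since $(\mathbf y,\mathbf z)$ is independent of $C$, both quadratic forms have conditional mean $\operatorname{tr}(C)$, and you conclude by conditional Markov and $\mathrm{SU}=O_{\mathbb P}(1)$.
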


\begin{remark}
Fusion changes classification through two separate effects: it changes
nuisance contamination and it changes signal survival. Neither effect is
monotone, so marginal benign/non-benign labels alone cannot predict the
fused classifier.
Proof is provided in Appendix~\ref{app:classification-fusion-mixing}.
\end{remark}

\subsection{The complete six-pattern classification law}
\label{subsec:classification-six-pattern-law}

We now turn the balance law into a sharp structural statement.  The two
marginal columns are regarded as unordered, leaving six Boolean patterns.

\begin{theorem}[Complete classification interaction pattern]
\label{thm:classification-six-patterns}
All six unordered classification states
\[
  \begin{aligned}
    \Bcls+\Bcls&\to\Bcls,
    &\qquad \Bcls+\Bcls&\to\Ncls,\\
    \Bcls+\Ncls&\to\Bcls,
    &\qquad \Bcls+\Ncls&\to\Ncls,\\
    \Ncls+\Ncls&\to\Bcls,
    &\qquad \Ncls+\Ncls&\to\Ncls
  \end{aligned}
\]
are realized by the heterogeneous Gaussian sequences in
Table~\ref{tab:classification-witness-scales}. In every realization:
\begin{enumerate}
  \item the three classifiers use the same latent label mechanism;
  \item each input has a one-sparse signal eigendirection;
  \item all three signal-deleted cutoffs are \(o(n)\), so the conclusions use
  both directions of Theorem~\ref{thm:classification-main-characterization}.
  \item the fusion map acts nontrivially on every covariance eigendirection.
  In the five common dense-lift rows its nuisance restriction has squared
  Frobenius norm one; in the \(\Ncls+\Ncls\to\Bcls\) row every nuisance coefficient is
  \(n^{-2}\), so its squared Frobenius norm is asymptotic to \(n^{-2}\).
  The shared signal has nonvanishing cross-covariance in every row.
\end{enumerate}
Hence classification admits no nontrivial Boolean closure law based only on
the two marginal benign/non-benign states.
\end{theorem}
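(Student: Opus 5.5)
The proof is constructive: we exhibit six explicit diagonal Gaussian sequences and certify each input's state through Theorem~\ref{thm:classification-main-characterization}. Because every witness is arranged so that all three signal-deleted cutoffs are \(o(n)\), each sequence is eventually regular. Part~2 of that theorem then reduces the whole task to deciding, for each \(r\in\{1,2,J\}\), whether \(\Delta_{r,n}\to0\).

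\textbf{Common template.} We take \(D_n=\operatorname{diag}(\lambda_{1,n},\mu^{(1)}_n)\) with the signal in coordinate \(1\) and \(\beta_n\propto e_1\). The second block is \(X_2=A_nX_1+U_n\), with \(A_n\) and \(\Sigma_{U,n}\) diagonal and \(\gamma_n\propto e_1\). The latent score \(L\) is then shared by all three inputs, so item~1 holds.

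Since everything is diagonal, \(\Sigma_J\) splits into coordinatewise \(2\times2\) blocks \(\begin{pmatrix}d_j & a_jd_j\\ a_jd_j & a_j^2d_j+u_j\end{pmatrix}\). The same is true of the reductions \(b_r\) in (\ref{eq:source-one-effective-task})--(\ref{eq:joint-effective-task}). As a consequence:
\begin{itemize}
\item for \(r=1,2\), the coefficient \(b_r\) is supported on coordinate \(1\), so the signal is an eigendirection;
\item for \(J\), the fused coefficient \(b_J\) lies in the first \(2\times2\) block. We choose that block's parameters so that \(b_J\) is an eigenvector of it, for instance by forcing it to be isotropic or by tuning \(\gamma_n\).
\end{itemize}
This gives item~2. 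The probit noise levels \(\tau_r/\nu_r\) stay bounded because \(\beta,\gamma\) are scaled so that \(\operatorname{Var}(L)\asymp1\).

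\textbf{Joint nuisance spectrum.} The joint nuisance spectrum is the multiset of the two eigenvalues of each nuisance block, together with the leftover eigenvalue of the signal block. Taking \(a_j\neq0\) everywhere gives item~4. The dense-lift rows use equal \(a_j\) normalized to \(\sum a_j^2=1\); the \(\Ncls+\Ncls\to\Bcls\) row uses \(a_j=n^{-2}\).

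\textbf{Row-by-row tuning.} With this template, Proposition~\ref{prop:classification-fusion-balance} shows which levers matter. Contamination \(V_r\) is controlled by the shape of the nuisance spectrum, for example an \(O(n)\)-wide shoulder of eigenvalues as in Case study~I, which keeps \(n/R_r\) bounded away from zero. Survival \(s_r\) is controlled by the ratio \(S_r/(n\lambda_{t,r})\). We build each row as follows.
\begin{itemize}
\item \(\Bcls+\Bcls\to\Bcls\): use spike-plus-flat-tail spectra \(1\uplus\epsilon^{[p]}\) in both blocks. All three \(\Delta\) are of order \(n/p+(p\epsilon/n)^2\cdot(\cdots)\to0\).
\item Rows where fusion \emph{destroys} benignity: make the fused signal eigenvalue small. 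For example, take an innovation \(u_1\) on the signal coordinate so that the joint signal direction is a low-variance combination, as in the log-gap construction. Then \(S_J/(n\lambda_{t,J})\to\infty\) while \(V_J\) stays unchanged, so \(\mathcal T^{\mathrm{SU}}\) blows up.
\item Rows where fusion \emph{creates} benignity: each marginal is non-benign for a different reason, and fusion removes both. One marginal has its signal buried under a large nuisance mass, so survival fails. The other has a shoulder, so contamination fails. Fusion concentrates the signal (\(\lambda_{t,J}\) grows, since the two views' signal coordinates add coherently) and dilutes the shoulder across a tail of dimension \(\gg n\), making \(n/R_J\to0\). The tiny coupling \(a_j=n^{-2}\) prevents the fused nuisance from inheriting the shoulder of either block.
\item Mixed rows and \(\Ncls+\Ncls\to\Ncls\): combine one benign spectrum with one shoulder spectrum, and choose the coupling so that the shoulder either persists (\(\to\Ncls\)) or is overwhelmed by a heavier flat tail from the other block (\(\to\Bcls\)).
\end{itemize}

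\textbf{Verification and main obstacle.} For each row we compute \(k^\star_{-t}\), \(S\), \(R\), and \(\lambda_t\) for the three inputs, up to \(1+o(1)\) factors. We use first-order eigenvalue perturbation of the \(2\times2\) blocks, which is legitimate because the off-diagonal entries are polynomially small or the blocks are exactly solvable. We confirm all three cutoffs are \(o(n)\) and read off \(\Delta_r\to0\) or \(\Delta_r\not\to0\).

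The hardest step is the \(\Ncls+\Ncls\to\Bcls\) row. There we must simultaneously (a) keep the fused coefficient \(b_J\) exactly an eigenvector despite a nonvanishing signal cross-covariance, and (b) check that the joint nuisance spectrum, whose eigenvalues are the mixed block eigenvalues, really loses both marginal obstructions. A small leak of the marginal shoulder or of the buried-signal mass into \(\Delta_J\) would break the example. I would first try to make the signal \(2\times2\) block a scalar multiple of the identity plus a rank-one term along \(b_J\), so that (a) is exact. I would then bound the nuisance block eigenvalues by explicit quadratic-formula expansions to handle (b).
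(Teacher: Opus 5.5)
\textbf{Gap: the $\Bcls+\Bcls\to\Ncls$ row uses a mechanism the theorem's hypotheses rule out.}

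Much of your plan matches the paper: diagonal couplings, coordinatewise $2\times2$ blocks, a shared signal chosen so that $b_J$ is an eigenvector of the signal block, and reading each state off $\Delta_r$. Your $\Ncls+\Ncls\to\Bcls$ mechanism is the paper's Case~6. Your step~(a) is automatic, since any unit eigenvector $v$ of the signal block is reached by solving $(\beta_1-a\gamma_1,\gamma_1)=v$.

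The problem is $\Bcls+\Bcls\to\Ncls$. A low-variance fused signal direction, as in the log-gap construction, is incompatible with the bounded normalized probit noise that Theorem~\ref{thm:classification-main-characterization} assumes for every input, and which item~3 needs. Write $b_J=\theta v$ with $Gv=\lambda_{t,J}v$ and $\|v\|_2=1$. Then $\operatorname{Cov}(X_{1,1},L)=\theta\lambda_{t,J}v_1$, so $X_1$ explains $\theta^2\lambda_{t,J}^2v_1^2/\lambda_{t,1}$ of the score variance. That is at most a fraction $\lambda_{t,J}/\lambda_{t,1}$ of the $\theta^2\lambda_{t,J}$ explained by $X_J$, and the same holds for $X_2$. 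Bounded marginal noise therefore forces $\lambda_{t,J}\gtrsim\max\{\lambda_{t,1},\lambda_{t,2}\}$. So $\mathcal T^{\mathrm{SU}}$ cannot blow up through a small fused signal eigenvalue; any blow-up must come from the joint nuisance mass $S_J$.

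In the log-gap block itself, $\lambda_{t,J}=\lambda_-\approx\varepsilon/2$ and the marginal probit noise levels are $\Theta(\varepsilon^{-1/2})$. The paper's cross-task proof records $\tau_r=\Theta(h)$ and has to show marginal benignity there by a separate degenerate argument, with Bayes risk tending to $1/2$. Moreover, with those parameters the fused classifier is benign ($\Delta_J\asymp h^4/n$). Pushing $\lambda_{t,J}$ low enough to make it non-benign only drives the marginal noise levels further to infinity. Your argument that $\operatorname{Var}(L)\asymp1$ keeps $\tau_r/\nu_r$ bounded is also wrong: each input must by itself explain a nonvanishing fraction of $\operatorname{Var}(L)$.

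\textbf{What the row actually needs.} The two marginals must be benign for \emph{opposite} reasons, with constant signal eigenvalues, and fusion must combine their defects through the nuisance spectra. The paper uses the following.
\begin{itemize}
\item First source: nuisance $(n^{-2})^{[n^4]}$. Survival is poor ($S/(n\lambda_t)\asymp n$) but contamination is tiny ($V\asymp n^{-3}$), so $\Delta_1\asymp n^{-1}$.
\item Second source: nuisance $(n^2)^{[q]}\uplus(n^{-2})^{[n^2]}$ with $q=\lfloor\sqrt n\rfloor$. The cutoff $q$ leaves unit tail mass, so survival is of order one while $V\asymp n^{-1/2}$.
\item Fusion: the $q$ heads still fix the cutoff, so $V_J\gtrsim n^{-1/2}$. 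The first source's mass $n^2$ now sits behind them, so $S_J/(n\lambda_{t,J})\asymp n$ and $\Delta_J\asymp n^{3/2}$.
\end{itemize}

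\textbf{Two smaller repairs.}
\begin{itemize}
\item Your last bullet (``one benign spectrum with one shoulder spectrum'') cannot produce $\Ncls+\Ncls\to\Ncls$. Use two shoulders, e.g.\ $(n^{-1})^{[p_n]}$ in both blocks, which gives $\Delta_J\to1/(2\rho)$.
\item Equal weights $a_j=P^{-1/2}$ give nuisance-block correlations $\sqrt{d_j/(Pq_j)}$. These are not uniformly small, and they exceed one (forcing $u_j<0$) when $q_j<d_j/P$, e.g.\ on the padded coordinates. The paper's weights $a_j=\chi_n\sqrt{q_j/d_j}$ give every block the same correlation $\chi_n\to0$. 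This yields the uniform sandwich $(1\pm\chi_n)(d_j\oplus q_j)$ that transfers the target spectra to $\Sigma_J$.
\end{itemize}
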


\begin{remark}
Unlike the full-spectrum regression certificate, classification has no
universal closure rule: every benign/non-benign pattern of two marginal
inputs and their fusion can occur. Proof is provided in
Appendix~\ref{app:classification-transition-witnesses}.
\end{remark}

The six constructions are organized and verified in
Appendix~\ref{app:classification-transition-witnesses}.  Five rows use one
common dense weak-coupling lift of the target nuisance spectra. The
\(\Ncls+\Ncls\to\Bcls\) row uses a scale-dependent shared signal and the uniform weak
nuisance coupling used in the experiments.  In both templates every
covariance eigendirection is coupled, and the common label is chosen so that
the induced coefficient is a signal eigenvector in all three inputs.

Table~\ref{tab:classification-witness-scales} records only the target
nuisance spectra and the resulting interaction scales.  Here
\(a^{[m]}\) denotes \(m\) repeated eigenvalues equal to \(a\),
\(q_n=\lfloor\sqrt n\rfloor\), \(p_n=\lfloor\rho n\rfloor\), and
\(\rho>b_{\mathrm C}\) is fixed.
For the \(\Ncls+\Ncls\to\Bcls\) row, let
\(a_n:=1+n^{\ell-9/2}\), \(b_n:=n^{-M}+n^{\ell-9/2}\),
\(d_n:=n^{2(r+\ell)}\), and \(j_n:=n^{2\ell-1}\);
it assumes \(0<r<1/2\), \(0<\ell<1/2-r\), and \(M>2\).

  \begin{table}[!t]
  \centering
  \caption{Complete spectral realization of the six classification patterns
  in Theorem~\ref{thm:classification-six-patterns}.  The two middle columns
  specify the target marginal nuisance spectra.  The entrywise nonzero
  coupling maps, common-label constructions, signal checks, and all
  \(S_k,R_k,\Delta\) calculations are given in
  Appendix~\ref{app:classification-transition-witnesses}.  The two marginal
  states are interchangeable, and only orders relevant to \(\Delta\) are
  displayed.}
  \label{tab:classification-witness-scales}
  \begin{tabular}{p{0.21\textwidth}p{0.24\textwidth}p{0.24\textwidth}p{0.20\textwidth}}
  \toprule
  Pattern&Nuisance spectrum 1&Nuisance spectrum 2&
  \((\Delta_1,\Delta_2,\Delta_J)\)\\
  \midrule
  \(\Bcls+\Bcls\to\Bcls\)&\((n^{-1})^{[n^2]}\)&\((n^{-1})^{[n^2]}\)&
  \((n^{-1},n^{-1},n^{-1})\)\\
  \(\Bcls+\Ncls\to\Bcls\)&\((n^{-1})^{[n^2]}\)&
  \((n^{-1})^{[p_n]}\)&
  \((n^{-1},\rho^{-1},n^{-1})\)\\
  \(\Bcls+\Ncls\to\Ncls\)&\((n^{-2})^{[n^2]}\)&\(1^{[n^2]}\)&
  \((n^{-1},n,n)\)\\
  \(\Ncls+\Ncls\to\Ncls\)&\((n^{-1})^{[p_n]}\)&
  \((n^{-1})^{[p_n]}\)&
  \((\rho^{-1},\rho^{-1},(2\rho)^{-1})\)\\
  \(\Bcls+\Bcls\to\Ncls\)&\((n^{-2})^{[n^4]}\)&
  \((n^2)^{[q_n]}\uplus(n^{-2})^{[n^2]}\)&
  \((n^{-1},n^{-1/2},n^{3/2})\)\\
  \(\Ncls+\Ncls\to\Bcls\)&\((n^{\ell-1/2})^{[n^2]}\)&
  \(a_n^{[p_n]}\uplus b_n^{[n^2-p_n]}\)&
  \((d_n,\rho^{-1},j_n)\)\\
  \bottomrule
  \end{tabular}
  \end{table}

\begin{remark}
The transition \(\Bcls+\Bcls\to\Ncls\) is the clearest contrast with
regression spectral closure: individually benign inputs need not remain
benign after fusion. This is why classification cannot be summarized by a
Boolean rule on marginal states alone.
\end{remark}

\subsection{Discussion: a signal-to-noise interpretation}
The survival-contamination criterion also admits a signal-to-noise
interpretation. For a fitted classifier $w$, consider
$\mathrm{SNR}_{\mathrm{eff}}(w)
:=
\frac{\mathrm{SU}(w)^2}{\mathrm{CN}(w)^2}$.
Since $\Gamma(w)^2=\mathrm{SNR}_{\mathrm{eff}}(w)^{-1}$, the excess-risk
identity in (\ref{eq:classification-risk-gamma}) shows that, under the bounded-noise condition of
Theorem~\ref{thm:classification-main-characterization}, benign classification corresponds to a diverging effective
signal-to-noise ratio. At the deterministic spectral scale,
$\Delta_n=V_n/s_n^2$ plays the role of its inverse: $s_n$ measures the
predictive signal retained by minimum-norm interpolation, whereas $V_n$
measures the nuisance contamination introduced by interpolation. Therefore, the
relevant signal-to-noise ratio is not a property of the population problem
alone, but of how the interpolating estimator transforms signal and nuisance
through the covariance geometry.

This distinction is important under fusion. A population-level
signal-to-noise ratio, such as the ratio between predictable and residual
variance, does not by itself determine whether interpolation is benign.
Fusion may improve population-level predictability while simultaneously weakening
signal survival or increasing nuisance contamination. Consequently, its
effective signal-to-noise ratio induced by interpolation can decrease even when
the population-level prediction problem becomes easier. A related distinction
appears in regression, where benign overfitting depends on the full covariance
spectrum and the alignment of the predictive signal with spectral directions,
rather than on any single scalar signal-to-noise measure. 

\section{Cross-Task Separation under Heterogeneous Fusion}
\label{sec:heterogeneous-separation}
Sections 3 and 4 establish different fusion laws for regression and classification, but treat the two tasks separately. We now hold both the heterogeneous inputs and the latent score fixed and compare the two learning problems directly. This isolates task dependence from any change in the input distribution or target-generating mechanism.
\begin{theorem}[Cross-task separation under fusion]
\label{thm:cross-task-separation}
Within the heterogeneous Gaussian model
\(X_2=A_nX_1+U_n\), there exist two sequences of common latent scores
\(L_n\) for which the regression response and classification label are
\(Y_n^{\mathrm{reg}}=L_n\) and
\(Y_n^{\mathrm{cls}}=\operatorname{sign}(L_n)\), respectively, and the
following two transitions occur.
\begin{enumerate}
  \item \emph{Fusion rescue from two non-benign inputs.}
  Each marginal input is non-benign for both tasks, whereas the
  fused input is benign only for classification:
  \[
    \begin{array}{c|cc}
      &\text{minimum-norm regression}
      &\text{minimum-norm classification}\\
      \hline
      X_1&\Nreg&\Ncls\\
      X_2&\Nreg&\Ncls\\
      (X_1,X_2)&\Nreg&\Bcls
    \end{array}
  \]
  \item \emph{Task separation from two benign inputs.}
  Each marginal input is benign for both tasks, while fusion
  preserves classification benignity but destroys regression benignity:
  \[
    \begin{array}{c|cc}
      &\text{minimum-norm regression}
      &\text{minimum-norm classification}\\
      \hline
      X_1&\Breg&\Bcls\\
      X_2&\Breg&\Bcls\\
      (X_1,X_2)&\Nreg&\Bcls
    \end{array}
  \]
\end{enumerate}
Both sequences use nonzero cross-block coupling and a single latent score for all three inputs; the target-generating mechanism is not changed across inputs.
\end{theorem}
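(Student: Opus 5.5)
The plan is to prove both parts by explicit constructions, reusing the witnesses already built for the separate tasks. For each sequence we certify the regression states through the earlier lemmas and the classification states through Theorem~\ref{thm:classification-main-characterization}. Throughout, all three inputs will have diagonal (or coordinatewise $2\times2$ block-diagonal) covariances, and the latent score $L_n=\beta_n^\top X_1+\gamma_n^\top U+\xi$ will be one-sparse in a common coordinate $e_1$. Then each input-specific coefficient $b_r$ from (\ref{eq:source-one-effective-task})--(\ref{eq:joint-effective-task}) is a signal eigendirection, and the classification criterion applies to all three inputs with the same label mechanism. The regression excess risk is $\|\widehat w-b_r\|_{\Sigma_r}^2$ with noise level $\tau_r^2$. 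It is controlled from above by (\ref{eq:BFS-certifies-Breg}) when the relevant covariance is in $\BFS$. It is controlled from below by Lemma~\ref{lem:fixed-probability-variance-lower} when $\inf_n\tau_{r,n}^2>0$ and $n/R_{k^\star}$ or $k^\star/n$ stays bounded away from zero, or else by a direct bias argument.

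\emph{Part 2 (benign marginals, regression destroyed).} We take the log-gap sequence (\ref{eq:log-gap-covariance-construction})--(\ref{eq:log-gap-label-construction}). Theorem~\ref{thm:regression-task-level-nonclosure} already gives $\Breg+\Breg\to\Nreg$, with nonzero coupling $A_n=I$. It remains to compute, for each input, the signal eigenvalue $\lambda_{t,r}$, the deleted nuisance spectrum and $\Delta_{r,n}$, and to check $\Delta_{r,n}\to0$ for $r=1,2,J$. For $X_1$ and $X_2$ the nuisance spectrum is essentially $\eta_n^{[p_n]}$ with $p_n=n^3$, so $k^\star=0$, $n/R_0=n^{-2}$, and $S_0/(n\lambda_t)=O(h_n/\varepsilon_n)$ up to the $\lambda_t$ scale. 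For the fused input the $e_1$ block is a $2\times2$ matrix, and the signal direction is its small eigendirection, of order $\varepsilon_n$. Its deleted spectrum consists of the large block eigenvalue (about $2$) together with $p_n$ eigenvalues of order $\eta_n$ and $p_n$ of order $\rho_n$. The large eigenvalue forces $k^\star=1$, after which $S_1\approx p_n\eta_n=nh_n$ and $R_1\approx p_n$. Hence $\Delta_J\approx(1+nh_n/(n\varepsilon_n))^2\,(1/n+n^{-2})=O(h_n^6/n)\to0$, and similarly for the marginals. The obstacle here is confirming that the fused coefficient $b_J$ is exactly the small block eigenvector, so that the one-sparse model (\ref{eq:classification-one-sparse-model}) holds after rotation, and that $\tau_r$ stays bounded. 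If $b_J$ is not exactly that eigenvector, I will adjust $\gamma_n$ so that $\widetilde b_J$ is.

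\emph{Part 1 (rescue only for classification).} I start from the $\Ncls+\Ncls\to\Bcls$ witness of Theorem~\ref{thm:classification-six-patterns}, whose $\Delta$'s are $(d_n,\rho^{-1},j_n)$; I take $\ell$ small so that $j_n\to0$. This yields the classification column. For regression I need all three covariances to be non-benign for this same task. Marginal 2 has a nuisance shoulder $a_n^{[p_n]}$ with $p_n=\lfloor\rho n\rfloor$, which gives $k_b(\Sigma_2)\asymp n$ or $n/R\not\to0$. Since $\tau_2^2$ is bounded below, Lemma~\ref{lem:fixed-probability-variance-lower} gives $\Nreg$. For $X_1$ and for the fused input, I will show that the full-spectrum $r_0/n$ or $n/R_{k_b}$ fails. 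If the spectrum is too benign for that, I will use a bias lower bound instead. The bias mechanism is that in the classification witness the shared signal eigenvalue is small relative to the nuisance tail: survival $s_n\to0$ is compensated in $\Delta$ but not in regression, so the interpolator shrinks the signal coefficient. Concretely, regression bias is at least $\lambda_t b_t^2(1-\mathrm{SU}(\widehat w)/\sqrt{\lambda_t}b_t)^2\approx\nu_r^2(1-s_{r,n})^2$. This is bounded away from zero whenever $s_{r,n}\not\to1$ and $\nu_r^2\gtrsim1$, using the bound $\mathrm{SU}\asymp_{\mathbb P}s_{r,n}$ from Proposition~\ref{prop:classification-fusion-balance}. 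Since scaling the label by $\operatorname{sign}$ does not affect $s$, a fused $s_{J,n}$ bounded away from $1$ with bounded-below signal variance gives $\Nreg$ for $J$, and likewise for $X_1$. The main obstacle is this step: verifying $\limsup s_{J,n}<1$, that is, $S_{J}/(n\lambda_{t,J})\not\to0$, while still $\Delta_J\to0$. The witness parameters appear to give exactly this, since $\Delta_J=j_n\to0$ only through the small $V_J$ factor. If the fused regression still turns out benign, I will fall back on adding a non-predictive $O(n)$-wide shoulder to $\Sigma_U$, as in Case study I. Such a shoulder raises $n/R_1$ for the full spectrum while perturbing $V_J$ only by a constant factor.
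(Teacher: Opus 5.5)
Your Part~1 follows the paper's route, but Part~2 has a genuine gap in the marginal classification column. You certify $X_1,X_2:\Bcls$ by showing $\Delta_{r,n}\to0$ and invoking Theorem~\ref{thm:classification-main-characterization}, pending a check that $\tau_r$ stays bounded. It does not stay bounded, and in this construction it cannot.

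With the original log-gap score, $\operatorname{Cov}(X_2,Y)=0$, so $b_2=0$ and $X_2$ has no signal direction at all. Moreover $\tau_1=h_n$, and $b_J$ is not one-sparse because it has a nonzero $v_+$ component. After you adjust the score so that $b_J=v_-/\sqrt{\lambda_-}$, as the paper does, each marginal has predictable variance $\Theta(\varepsilon_n)$ against residual variance tending to one, so $\tau_r\asymp h_n\to\infty$.

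This is forced by the mechanism. In each marginal the signal eigenvalue is $\Theta(1)$ while $S_0\asymp nh_n$, so survival is $\asymp h_n^{-1}\to0$. Marginal regression can then be benign only if the marginal signal variance vanishes. The criterion assumes $\tau_n\le\bar\tau$, and its survival estimate uses $\kappa_\tau$ bounded below, so $\Delta_{r,n}\to0$ certifies nothing here.

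The missing step is a direct argument:
\begin{itemize}
\item Since $\kappa_{\tau_r}\asymp h_n^{-1}$, the polarization and Hanson--Wright bound still gives $z^\top B_ry=\kappa_{\tau_r}\operatorname{tr}(B_r)+O_{\mathbb P}(\|B_r\|_{\mathrm F})$.
\item Its main term, of order $h_n^{-2}$, dominates $(\sqrt n\,h_n)^{-1}$, so survival is positive with probability tending to one.
\item On that event the fitted score is nonnegatively correlated with the label score, so the risk is at most $1/2$.
\item The marginal Bayes risk is $1/2-\Theta(h_n^{-1})$, so the excess risk is $O(h_n^{-1})$.
\end{itemize}
Marginal benignity here reflects asymptotically uninformative marginal tasks, not the survival--contamination balance.

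Changing the score also means Theorem~\ref{thm:regression-task-level-nonclosure} no longer applies verbatim, so you must redo the regression column. For the marginals, Bartlett's upper bound with $\|b_r\|_2^2\|\Sigma_r\|_{\mathrm{op}}=\Theta(\varepsilon_n)$ gives bias $O(\varepsilon_nh_n)$. For the fused input, Sherman--Morrison with $u_J=\lambda_-z^\top Bz=O_{\mathbb P}(h_n^{-3})$ gives $\widehat\alpha_J\to0$ against unit true amplitude. A minor point: the fused signal-deleted cutoff is $0$, not $1$, since $r_0^{(-t)}\asymp nh_n\gg b_{\mathrm C}n$. This gives $\Delta_J\asymp h_n^4/n$ and is harmless.

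In Part~1 there are four smaller points:
\begin{itemize}
\item The variance lower bound for $X_2$ requires the fixed $\sigma^2$ to be positive. With $\sigma=0$ the Case~6 score equals $v_1X_{1,1}+v_2X_{2,1}$ with $v_1\approx n^{-1/4-r/2}$, so $\tau_2^2$ is negligible.
\item Your ``main obstacle'' resolves with room to spare: $S_J/(n\lambda_{t,J})\asymp n^{\ell}$ and $S_1/(n\lambda_{t,1})\asymp n^{1/2+r+\ell}$, so both survivals tend to zero.
\item Derive the regression amplitude directly from $\widehat\alpha=(\alpha u+\lambda_tz^\top B\varepsilon)/(1+u)$, not from the classification survival lemma. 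That lemma concerns binary labels and gives only $\asymp$ bounds, which would not yield a bias lower bound if $s_{r,n}$ merely stayed below one.
\item Your shoulder fallback cannot work. A full-spectrum variance obstruction for the fused input survives deletion of one signal direction, so it would keep $V_J$ away from zero. Since $\Delta_J\ge V_J$, that would destroy the classification rescue, so the fused regression failure must be bias-driven.
\end{itemize}
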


\begin{remark}
The above theorem states that the effect of fusion is task-dependent even when the
inputs and latent score are held fixed: the same fusion can help
classification while harming regression. The second construction gives a
formal classification separation, since its marginal Bayes advantages vanish
asymptotically. Proof is provided in
Appendix~\ref{app:cross-task-separation-proofs}.
\end{remark}

  \section{Experiments}
  \label{sec:experiments}

  The experiments serve three purposes.  First, they illustrate finite-sample
  minimum-norm risks that are consistent with the interaction states proved in
  Sections~\ref{sec:heterogeneous-regression} and~\ref{sec:classification-results}.
  Second, they illustrate the mechanisms behind the two regression case studies:
    an innovation with vanishing total energy can create a deleterious spectral
    shoulder, and a task inside the logarithmic bias gap of \citet{bartlett2020benign} can be benign in
    both marginals but non-benign after fusion.
  Third, they hold the input distribution and latent score fixed to display the
  cross-task separation of Theorem~\ref{thm:cross-task-separation}.  The experiments use exactly the
  spectral sequences and common-score constructions stated in the main text and
  appendices. They are not separately tuned finite-dimensional examples.
  Finally, we include an out-of-model multi-view sanity check on a real data
  set, as a limited practical illustration rather than a validation of the
  Gaussian theory.
  \subsection{Experimental protocol}
  \label{subsec:experimental-protocol}

  For every construction, we generate the triangular Gaussian model
  \(X_2=AX_1+U\), with \(X_1\perp U\), and fit the minimum-Euclidean-norm
  interpolant separately using \(X_1\), \(X_2\), and the concatenated
  input \(X_J=(X_1^\top,X_2^\top)^\top\).  The synthetic Gaussian experiments
  cover sample sizes from \(n=16\) to \(n=15000\), with \(50\) independent
  repetitions at each displayed point.  We use the fixed numerical cutoff
  \(b=2\) when reporting the Bartlett quantities.  For readability, the
  overview figures display the approximately log-spaced
  subset \(n\in\{16,50,100,200,400,800,1600,3000,5000,10000,15000\}\). For each construction, the Monte Carlo samples for \(X_1\), \(X_2\), and
  \(X_J\) are drawn independently from their respective exact marginal laws.
  Thus, the figures compare the risk distributions of the three estimators,
  rather than paired realization-by-realization fusion effects. Several witnesses have ambient dimension \(n^3\) or \(n^4\).  To make these
  simulations tractable, we use exact block-Wishart compression: each repeated
  \(\lambda^{[m]}\) block is represented through \(\lambda ZZ^\top\), while
  finite-rank signal blocks are added via Woodbury updates
  \citep[Section~0.7.4]{hornjohnson2013matrix}.  This preserves the Gaussian
  Gram law of the full design. Regression excess risk is evaluated as the exact population quadratic risk of
  the fitted vector.  Classification risk uses the exact Gaussian sign formula,
  and is reported relative to the Bayes risk of the corresponding
  input.  Thus no finite test set is used.  Each curve shows the
  unsmoothed across-repetition median at every displayed sample size, and the translucent
  band represents the empirical interquartile range (25th-75th percentiles)
  across repetitions.  Deterministic population spectral diagnostics are shown
  without uncertainty bands. 

  \subsection{The five attainable regression interactions}
  \label{subsec:experimental-regression-interactions}

  Figure~\ref{fig:regression-five-experiments} reports the five attainable rows
  of Table~\ref{tab:regression-spectral-transitions}.  The omitted Boolean row
  \(\BFS+\BFS\to\NFS\) is the transition ruled out for the threshold-free
  full-spectrum surrogate by
  Corollary~\ref{cor:regression-threshold-free-closure}.  In the four
  variance-driven witnesses, harmfulness is certified by the matching Bartlett
  variance lower bound. The \(\NFS+\NFS\to\BFS\) witness uses the explicit
  signal-aligned bias construction of
  Appendix~\ref{app:transition-witnesses}.
  \begin{figure}[!t]
    \centering
    \includegraphics[width=\textwidth]
    {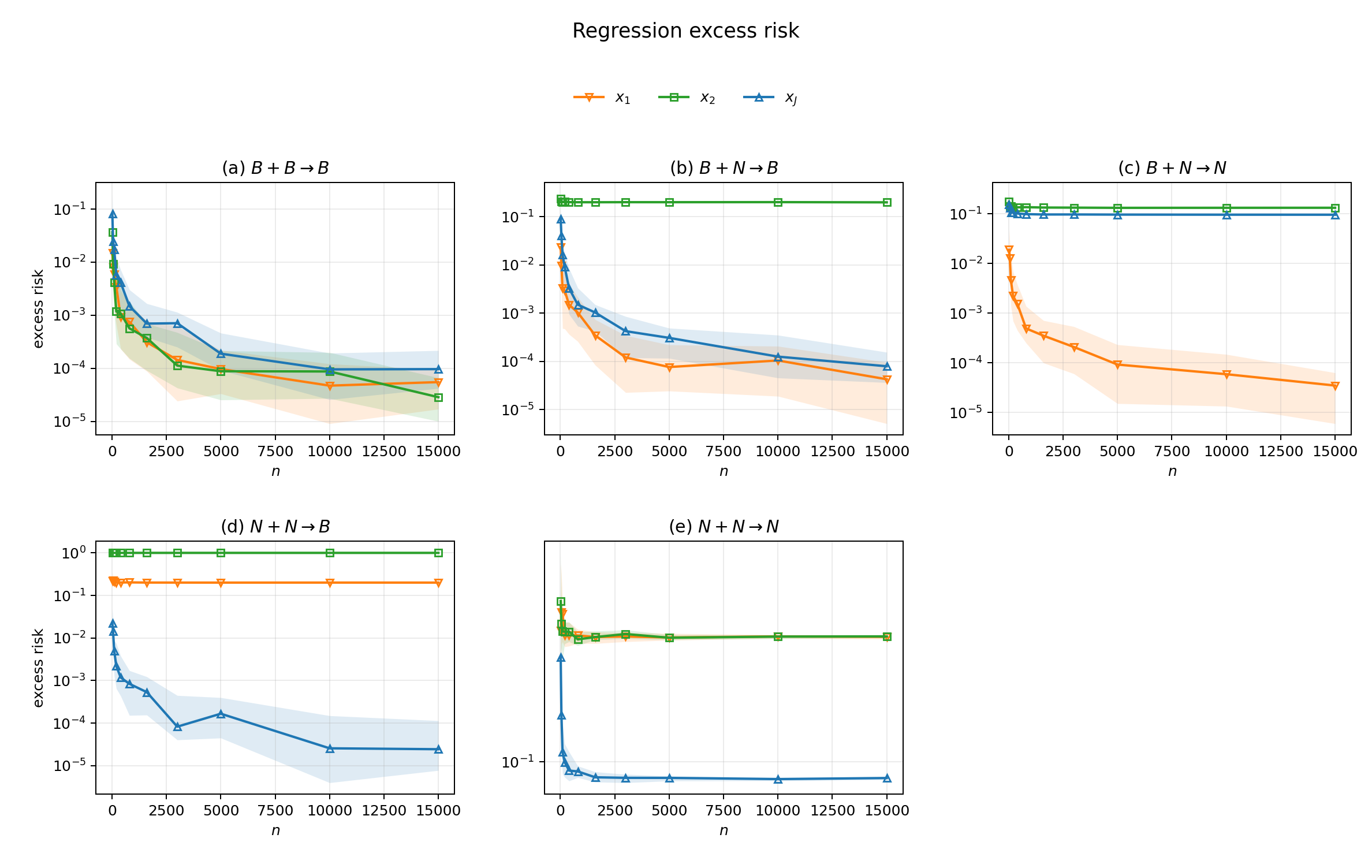}
    \caption{Population regression excess risk for the five attainable
    interaction patterns.  The panels follow the ordering in
    Table~\ref{tab:regression-spectral-transitions}; \(\Breg\) and \(\Nreg\)
    denote task-level benign and non-benign regression behavior for the displayed
    response.  Curves show unsmoothed Monte Carlo medians, and shaded bands
    show the interquartile range.  The corresponding repetition counts are specified in
    Section~\ref{subsec:experimental-protocol}.}
    \label{fig:regression-five-experiments}
  \end{figure} 
  The five panels illustrate behavior consistent with the predicted interaction
  law.  In panel (a), all
  three risks decrease toward zero.  In panel (b), the second-source risk
  stabilizes near \(0.20\), whereas the first and joint risks are much smaller
  at the largest displayed sample size, a pattern consistent with
  \(\Breg+\Nreg\to\Breg\). Panel (c) keeps
  both the second and joint risks bounded away from zero while the first risk
  vanishes.  The signal-aligned construction in panel (d) is particularly
  pronounced: the two marginal risks remain near \(0.20\) and \(1\), while the
  joint risk falls below \(10^{-4}\).  Finally, all three
  risks remain separated from zero in panel (e).  Taken together, the panels
  provide finite-sample illustrations consistent with the covariance-level
  closure statement and its corresponding task-level risk behavior.

  \subsection{Regression case studies}
  \label{subsec:experimental-regression-case-studies}

  Figure~\ref{fig:regression-case-study-experiments} isolates the two boundary
  phenomena from
  Sections~\ref{subsec:regression-small-energy-case-study}
  and~\ref{subsec:regression-log-gap-counterexample}.  The top row concerns the
  small-innovation construction.  The panels report excess risk and the
  corresponding scaled variance quantity \(n/R_{k^\star}\) across sample sizes.
  Although \(\|\Sigma_U\|_{\mathrm{op}}=n^{-2}\) and
  \(\operatorname{tr}(\Sigma_U)=c/n+o(n^{-1})\), panels (a)--(b) illustrate
  that only \(X_1\) has vanishing risk and variance scale, whereas \(X_2\) and
  the fused representation \(X_J\) do not exhibit this benign behavior.
  \begin{figure}[!t]
    \centering
    \includegraphics[width=\textwidth]
    {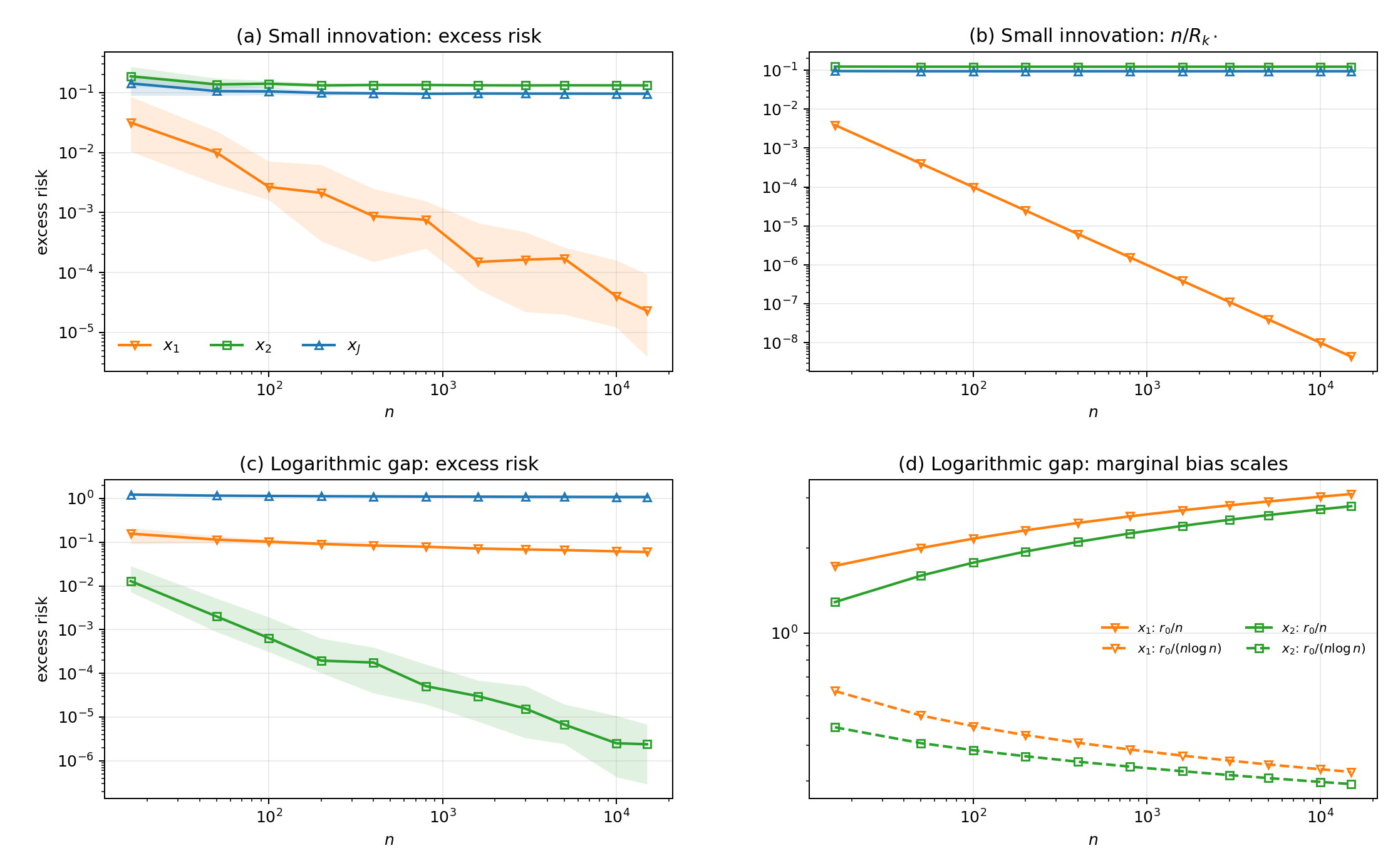}
    \caption{The two regression case studies.  Panels (a)-(b) illustrate how
    vanishing innovation energy can be associated with a nonvanishing variance obstruction.
    Panels (c)-(d) probe Bartlett's logarithmic bias gap: the two marginal
    risks decrease while the fused risk stays separated, and the marginal
    quantities \(r_0/n\) and \(r_0/(n\log n)\) move in opposite directions.
    The logarithmic denominator in panel (d) is the finite-sample proxy for
    \(\log(1+r_0)\), which is asymptotically equivalent in this construction.
    Risk curves are unsmoothed medians with interquartile bands.}
    \label{fig:regression-case-study-experiments}
  \end{figure}
 The bottom row illustrates the logarithmic-gap construction.  Panels (c)--(d)
  display small marginal excess risks while the fused
  representation remains substantially worse over the displayed sample sizes.
  The marginal tasks lie in the logarithmic gap between the known sufficient and
  necessary conditions, producing a slow asymptotic regime.  The persistent
  separation between the marginal and fused curves is a finite-sample
  illustration consistent with the non-closure mechanism in
  Theorem~\ref{thm:regression-task-level-nonclosure}.

  \subsection{The complete classification interaction law}
  \label{subsec:experimental-classification-interactions}

  Figure~\ref{fig:classification-six-experiments} provides finite-sample
  illustrations for all six constructions in
  Theorem~\ref{thm:classification-six-patterns}.  For every input, the
  population quantity \(\Delta\) is computed after deleting that
  input's own signal eigendirection, exactly as required by
  Theorem~\ref{thm:classification-main-characterization}.  The plotted risk is
  then obtained from the fitted survival and contamination, rather than by
  treating \(\Delta\) itself as an empirical loss.

  \begin{figure}[!t]
    \centering
    \includegraphics[width=\textwidth]
    {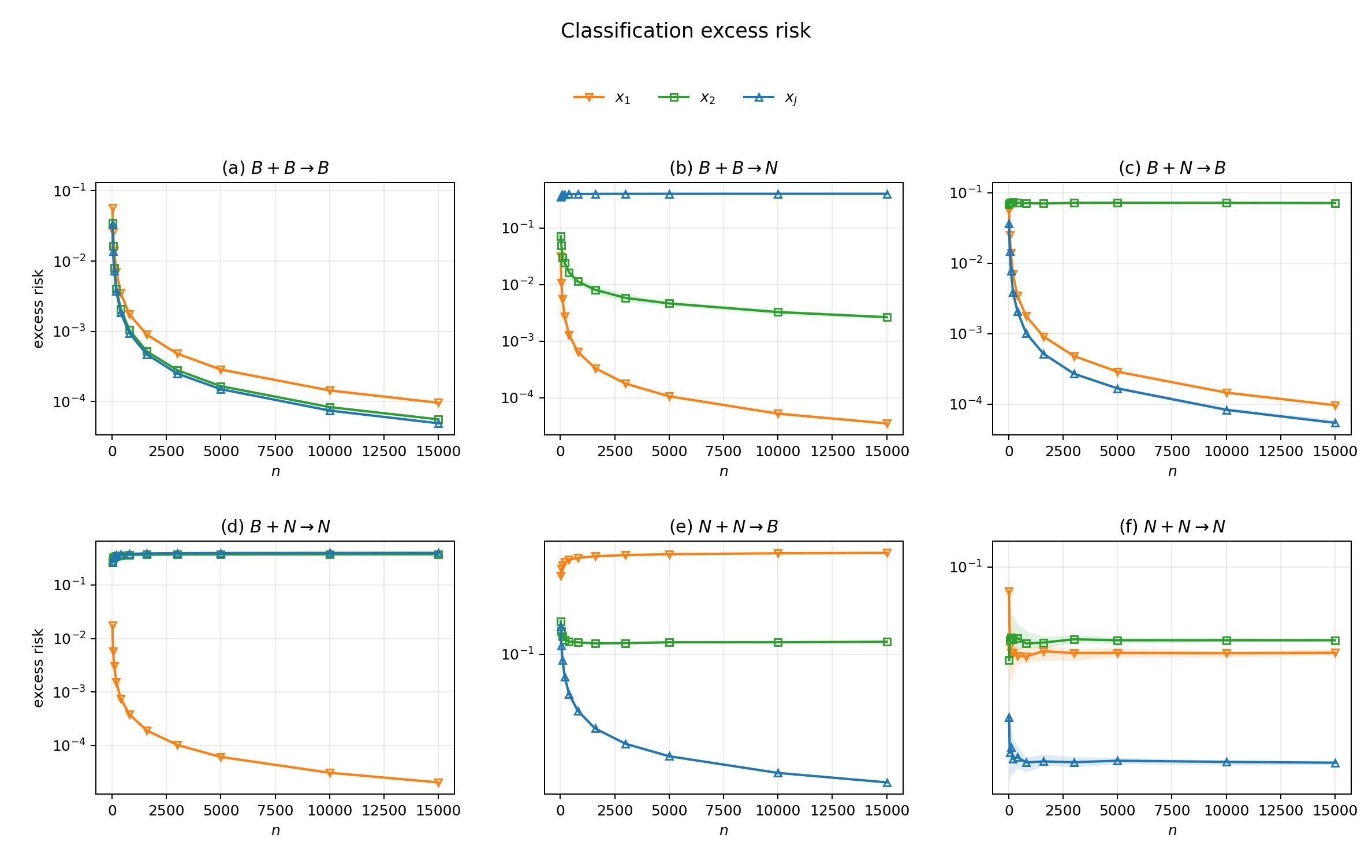}
    \caption{Population classification excess risk for all six unordered
    interaction patterns.  Each construction uses one common latent label
  mechanism and input-specific one-sparse signal directions.  The
    six panels illustrate the absence of a nontrivial Boolean closure law based
    only on the two marginal benign/non-benign states.
    Curves are unsmoothed medians with interquartile bands.}
    \label{fig:classification-six-experiments}
  \end{figure}

  The classification panels illustrate all six possibilities.  Panel (a) has
  three decreasing risks.  In panel (b), both marginal risks decrease while the
  joint risk remains near \(0.40\), consistent with the classification transition
  that is impossible for the regression full-spectrum surrogate.  Panels (c) and
  (d) distinguish the two \(\Bcls+\Ncls\) outcomes: fusion removes the harmful marginal
  effect in (c), but retains it in (d).  Panel (e) illustrates correlated
  rescue from two non-benign sources.  The joint risk decreases more slowly
  than in the other benign examples, consistent with the theoretical
  balance \(\Delta_J\asymp n^{2\ell-1}=n^{-0.6}\) for the experimental choice
  \(\ell=0.2\).  Panel (f) leaves all three risks bounded away from zero.
  Together with the regression panels, these finite-sample patterns are
  consistent with the paper's main distinction: regression is protected by a
  full-spectrum covariance surrogate, whereas classification is governed by the
  input-specific ratio of contamination to surviving signal.

  \subsection{Cross-task separation under the same fusion}
  \label{subsec:experimental-cross-task-separation}

  Figure~\ref{fig:cross-task-separation-experiments} illustrates the two
  constructions of Theorem~\ref{thm:cross-task-separation}.  Within each row,
  the Gaussian inputs and the common latent score \(L_n\) are unchanged: the
  regression response is \(L_n\), while the classification response is
  \(\operatorname{sign}(L_n)\).  In particular, these panels do not combine
  separately tuned regression and classification witnesses.  We use the exact
  Appendix~\ref{app:cross-task-separation-proofs} parameters and normalize its
  designated joint signal direction to unit variance.
  \begin{figure}[!t]
    \centering
    \includegraphics[width=\textwidth]
    {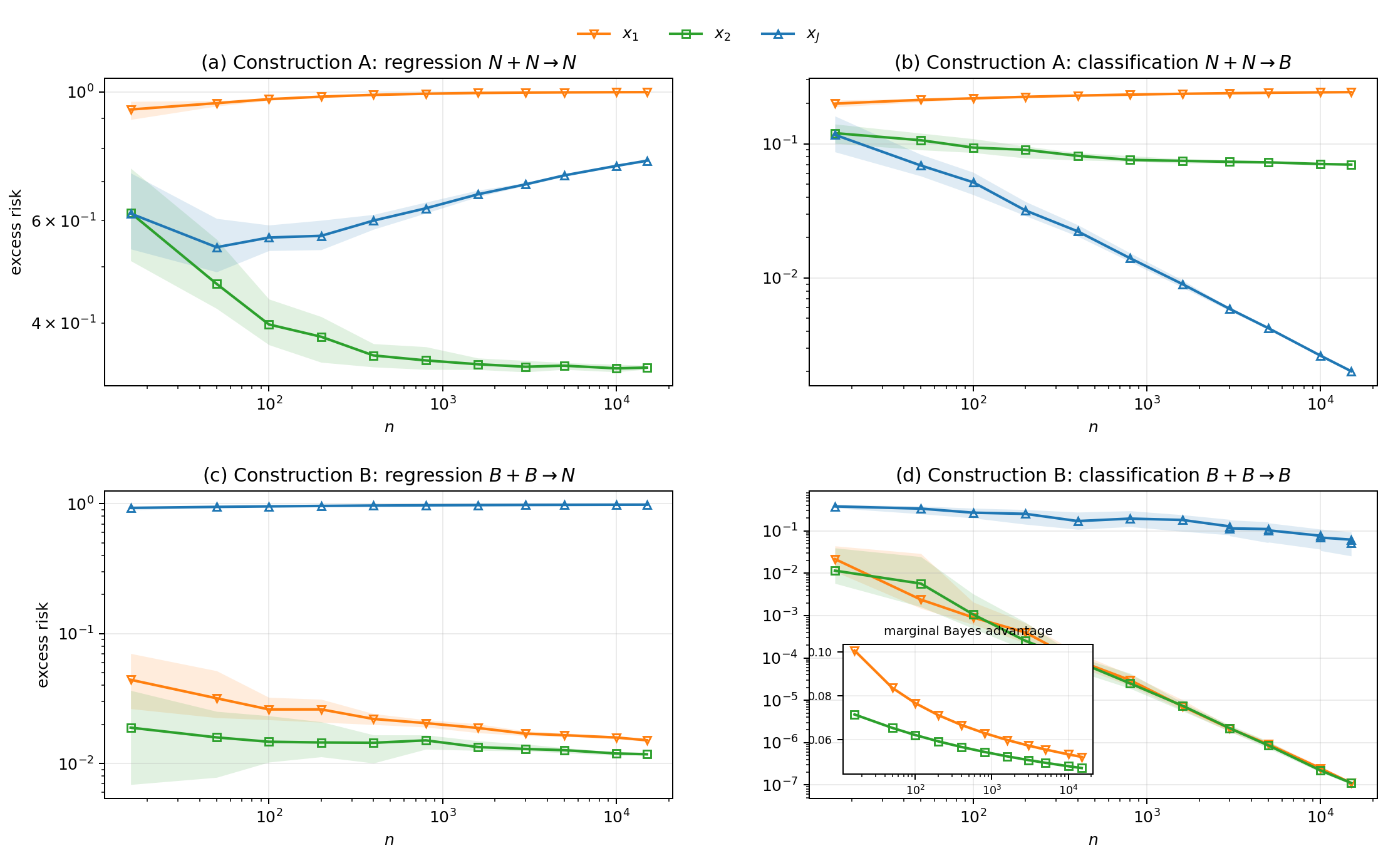}
    \caption{Cross-task separation under the same heterogeneous fusion and
    common latent score.  Panels (a)--(b) use the first construction of
    Theorem~\ref{thm:cross-task-separation}: the displayed finite-sample trends
    are consistent with regression non-benignity and classification rescue.
    Panels (c)--(d) use the second construction: the displayed trends are
    consistent with benign marginal inputs for both tasks, regression degradation
    after fusion, and preserved classification benignity.  The inset reports the vanishing marginal Bayes
    advantages in panel (d).  All synthetic panels extend through
    \(n=15000\).  All points use \(50\) repetitions.  Curves are unsmoothed
    medians with interquartile bands.}
    \label{fig:cross-task-separation-experiments}
  \end{figure}
  The first row illustrates a task-dependent rescue: the regression risks remain
  separated from zero, but the fused classification excess risk decreases while
  the two marginal classification risks remain non-benign.  The second row
  illustrates the complementary separation.  Its displayed risks are consistent
  with persistent regression bias despite decreasing marginal regression risks,
  while all three classification excess risks decrease.  The inset makes explicit that the
  second construction's marginal classification statement concerns excess risk:
  the marginal Bayes advantages themselves vanish, so their raw risks approach
  chance level.  Thus the figure provides a finite-sample view of task dependence
  without changing the input distribution, latent score, or fusion operation.
\begin{figure}[!t]
    \centering
    \includegraphics[width=\textwidth]
    {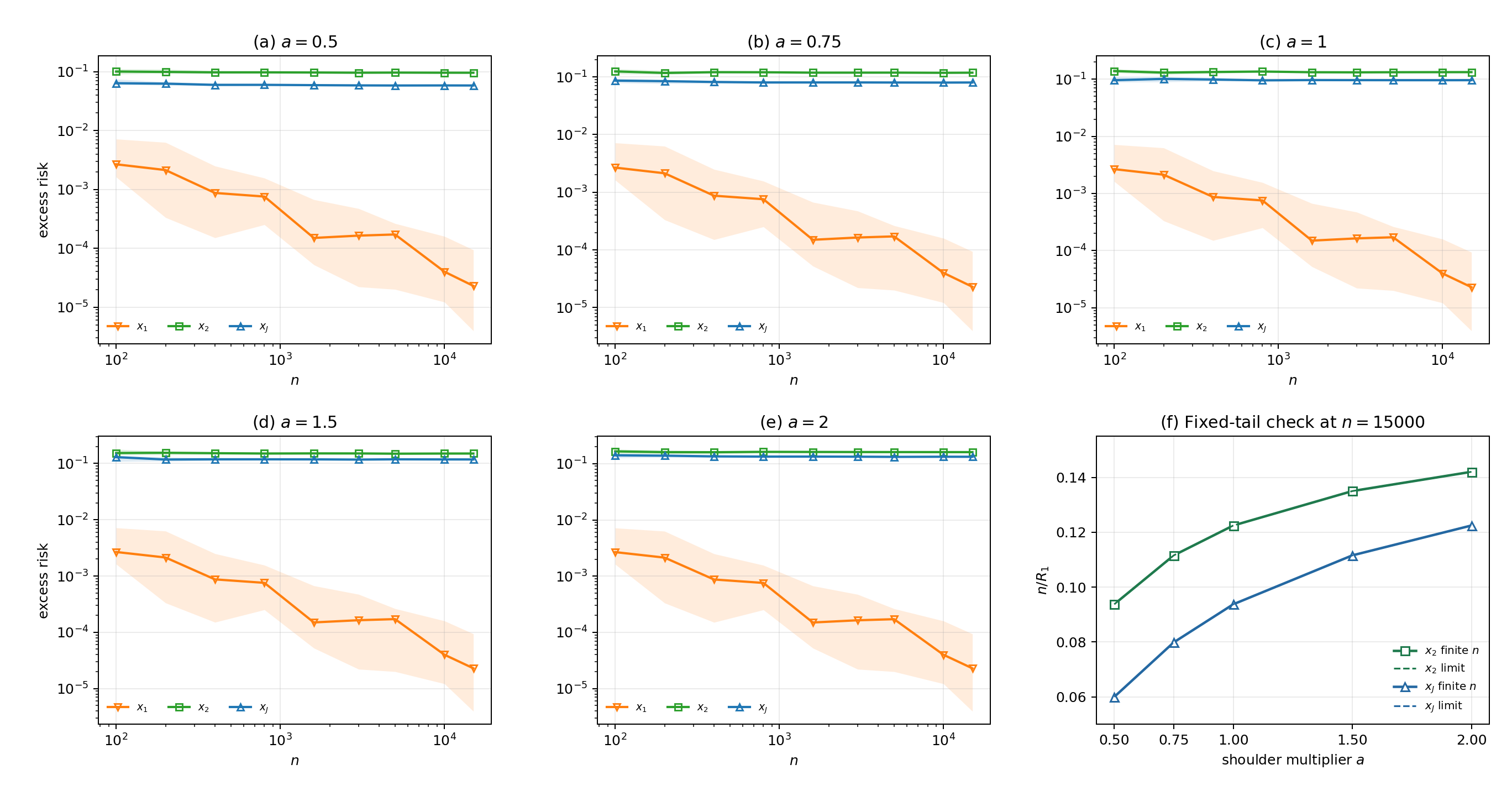}
    \caption{Constant-factor robustness of the small-innovation mechanism.
    Panels (a)--(e) report population regression excess risk for the five fixed
    shoulder multipliers \(a\), with one problem instance per panel; the
    \(X_1\) curve is repeated as a common reference because its law does not
    depend on \(a\).  Panel (f) compares the finite-sample fixed-tail
    quantity at \(n=15000\) against the analytic limits \(6a^2/(1+6a)^2\) for
    \(X_2\) and \(6a^2/(2+6a)^2\) for \(X_J\); dashed curves are analytic, not
    fitted.  Every shaded band is
    the Monte Carlo interquartile range for its own fixed problem instance;
    no band aggregates variation across multipliers.}
    \label{fig:small-energy-robustness-experiments}
  \end{figure}
  \subsection{Robustness of the small-innovation mechanism}
  \label{subsec:experimental-small-innovation-robustness}

  We next test whether the small-innovation obstruction in
  Section~\ref{subsec:regression-small-energy-case-study} depends on its exact
  constant.  Keep \(p=n^3\), \(m=6n\),
  \(D=\operatorname{diag}(1,n^{-4}I_p)\), and \(A=I\), and replace the
  innovation covariance by $\Sigma_U(a)=\operatorname{diag}
    \bigl(n^{-8},a n^{-2}I_m,n^{-8}I_{p-m}\bigr),
    \qquad a\in\{0.5,0.75,1,1.5,2\}$.
  For every fixed \(a>0\), its total energy still vanishes since
  \(\operatorname{tr}(\Sigma_U(a))=6a/n+o(n^{-1})\).  On the other hand,
  the fixed-tail variance quantities have the analytic limits
  \[
    \frac{n}{R_1(\Sigma_2)}\longrightarrow
    \frac{6a^2}{(1+6a)^2},
    \qquad
    \frac{n}{R_1(\Sigma_J)}\longrightarrow
    \frac{6a^2}{(2+6a)^2}.
  \]
  Figure~\ref{fig:small-energy-robustness-experiments} treats each fixed
  multiplier as a separate finite-sample problem instance.  Within each panel,
  the shaded bands quantify only Monte Carlo variation for that instance.
  The panels illustrate persistent second and joint risks throughout the
  displayed constant-factor sweep, while the first-input reference risk
  decreases.  The deterministic spectral curves are close to their corresponding
  analytic limits; at \(a=1\), these reduce to
  \(6/49\) and \(6/64\), respectively.  This is a finite-sample robustness
  illustration consistent with the proved mechanism, not a uniform-in-\(a\)
  extension of the theorem.

\begin{figure}[!tp]
    \centering
    \includegraphics[width=\textwidth]
    {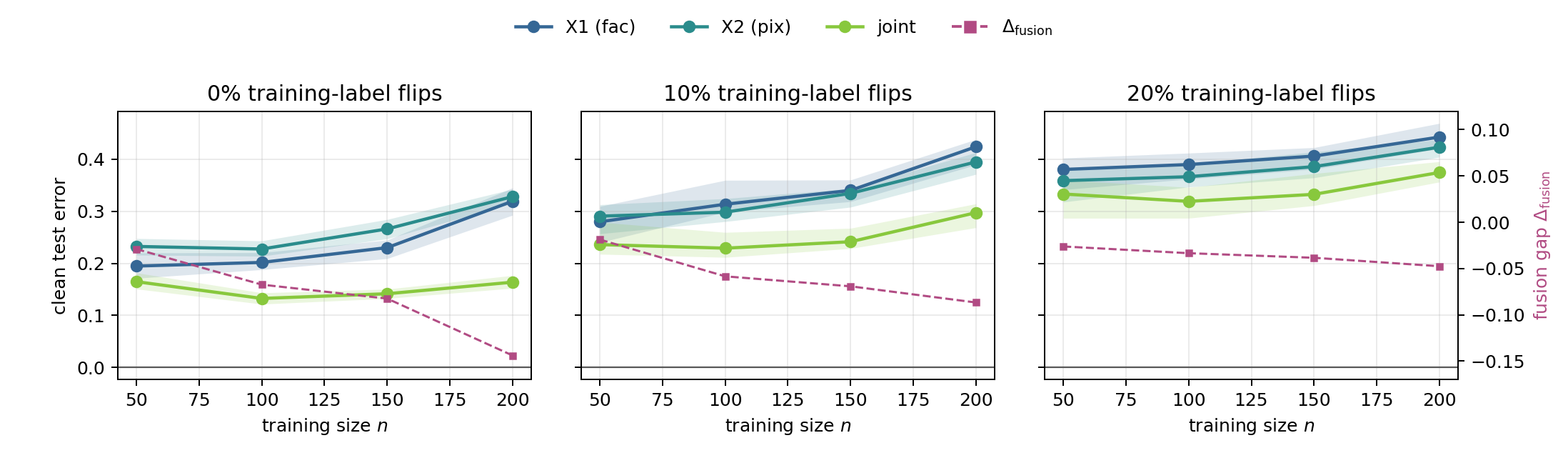}
    \caption{Out-of-model sanity check on the UCI Multiple Features data.
    \(X_1\) is the profile-correlation view, \(X_2\) the pixel-average view,
    and \(X_J\) their concatenation.  Solid curves are median clean-test
    errors and shaded bands are interquartile ranges over 30 full-row-rank
    stratified splits. Dashed magenta curves use the right axis and report
    \(\Delta_{\mathrm{fusion}}\).  Only training labels are flipped.}
    \label{fig:uci-sanity-check}
  \end{figure}

  \subsection{Out-of-model multi-view sanity check}
  \label{subsec:experimental-uci-sanity-check}

  As a deliberately limited practical complement, we also evaluate two aligned
  views from the UCI Multiple Features data set~\citep{duin1998multiple}:
  the 216-dimensional profile-correlation view (\texttt{mfeat-fac}) and the
  240-dimensional pixel-average view (\texttt{mfeat-pix}).  The data contain
  2,000 handwritten digits with corresponding rows across views.  We form the
  balanced binary task digits 0-4 versus 5-9.  For each of 30 stratified
  splits and each \(n\in\{50,100,150,200\}\), every view is centered using
  only its training mean and rescaled by one training-set scalar (no featurewise
  whitening). A common intercept is included in each minimum-norm fit.  We
  retain only full-row-rank splits, so all three representations interpolate the
  possibly flipped training labels to numerical precision.  Test labels are
  always clean. Figure~\ref{fig:uci-sanity-check} reports clean test error together with the
  fusion gap $\Delta_{\mathrm{fusion}}
    =\operatorname{Err}(X_J)-
    \min\{\operatorname{Err}(X_1),\operatorname{Err}(X_2)\}$. We use 0\%, 10\%, and 20\% symmetric flips of training labels.  Fusion has a
  negative median gap throughout this small protocol.  This is an out-of-model
  sanity check: the data are neither Gaussian nor a one-sparse eigensignal
  model, and the result is not presented as validation of a theorem.

\section{Conclusion}
\label{sec:conclusion}
We studied benign overfitting under heterogeneous input fusion while keeping the underlying population task fixed. For regression, we identified a threshold-invariant full-spectrum covariance certificate that is preserved under arbitrary positive-semidefinite joint fusion, while showing that task-level benign regression need not be closed outside this certified regime. For one-sparse Gaussian classification, benignity in the regular regime is governed by the balance between surviving predictive signal and nuisance contamination, and within this model class all marginal-to-joint benign/non-benign patterns are attainable. These results show that benign overfitting under input fusion is fundamentally task-dependent: the same fused input can affect regression and classification differently. Extending these conclusions beyond the Gaussian setting and identifying stronger conditions for task-level regression closure remain important directions for future work.

\clearpage
\onecolumn

\renewcommand{\cftsecfont}{\normalsize} 
\renewcommand{\cftsubsecfont}{\normalsize} 
\renewcommand{\cftbeforesecskip}{12pt}      
\renewcommand{\cftbeforesubsecskip}{12pt}

\renewcommand{\contentsname}{Appendix}
\addtocontents{toc}{\protect\setcounter{tocdepth}{2}}

  \appendix
  \tableofcontents

    \clearpage
\clearpage
\appendix

\clearpage
\section{Proofs for the Regression Results}
\label{app:regression-proofs}

\subsection[Certificates and probabilistic consequences]{Certificates and probabilistic consequences in \citet{bartlett2020benign}}
\label{app:bartlett-probability}

\begin{lemma}[The full-spectrum certificate implies benign regression]
\label{lem:BFS-certifies-regression}
Assume
\[
  \Sigma_n\in\BFS(b_{\mathrm B}),
\]
together with
\[
  \|w_n\|_2^2\|\Sigma_n\|_{\mathrm{op}}=O(1),
  \qquad \tau_n^2=O(1).
\]
Then
\(\mathcal E_n^{\mathrm{reg}}(\widehat w_n)\to_{\mathbb P}0\).
\end{lemma}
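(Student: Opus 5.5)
The plan is to apply the high-probability upper bound (\ref{eq:bartlett-upper}) of Theorem~\ref{thm:bartlett-regression-bounds} with a confidence level \(\delta_n\to0\) chosen slowly enough that the \(\log(1/\delta_n)\) factors are absorbed by the decay of the certificate quantities. First I check the hypotheses of the theorem for all large \(n\). Membership in \(\BFS(b_{\mathrm B})\) gives \(k_n^\star:=k_{b_{\mathrm B}}(\Sigma_n)<\infty\) eventually, so \(r_{k_n^\star}(\Sigma_n)\geq b_{\mathrm B}n\). In particular, the tail beyond \(k_n^\star\) has at least \(b_{\mathrm B}n>n\) nonzero eigenvalues, because \(S_k\leq(\#\text{nonzero tail eigenvalues})\cdot\lambda_{k+1}\). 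Hence \(\operatorname{rank}(\Sigma_n)>n\). Also \(k_n^\star/n\to0\), so \(k_n^\star<n/c_1\) eventually, and the theorem's upper-bound branch applies.

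Next, set
\[
  \epsilon_n:=\max\Bigl\{\sqrt{r_0(\Sigma_n)/n},\ r_0(\Sigma_n)/n,\ V_{b_{\mathrm B}}(\Sigma_n),\ 1/n\Bigr\}.
\]
By the three conditions in (\ref{eq:BFS-b-definition}), \(\epsilon_n\to0\). Choose \(\log(1/\delta_n):=\min\{\epsilon_n^{-1/2},\,\sqrt n\}\). Then \(\log(1/\delta_n)\to\infty\), so \(\delta_n\to0\), and \(\log(1/\delta_n)\leq\sqrt n<n/c_{\mathrm B}\) eventually, which keeps the admissibility constraint satisfied. We may also assume \(\delta_n<1\), since \(\epsilon_n<1\) eventually.

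Plugging into (\ref{eq:bartlett-upper}) gives, with probability at least \(1-\delta_n\),
\[
  \mathcal E_n^{\mathrm{reg}}(\widehat w_n)\leq c_{\mathrm B}\Bigl[C\max\{\epsilon_n,\ (\log(1/\delta_n)/n)^{1/2}\}+C\log(1/\delta_n)\,\epsilon_n\Bigr].
\]
Here \(C\) bounds both \(\|w_n\|_2^2\|\Sigma_n\|_{\mathrm{op}}\) and \(\tau_n^2\). For the terms on the right:
\begin{itemize}
\item \((\log(1/\delta_n)/n)^{1/2}\leq n^{-1/4}\to0\);
\item \(\log(1/\delta_n)\,\epsilon_n\leq\epsilon_n^{1/2}\to0\).
\end{itemize}
So for every fixed \(\eta>0\), \(\mathbb P(\mathcal E_n^{\mathrm{reg}}>\eta)\leq\delta_n\to0\) eventually, which is convergence in probability.

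The only delicate point is balancing \(\delta_n\): it must vanish while \(\log(1/\delta_n)\,V_{b_{\mathrm B}}\) still vanishes and the constraint \(\log(1/\delta_n)<n/c_{\mathrm B}\) holds. The choice \(\epsilon_n^{-1/2}\wedge\sqrt n\) handles both. A minor point is the degenerate case \(r_0=0\) or \(V=0\), which is covered by including \(1/n\) in \(\epsilon_n\).
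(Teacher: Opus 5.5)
Your proposal is correct and follows essentially the same route as the paper: apply the high-probability bound (\ref{eq:bartlett-upper}) with \(\log(1/\delta_n)=\min\{\epsilon_n^{-1/2},\sqrt n\}\). This choice diverges slowly enough that \(\log(1/\delta_n)V_{b_{\mathrm B}}\to0\) and \(\sqrt{\log(1/\delta_n)/n}\to0\), while the constraint \(\log(1/\delta_n)<n/c_{\mathrm B}\) still holds. Your explicit check that \(r_{k^\star}(\Sigma_n)\geq b_{\mathrm B}n\) forces \(\operatorname{rank}(\Sigma_n)>n\) is a small detail the paper leaves implicit.
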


\begin{proof}
Let \(a_n=r_0(\Sigma_n)/n\) and
\(v_n=V_{b_{\mathrm B}}(\Sigma_n)\). By assumption,
\(a_n\vee v_n\to0\), and \(k_{b_{\mathrm B}}(\Sigma_n)<n/c_1\) eventually.
Choose a deterministic sequence \(L_n\to\infty\) so slowly that
\[
  L_nv_n\to0,\qquad L_n/n\to0.
\]
For example, after replacing \(v_n\) by \(v_n+n^{-1}\), one may take
\(L_n=\min\{v_n^{-1/2},n^{1/2}\}\). Apply
Theorem~\ref{thm:bartlett-regression-bounds} with
\(\delta_n=e^{-L_n}\). The bias term in (\ref{eq:bartlett-upper}) tends to
zero because \(a_n\to0\) and \(L_n/n\to0\), and its variance term tends to
zero because \(L_nv_n\to0\). Since \(\delta_n\to0\), the claimed convergence
in probability follows.
\end{proof}

\subsubsection{A fixed-probability form of the variance lower bound}
\label{app:fixed-probability-lower}

\begin{lemma}[Fixed-probability variance lower bound]
\label{lem:fixed-probability-variance-lower}
There are universal constants \(c_0,p_0>0\) for which the following holds
under the Gaussian regression model whenever
\(\operatorname{rank}(\Sigma)>n\).  In particular, the Gaussian design has
full row rank almost surely and every inverse below is well defined.
\begin{equation}
  \mathbb P\!\left\{
    \mathcal E_n^{\mathrm{reg}}(\widehat w)
    \geq c_0\tau^2\underline V_n(\Sigma)
  \right\}
  \geq p_0-o(1),
  \label{eq:fixed-probability-variance-lower}
\end{equation}
where
\[
  \underline V_n(\Sigma)
  :=
  \begin{cases}
    1,&k_{b_{\mathrm B}}(\Sigma)\geq n/c_1,\\
    V_{b_{\mathrm B}}(\Sigma),
      &k_{b_{\mathrm B}}(\Sigma)<n/c_1.
  \end{cases}
\]
Consequently, if \(\inf_n\tau_n^2>0\) and
\(\limsup_n\underline V_n(\Sigma_n)>0\), then regression benign overfitting
fails along a subsequence.  If the latter condition is strengthened to
\(\liminf_n\underline V_n(\Sigma_n)>0\), the lower bound holds persistently
along the full sequence and the regression problem is harmful according to
Definition~\ref{def:regression-benign-harmful}.
\end{lemma}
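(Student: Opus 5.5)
The plan is to convert the expectation lower bounds of Theorem~\ref{thm:bartlett-regression-bounds} into fixed-probability bounds. I would work with the variance part of the excess risk, not with the risk itself. Write \(\widehat w-w=\mathbf X^\top(\mathbf X\mathbf X^\top)^{-1}(\mathbf X w+\boldsymbol\varepsilon)-w\). Conditionally on \(\mathbf X\), this splits into a deterministic bias vector and the noise part \(\mathbf X^\top(\mathbf X\mathbf X^\top)^{-1}\boldsymbol\varepsilon\). The noise vector \(\boldsymbol\varepsilon\sim N(0,\tau^2I_n)\) is independent of \(\mathbf X\) and symmetric. Hence the cross term has conditional mean zero, and
\[
  \mathcal E_n^{\mathrm{reg}}(\widehat w)\ \ge\ \boldsymbol\varepsilon^\top C\boldsymbol\varepsilon \cdot \mathbf 1\{\text{cross term}\ge 0\}
\]
holds on an event of conditional probability at least \(1/2\), where \(C:=(\mathbf X\mathbf X^\top)^{-1}\mathbf X\Sigma\mathbf X^\top(\mathbf X\mathbf X^\top)^{-1}\). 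A cleaner route avoids the sign event: replace \(\boldsymbol\varepsilon\) by \(-\boldsymbol\varepsilon\), which leaves the law of \(\widehat w\) unchanged, and use \(\max(\|a+b\|^2,\|a-b\|^2)\ge\|b\|^2\) in the \(\Sigma\)-norm. So it suffices to lower bound \(Q:=\boldsymbol\varepsilon^\top C\boldsymbol\varepsilon\) with fixed probability, at the price of a factor \(1/2\) in \(p_0\).

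Next I would reduce \(Q\) to \(\operatorname{tr}(C)\). Conditionally on \(\mathbf X\), \(Q\) is a Gaussian quadratic form with mean \(\tau^2\operatorname{tr}(C)\) and variance \(2\tau^4\operatorname{tr}(C^2)\le 2\tau^4\operatorname{tr}(C)^2\). The Paley--Zygmund inequality then gives
\[
  \mathbb P\bigl(Q\ge \tfrac12\tau^2\operatorname{tr}(C)\,\big|\,\mathbf X\bigr)\ \ge\ \tfrac14\cdot\tfrac{1}{3}.
\]
Thus everything reduces to a high-probability, or fixed-probability, lower bound \(\operatorname{tr}(C)\ge c\,\underline V_n(\Sigma)\) over the design.

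This design step is the main obstacle. The expectation bound (\ref{eq:bartlett-variance-lower}) alone does not suffice, because \(\operatorname{tr}(C)\) could in principle have heavy upper tails. I would therefore reopen the proof of \citet[Lemmas~10, 11, 16, 17]{bartlett2020benign}, which actually establish \(\operatorname{tr}(C)\ge c\min\{1,V_{b_{\mathrm B}}\}\) with probability at least \(1-10e^{-n/c}\). In that argument, the lower bound on the trace is derived via Sherman--Morrison on each coordinate together with concentration of \(\|z_i\|\) and the operator-norm upper bound on the tail Gram matrix. I would state this high-probability version explicitly in both regimes. In the regime \(k_{b_{\mathrm B}}\ge n/c_1\), the same lemma gives \(\operatorname{tr}(C)\ge c\), which yields \(\underline V_n=1\). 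In the regime \(k<n/c_1\), it gives the two contributions \(k/n\) and \(n/R_k\). The rank condition \(\operatorname{rank}(\Sigma)>n\) guarantees that \(\mathbf X\mathbf X^\top\) is almost surely invertible. Combining the three steps yields (\ref{eq:fixed-probability-variance-lower}) with \(p_0=1/24\), say, where the \(o(1)\) term is \(e^{-n/c}\).

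The consequences are then immediate. If \(\inf\tau_n^2>0\) and \(\underline V_n(\Sigma_n)\ge v>0\) along a subsequence, then \(\mathbb P(\mathcal E\ge c_0 v\inf\tau^2)\ge p_0/2\) along that subsequence, so convergence in probability to zero fails. Under the \(\liminf\) assumption the same bound holds for all large \(n\). That is exactly (\ref{eq:regression-harmful-definition}).
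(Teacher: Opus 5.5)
Your proposal is correct and follows the paper's proof in structure. You reduce to the conditional Gaussian quadratic form $\varepsilon^\top C\varepsilon$, apply Paley--Zygmund with threshold $1/2$ to get conditional probability $1/12$, and lower-bound $\operatorname{tr}(C)$ with probability $1-o(1)$. The paper does the last step via part~1 of Bartlett et al.'s Lemma~16 at some $k_0\asymp n$ below the cutoff, where minimality gives $r_{k_0}<b_{\mathrm B}n$, or via part~2 plus Lemma~17 at $k^\star$.

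The one difference is how the bias is discarded. Your sign-flip symmetrization $\varepsilon\mapsto-\varepsilon$ with $\max(\|a+b\|^2,\|a-b\|^2)\geq\|b\|^2$ is elementary but halves $p_0$. The paper instead uses Anderson's inequality, which removes the deterministic shift at no cost.
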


\begin{proof}
Let \(\mathbf X\) be the \(n\times d\) design matrix and put
\[
  G_{\mathbf X}
  :=\Sigma^{1/2}\mathbf X^\top
    (\mathbf X\mathbf X^\top)^{-1}\varepsilon.
\]
Conditional on \(\mathbf X\), this is a centered Gaussian vector. Its squared
Euclidean norm is the prediction error contributed by the observation noise,
and
\begin{equation}
  \mathbb E\!\left[\|G_{\mathbf X}\|_2^2\mid\mathbf X\right]
  =\tau^2\operatorname{tr}\!\left[
    (\mathbf X\mathbf X^\top)^{-1}
    \mathbf X\Sigma\mathbf X^\top
    (\mathbf X\mathbf X^\top)^{-1}
  \right].
  \label{eq:conditional-noise-trace}
\end{equation}
For every centered Gaussian vector \(G\), diagonalizing its conditional
covariance and using the Gaussian fourth moment gives
\(\mathbb E\|G\|_2^4\leq3(\mathbb E\|G\|_2^2)^2\).  The
Paley--Zygmund inequality \citep[Lemma~19]{paley1932some}, applied to
\(W=\|G_{\mathbf X}\|_2^2\) with threshold parameter \(1/2\), therefore
gives
\[
  \mathbb P\!\left\{
    \|G_{\mathbf X}\|_2^2
    \geq \tfrac12
    \mathbb E[\|G_{\mathbf X}\|_2^2\mid\mathbf X]
    \,\middle|\,\mathbf X
  \right\}\geq\frac1{12}.
\]
Conditional on the design, the full prediction error has the form
\(\|h_{\mathbf X}+G_{\mathbf X}\|_2^2\), where
\(h_{\mathbf X}\) is deterministic. Anderson's inequality for centered
Gaussian measures \citep{anderson1955integral} implies that translating a centered Euclidean ball cannot
increase its probability. Therefore, the same lower-tail estimate, with the
same threshold, holds for the full prediction error.

The design-dependent trace in (\ref{eq:conditional-noise-trace}) is exactly
the variance trace in \citet{bartlett2020benign}. To make the two regimes in
their lower bound explicit, write \(k^\star=k_{b_{\mathrm B}}(\Sigma)\) and
enlarge \(c_1\), if necessary, so that it dominates the constants in their
Lemmas~16--17. If \(k^\star<n/c_1\), apply part 2 of their Lemma~16 at
\(k=k^\star\); their Lemma~17 identifies its minimum over \(l\leq k^\star\)
with a constant multiple of
\(k^\star/n+n/R_{k^\star}(\Sigma)\). Thus the trace is at least
\(cV_{b_{\mathrm B}}(\Sigma)\) with probability \(1-o(1)\).

If \(k^\star\geq n/c_1\), take
\(k_0=\lfloor n/(2c_1)\rfloor<k^\star\). By minimality of \(k^\star\),
\(r_{k_0}(\Sigma)<b_{\mathrm B}n\). Part 1 of their Lemma~16 then gives
\[
  \operatorname{tr}\!\left[
    (\mathbf X\mathbf X^\top)^{-1}
    \mathbf X\Sigma\mathbf X^\top
    (\mathbf X\mathbf X^\top)^{-1}
  \right]
  \geq \frac{k_0+1}{c b_{\mathrm B}^2n}
  \geq c'>0
\]
with probability \(1-o(1)\). This proves the design-side lower bound in both
branches. Integrating the conditional probability \(1/12\) over that event
proves (\ref{eq:fixed-probability-variance-lower}).
\end{proof}

\subsection{Threshold invariance of the full-spectrum state}
\label{app:threshold-invariance}

\begin{proof}[Proof of Lemma~\ref{lem:regression-threshold-invariance}]
The condition \(r_0(\Sigma_n)/n\to0\) is independent of the threshold. We
therefore transfer only the cutoff and post-cutoff effective-rank conditions.
Suppress the argument \(\Sigma_n\), write \(k_i=k_{b_i}(\Sigma_n)\), and
define
\[
  S_i=S_{k_i}(\Sigma_n),
  \qquad
  Q_i=\sum_{j>k_i}\lambda_j(\Sigma_n)^2.
\]
Assume first that \(\Sigma_n\in\BFS(b_1)\).

If \(b_2<b_1\), then \(k_2\leq k_1=o(n)\). Since
\(r_{k_2}\geq b_2n\),
\(\lambda_{k_2+1}\leq S_2/(b_2n)\). Consequently,
\[
  Q_2
  \leq Q_1+(k_1-k_2)\frac{S_2^2}{b_2^2n^2}.
\]
Now \(Q_1=o(S_1^2/n)\), \(S_1\leq S_2\), and \(k_1-k_2=o(n)\), so the
right-hand side is \(o(S_2^2/n)\). Hence \(R_{k_2}=S_2^2/Q_2\gg n\), and
\(\Sigma_n\in\BFS(b_2)\).

Suppose next that \(b_2>b_1\). At \(k=k_1\), abbreviate
\[
  S=S_k,\qquad Q=\sum_{j>k}\lambda_j^2,\qquad R=S^2/Q.
\]
We have \(k=o(n)\) and \(R/n\to\infty\). Let
\(a=S/(2b_2n)\), and let \(m\) be the number of tail eigenvalues strictly
larger than \(a\). Then
\[
  m a^2\leq Q,
  \qquad
  m\leq\frac{4b_2^2n^2}{R}=o(n).
\]
By Cauchy--Schwarz, the total mass of those \(m\) eigenvalues is at most
\[
  \sqrt{mQ}
  \leq \frac{2b_2nS}{R}
  =o(S).
\]
Set \(\ell=k+m=o(n)\). The remaining tail has
\(S_\ell=S(1-o(1))\), its largest eigenvalue is at most \(a\), and hence
\[
  r_\ell\geq2b_2n(1-o(1))\geq b_2n,
  \qquad
  R_\ell\geq \frac{S_\ell^2}{Q}\gg n.
\]
It follows that \(k_2\leq\ell=o(n)\). To transfer the effective-rank bound
from \(\ell\) back to the first qualifying index \(k_2\), use
\(\lambda_{k_2+1}\leq S_{k_2}/(b_2n)\):
\[
  \sum_{j>k_2}\lambda_j^2
  \leq
  \sum_{j>\ell}\lambda_j^2
  +(\ell-k_2)\frac{S_{k_2}^2}{b_2^2n^2}
  =o(S_{k_2}^2/n).
\]
Thus \(R_{k_2}\gg n\), proving
\(\BFS(b_1)\Rightarrow
\BFS(b_2)\). Interchanging \(b_1\) and \(b_2\) gives the
reverse implication.
\end{proof}

\subsection{Fusion closure}
\label{app:fusion-closure-proof}

We first record the block-matrix fact used in the proof.

\begin{lemma}[Positive-semidefinite block control]
\label{lem:psd-block-control}
If
\(
M=\left(\begin{smallmatrix}A&C\\C^\top&B\end{smallmatrix}\right)\succeq0
\),
then
\[
  \|M\|_{\mathrm{op}}
  \leq\|A\|_{\mathrm{op}}+\|B\|_{\mathrm{op}},
  \qquad
  \|M\|_{\mathrm F}^2
  \leq2\bigl(\|A\|_{\mathrm F}^2+\|B\|_{\mathrm F}^2\bigr).
\]
\end{lemma}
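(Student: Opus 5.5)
The operator-norm bound follows from the factorization of a positive-semidefinite matrix as a Gram matrix, and the Frobenius bound from controlling the off-diagonal block by the diagonal blocks. Since \(M\succeq0\), write \(M=W^\top W\) with \(W=(W_1\;W_2)\) partitioned conformally. Then \(A=W_1^\top W_1\), \(B=W_2^\top W_2\) and \(C=W_1^\top W_2\).

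\emph{Operator norm.} I will use \(\|M\|_{\mathrm{op}}=\|W\|_{\mathrm{op}}^2=\|WW^\top\|_{\mathrm{op}}\). Because \(WW^\top=W_1W_1^\top+W_2W_2^\top\), the triangle inequality gives
\[
\|M\|_{\mathrm{op}}\le\|W_1W_1^\top\|_{\mathrm{op}}+\|W_2W_2^\top\|_{\mathrm{op}}=\|A\|_{\mathrm{op}}+\|B\|_{\mathrm{op}}.
\]
As a cross-check, the same bound also follows from \(M\preceq 2(A\oplus B)\), but that only yields the weaker constant \(2\max\), so the Gram route is the one to use.

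\emph{Frobenius norm.} First,
\[
\|M\|_{\mathrm F}^2=\|A\|_{\mathrm F}^2+\|B\|_{\mathrm F}^2+2\|C\|_{\mathrm F}^2,
\]
so it suffices to show \(\|C\|_{\mathrm F}^2\le \tfrac12(\|A\|_{\mathrm F}^2+\|B\|_{\mathrm F}^2)\). The key step is
\[
\|C\|_{\mathrm F}^2=\operatorname{tr}(W_2^\top W_1W_1^\top W_2)=\operatorname{tr}\bigl((W_1W_1^\top)(W_2W_2^\top)\bigr).
\]
This is the trace inner product of two PSD matrices, so Cauchy--Schwarz in the Frobenius inner product gives
\[
\|C\|_{\mathrm F}^2\le \|W_1W_1^\top\|_{\mathrm F}\,\|W_2W_2^\top\|_{\mathrm F}.
\]
Since \(W_1W_1^\top\) and \(W_1^\top W_1=A\) have the same nonzero eigenvalues, \(\|W_1W_1^\top\|_{\mathrm F}=\|A\|_{\mathrm F}\), and likewise \(\|W_2W_2^\top\|_{\mathrm F}=\|B\|_{\mathrm F}\). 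Hence \(\|C\|_{\mathrm F}^2\le\|A\|_{\mathrm F}\|B\|_{\mathrm F}\), and AM--GM bounds this by \(\tfrac12(\|A\|_{\mathrm F}^2+\|B\|_{\mathrm F}^2)\). Substituting into the identity gives
\[
\|M\|_{\mathrm F}^2\le 2\bigl(\|A\|_{\mathrm F}^2+\|B\|_{\mathrm F}^2\bigr).
\]

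The main subtlety is the inequality \(\|C\|_{\mathrm F}^2\le\|A\|_{\mathrm F}\|B\|_{\mathrm F}\). The tempting entrywise bound \(|C_{ij}|^2\le A_{ii}B_{jj}\) only controls \(\|C\|_{\mathrm F}^2\) by \(\operatorname{tr}A\operatorname{tr}B\), which is too weak. The Gram factorization, which converts \(\|C\|_{\mathrm F}^2\) into a trace of a product of PSD matrices, is what gives the sharp constant.
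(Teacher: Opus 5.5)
Your proof is correct. It reaches the same two intermediate facts as the paper by a somewhat different route.

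For the Frobenius bound, the paper invokes the contraction factorization \(C=A^{1/2}KB^{1/2}\) from a cited block-matrix theorem. It then deduces \(\|C\|_{\mathrm F}^2\le\|A\|_{\mathrm F}\|B\|_{\mathrm F}\) in one line, the unstated step being \(\operatorname{tr}(K^\top AKB)\le\|K^\top AK\|_{\mathrm F}\|B\|_{\mathrm F}\le\|A\|_{\mathrm F}\|B\|_{\mathrm F}\). Your Gram factorization \(M=W^\top W\) is more elementary (e.g.\ \(W=M^{1/2}\)). It turns the same inequality into an explicit trace Cauchy--Schwarz between \(W_1W_1^\top\) and \(W_2W_2^\top\), with no contraction needed.

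For the operator norm, the paper uses no factorization at all. It bounds the cross term of the quadratic form by \(|x^\top Cy|\le\sqrt{x^\top Ax}\,\sqrt{y^\top By}\) and finishes with Cauchy--Schwarz in \(\mathbb R^2\), which needs only positivity of \(M\). Your passage to \(WW^\top=W_1W_1^\top+W_2W_2^\top\) is a dual version of that argument. On that side the block structure becomes a plain sum, so the cross term never has to be estimated and the triangle inequality finishes at once. What your route gains is a single factorization serving both claims.

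One small wording point: \(M\preceq2(A\oplus B)\) does not give ``the same bound''. It gives only \(\|M\|_{\mathrm{op}}\le2\max\{\|A\|_{\mathrm{op}},\|B\|_{\mathrm{op}}\}\), as you yourself note; since you do not rely on it, this is harmless.
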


\begin{proof}
The factorization theorem for positive-semidefinite block matrices
\citep[Chapter~7]{hornjohnson2013matrix} gives, for
some contraction \(K\),
\[
  C=A^{1/2}KB^{1/2}.
\]
Therefore \(\|C\|_{\mathrm F}^2
\leq\|A\|_{\mathrm F}\|B\|_{\mathrm F}\), and
\[
  \|M\|_{\mathrm F}^2
  =\|A\|_{\mathrm F}^2+\|B\|_{\mathrm F}^2+2\|C\|_{\mathrm F}^2
  \leq2(\|A\|_{\mathrm F}^2+\|B\|_{\mathrm F}^2).
\]
For \(x,y\) of compatible dimensions, positivity and Cauchy--Schwarz for the
semi-inner product induced by \(M\) imply
\[
  |x^\top Cy|
  \leq\sqrt{x^\top Ax}\sqrt{y^\top By}.
\]
Thus the quadratic form of \(M\) is at most
\[
  \left(
    \sqrt{\|A\|_{\mathrm{op}}}\|x\|
    +\sqrt{\|B\|_{\mathrm{op}}}\|y\|
  \right)^2
  \leq
  \bigl(\|A\|_{\mathrm{op}}+\|B\|_{\mathrm{op}}\bigr)
  (\|x\|^2+\|y\|^2),
\]
which proves the operator-norm bound.
\end{proof}

\begin{proof}[Proof of Theorem~\ref{thm:regression-fusion-closure}]
For \(i\in\{1,2\}\), let \(k_i=k_b(\Sigma_{i,n})=o(n)\), and let \(P_i\)
project onto the marginal eigenspace after the first \(k_i\) directions.
Define the marginal tails
\[
  T_i=P_i\Sigma_{i,n}P_i,\qquad
  S_i=\operatorname{tr}(T_i),\qquad
  \mu_i=\|T_i\|_{\mathrm{op}}.
\]
The marginal certificates give
\begin{equation}
  S_i/\mu_i\geq bn,
  \qquad
  \|T_i\|_{\mathrm F}^2=o(S_i^2/n).
  \label{eq:appendix-marginal-tail}
\end{equation}
Compress \(\Sigma_{J,n}\) to
\(\operatorname{range}(P_1)\oplus\operatorname{range}(P_2)\):
\[
  T=
  \begin{pmatrix}
    T_1&\widetilde C\\
    \widetilde C^\top&T_2
  \end{pmatrix}\succeq0.
\]
By Lemma~\ref{lem:psd-block-control},
\begin{align}
  \operatorname{tr}(T)&=S_1+S_2,\notag\\
  \|T\|_{\mathrm{op}}&\leq\mu_1+\mu_2,\notag\\
  \|T\|_{\mathrm F}^2
  &\leq2\sum_{i=1}^2\|T_i\|_{\mathrm F}^2
  =o\!\left((S_1+S_2)^2/n\right).
  \label{eq:appendix-compressed-tail}
\end{align}
In particular, \(r_0(T)\geq bn\) and \(R_0(T)/n\to\infty\).

Let \(m=k_1+k_2=o(n)\), and write the decreasing eigenvalues of
\(\Sigma_{J,n}\) and \(T\) as \(\lambda_j\) and \(\alpha_j\). Since
\(R_0(T)\leq\operatorname{rank}(T)\), the rank of \(T\) is larger than \(m\)
eventually. Poincar\'e separation for a compression of codimension \(m\)
\citep[Section~4.3]{hornjohnson2013matrix}
yields
\begin{equation}
  \lambda_j\geq\alpha_j\geq\lambda_{j+m}.
  \label{eq:poincare-separation-fusion}
\end{equation}
It follows that
\begin{align}
  S_m(\Sigma_{J,n})
  &\geq\operatorname{tr}(T)-m\alpha_1,\notag\\
  \sum_{j>m}\lambda_j^2
  &\leq\sum_j\alpha_j^2=\|T\|_{\mathrm F}^2,\notag\\
  \lambda_{m+1}&\leq\alpha_1.
  \label{eq:appendix-tail-transfer}
\end{align}
Since \(m=o(n)\) and
\(\operatorname{tr}(T)/\alpha_1\geq bn\),
\(m\alpha_1=o(\operatorname{tr}(T))\). Hence
\[
  r_m(\Sigma_{J,n})\geq bn(1-o(1)),
  \qquad
  R_m(\Sigma_{J,n})/n\to\infty.
\]
For any fixed \(b'<b/4\), this shows that
\(\ell:=k_{b'}(\Sigma_{J,n})\leq m=o(n)\) eventually. Moreover,
\(\lambda_{\ell+1}\leq S_\ell/(b'n)\), so
\[
  \sum_{j>\ell}\lambda_j^2
  \leq
  \sum_{j>m}\lambda_j^2
  +(m-\ell)\frac{S_\ell^2}{b'^2n^2}
  =o(S_\ell^2/n).
\]
For the first term, (\ref{eq:appendix-compressed-tail}) and
(\ref{eq:appendix-tail-transfer}) give
\[
  \sum_{j>m}\lambda_j^2
  =o\!\left(\frac{\operatorname{tr}(T)^2}{n}\right)
  =o\!\left(\frac{S_\ell^2}{n}\right),
\]
because \(S_\ell\geq S_m\geq
(1-o(1))\operatorname{tr}(T)\). The added finite-block term is also
\(o(S_\ell^2/n)\) because \(m-\ell=o(n)\). Thus
\(R_\ell/n\to\infty\).

It remains to verify the uncut effective rank. Since each marginal is a
principal submatrix of the joint covariance,
\(\|\Sigma_{J,n}\|_{\mathrm{op}}\geq
\max_i\|\Sigma_{i,n}\|_{\mathrm{op}}\), while
\(\operatorname{tr}(\Sigma_{J,n})
=\operatorname{tr}(\Sigma_{1,n})+\operatorname{tr}(\Sigma_{2,n})\).
Therefore
\[
  r_0(\Sigma_{J,n})
  \leq r_0(\Sigma_{1,n})+r_0(\Sigma_{2,n})
  =o(n).
\]
All three conditions defining \(\BFS(b')\) follow.
\end{proof}

\medskip
\noindent\textbf{Proof of
Corollary~\ref{cor:regression-threshold-free-closure}.}\par\smallskip
\begin{proof}
Choose any fixed \(b>4b_{\mathrm B}\).  By
Lemma~\ref{lem:regression-threshold-invariance}, each marginal
\(\BFS\) state also belongs to
\(\BFS(b)\).  Apply
Theorem~\ref{thm:regression-fusion-closure} with
\(b'=b_{\mathrm B}<b/4\).  The joint covariance then belongs to
\(\BFS(b_{\mathrm B})
=\BFS\), proving threshold-free closure.  In particular,
\(\BFS+\BFS\to
\NFS\) is impossible.  Repeating the same two-source
argument proves closure for any fixed number of sources.
\end{proof}

\subsection{Five attainable regression interaction patterns}
\label{app:transition-witnesses}

\begin{proof}
Fix \(c>2b_{\mathrm B}\), let
\[
  p=p_n=n^3,\qquad m=m_n=\lfloor cn\rfloor,\qquad
  \epsilon=n^{-3},\qquad \rho=n^{-8}.
\]
All asymptotic comparisons below are unchanged by integer rounding.

\medskip
\noindent\textbf{Step 1: spectral design.}\par\smallskip
There are six unordered Boolean patterns.  The transition
\(\BFS+\BFS\to
\NFS\) is ruled out by
Corollary~\ref{cor:regression-threshold-free-closure}. Therefore, only five
witnesses must be constructed.  The common strategy is:
\[
  \begin{aligned}
    (D,Q)&:\ \text{choose the target marginal spectra};\\
    D\oplus Q&:\ \text{compute the desired joint state};\\
    \Sigma_J&:\ \text{introduce dense dependence while preserving that state}.
  \end{aligned}
\]
The first stage uses three reusable spectral components:
\begin{itemize}
  \item \(p=n^3\) comparable tail eigenvalues give
  \(R=\Theta(n^3)\), and hence a wide benign tail;
  \item \(m=\Theta(n)\) comparable shoulder eigenvalues give only
  \(R=\Theta(n)\), which violates the third full-spectrum condition;
  \item a flat \(p\)-dimensional spectrum gives \(r_0=p=\Theta(n^3)\),
  which violates the first condition; this is only a bias-side obstruction
  and therefore requires an explicit signal before it implies task-level
  non-benignity.
\end{itemize}
Indeed, after removing a fixed spectral head, a tail
\(a^{[M]}\uplus z^{[P-M]}\) has
\begin{equation}
  S=Ma+(P-M)z,\qquad
  \mathcal Q=Ma^2+(P-M)z^2,\qquad
  R=\frac{S^2}{\mathcal Q}.
  \label{eq:regression-witness-master-calculation}
\end{equation}
For a direct sum, the tail masses \(S\) and squared masses
\(\mathcal Q\) add.  Thus the two rescuing cases add a sufficiently wide
tail to repair the defective marginal condition, while the harmful cases
preserve either the \(O(n)\)-rank shoulder or the \(r_0\)-obstruction.
Notice that two variance-side shoulders cannot rescue one another in this
direct-sum-comparable family. For arbitrary positive tails,
\[
  R(D\oplus Q)
  =\frac{(S_D+S_Q)^2}{\mathcal Q_D+\mathcal Q_Q}
  \leq R(D)+R(Q).
\]
Consequently, if both marginal tail effective ranks are \(O(n)\), the joint
tail effective rank is still \(O(n)\).  This is why the
\(\NFS+\NFS\to\BFS\) witness combines one variance-side defect with one explicit
bias-side defect instead of pretending that failure of \(r_0/n\to0\)
universally implies non-benign risk.

\medskip
\noindent\textbf{Step 2: nondegenerate correlated lifts.}\par\smallskip
Cases 1--3 and 5 use the following common dense diagonal lift.  Case 4 uses
a signal-aligned diagonal lift, specified in that case, because its
\(r_0\)-obstructed marginal must be paired with an explicit hard signal.
Let \(D=\operatorname{diag}(d_1,\ldots,d_P)\) and
\(Q=\operatorname{diag}(q_1,\ldots,q_P)\) be the target marginal
covariances, with all \(d_j,q_j>0\), and define
\begin{equation}
  L:=\sum_{j=1}^P\frac{q_j}{d_j},\qquad
  \chi:=L^{-1/2},\qquad
  a_j:=\chi\sqrt{\frac{q_j}{d_j}}.
  \label{eq:regression-dense-coupling}
\end{equation}
For all the spectra below, \(L\to\infty\). Hence \(0<\chi<1\) eventually,
and we may set
\begin{equation}
  A:=\operatorname{diag}(a_1,\ldots,a_P),
  \qquad
  \Sigma_U:=(1-\chi^2)Q.
  \label{eq:regression-dense-innovation}
\end{equation}
Every diagonal entry of \(A\) is strictly positive and
\(\|A\|_{\mathrm F}^2=\sum_j a_j^2=1\). Moreover,
\(ADA^\top+\Sigma_U=Q\), so the marginal covariances are exactly \(D\) and
\(Q\). After a simultaneous permutation of coordinates, the joint
covariance is the direct sum of the blocks
\[
  M_j=
  \begin{pmatrix}
    d_j&\chi\sqrt{d_jq_j}\\
    \chi\sqrt{d_jq_j}&q_j
  \end{pmatrix}.
\]
Congruencing each block by
\(\operatorname{diag}(d_j^{-1/2},q_j^{-1/2})\) gives the correlation block
\(\left(\begin{smallmatrix}1&\chi\\ \chi&1\end{smallmatrix}\right)\), and
hence
\begin{equation}
  (1-\chi)(D\oplus Q)
  \preceq \Sigma_J
  \preceq (1+\chi)(D\oplus Q).
  \label{eq:regression-dense-joint-comparison}
\end{equation}
Lemma~\ref{lem:uniform-spectral-comparison} therefore transfers the
full-spectrum state of the direct sum to this genuinely correlated joint
covariance. Since \(\chi=o(1)\) in every row, the relevant orders of
\(r_0\), the cutoff, and \(R_k\) are preserved as well.

\medskip
\noindent\textbf{Step 3: five explicit substitutions.}\par\smallskip
We now verify the five attainable rows.  In Cases 1-3 and 5, the marginal
state is read directly from
(\ref{eq:regression-witness-master-calculation}), the joint state is first
computed for \(D\oplus Q\), and
(\ref{eq:regression-dense-joint-comparison}) transfers that state to the
dense correlated covariance \(\Sigma_J\).  We also give a common response
for every row and verify the resulting task-level regression states.

\medskip
\noindent\textbf{Case 1:
\(\BFS+\BFS
\to\BFS\).}\par\smallskip
\noindent\emph{Target spectra.}
Take \(D=Q=\operatorname{diag}(1,\epsilon I_p)\).

\noindent\emph{Marginal checks.}
For each marginal, \(r_0=2\), \(k_b=1=o(n)\), and
\(R_1=p=n^3\), so both marginals are in \(\BFS\).

\noindent\emph{Joint check.}
The direct sum has two leading unit eigenvalues and
\(k_b(D\oplus Q)=2\), while \(R_2(D\oplus Q)=2p\).

\noindent\emph{Conclusion.}
The dense lift satisfies
\(\Sigma_J\in\BFS\).  Taking the common response to be
\(Y=\xi\), where \(\xi\sim N(0,\tau^2)\) and \(0<\tau^2<\infty\),
Lemma~\ref{lem:BFS-certifies-regression} gives benign regression for both
marginals and the joint input.  Hence this is also a task-level
\(\Breg+\Breg\to\Breg\) witness.

\medskip
\noindent\textbf{Case 2:
\(\BFS+\NFS
\to\BFS\).}\par\smallskip
\noindent\emph{Target spectra.}
Take
\[
  D=\operatorname{diag}(1,\epsilon I_p),\qquad
  Q=\operatorname{diag}(1,n^{-2}I_m,\rho I_{p-m}).
\]

\noindent\emph{Marginal checks.}
As in Case 1, \(D\in\BFS\).  For \(Q\),
\(r_0(Q)=1+o(1)\), \(k_b(Q)=1\), and
\[
  S_1(Q)=c/n+o(n^{-1}),\qquad R_1(Q)=cn(1+o(1)),
\]
so the third condition fails and \(Q\in\NFS\).

\noindent\emph{Joint check.}
After removing the two leading unit eigenvalues,
\[
  S_2(D\oplus Q)=1+o(1),\qquad
  \mathcal Q_2(D\oplus Q)=(c+1)n^{-3}(1+o(1)).
\]
Hence \(k_b(D\oplus Q)=2=o(n)\) and
\(R_2(D\oplus Q)=\Theta(n^3)\).
The comparison (\ref{eq:regression-dense-joint-comparison}) is strong enough
here to locate, rather than merely certify, the joint cutoff: the first two
eigenvalues remain \(\Theta(1)\), so \(r_0,r_1=O(1)\), whereas after them
the largest tail eigenvalue is \(\Theta(n^{-2})\) and the tail mass is
\(\Theta(1)\), so \(r_2=\Theta(n^2)\). Hence
\(k_{b_{\mathrm B}}(\Sigma_J)=2\) and
\(V_{b_{\mathrm B}}(\Sigma_J)=O(n^{-1})\).

\noindent\emph{Conclusion.}
The wide tail of \(D\) repairs the \(R=\Theta(n)\) shoulder of \(Q\), and
\(\Sigma_J\in\BFS\).  With the same pure-noise response
\(Y=\xi\), Lemma~\ref{lem:BFS-certifies-regression} gives benignity for
\(D\) and \(\Sigma_J\).  Since
\(V_{b_{\mathrm B}}(Q)=n/R_1(Q)+o(1)\to1/c\),
Lemma~\ref{lem:fixed-probability-variance-lower} rules out benignity for
\(Q\).  This realizes
\(\Breg+\Nreg\to\Breg\).

\medskip
\noindent\textbf{Case 3:
\(\BFS+\NFS
\to\NFS\).}\par\smallskip
\noindent\emph{Target spectra.}
Set \(\eta=n^{-4}\) and take
\[
  D=\operatorname{diag}(1,\eta I_p),\qquad
  Q=\operatorname{diag}(1,n^{-2}I_m,\eta I_{p-m}).
\]

\noindent\emph{Marginal checks.}
For \(D\), \(k_b(D)=1\) and \(R_1(D)=p=n^3\), so
\(D\in\BFS\).  For \(Q\),
\[
  S_1(Q)=\frac{c+1}{n}(1+o(1)),\qquad
  \mathcal Q_1(Q)=\frac{c}{n^3}(1+o(1)),
\]
and therefore \(R_1(Q)=\Theta(n)\); the third condition fails.

\noindent\emph{Joint check.}
After removing the two leading unit eigenvalues,
\[
  S_2(D\oplus Q)=\frac{c+2}{n}(1+o(1)),\qquad
  \mathcal Q_2(D\oplus Q)=\frac{c}{n^3}(1+o(1)),
\]
so \(R_2(D\oplus Q)=\Theta(n)\).
Under (\ref{eq:regression-dense-joint-comparison}), the two leading
eigenvalues remain \(\Theta(1)\), giving \(r_0,r_1=O(1)\), while at
\(k=2\) the tail mass is \(\Theta(n^{-1})\), its largest eigenvalue is
\(\Theta(n^{-2})\), and
\(r_2=(c+2+o(1))n>b_{\mathrm B}n\). Thus the actual joint cutoff is two and
\(V_{b_{\mathrm B}}(\Sigma_J)=\Theta(1)\), not merely an unspecified
quantity associated with the same \(\NFS\) state.

\noindent\emph{Conclusion.}
The \(D\)-tail now has too little mass to repair the shoulder; the third
condition remains false and
\(\Sigma_J\in\NFS\).  For \(Y=\xi\), the first marginal
is benign by Lemma~\ref{lem:BFS-certifies-regression}.  Both \(Q\) and
\(\Sigma_J\) have \(V_{b_{\mathrm B}}=\Theta(1)\), so
Lemma~\ref{lem:fixed-probability-variance-lower} makes their prediction
risks non-benign.  Thus the same construction realizes
\(\Breg+\Nreg\to\Nreg\).

\medskip
\noindent\textbf{Case 4:
\(\NFS+\NFS
\to\BFS\).}\par\smallskip
\noindent\emph{Target spectra.}
Take
\[
  D=\operatorname{diag}(1,\epsilon I_m,\rho I_{p-m}),
  \qquad Q=\epsilon I_{p+1}.
\]

For this case set
\begin{equation}
  A=\operatorname{diag}\!\left(
    \sqrt{\epsilon-\rho},\sqrt\epsilon I_p
  \right),
  \qquad
  \Sigma_U=\operatorname{diag}\!\left(
    \rho,(\epsilon-\epsilon^2)I_m,
    (\epsilon-\epsilon\rho)I_{p-m}
  \right).
  \label{eq:regression-bias-rescue-coupling}
\end{equation}
All diagonal entries of \(A\) are positive,
\(\|A\|_{\mathrm F}^2=1+\epsilon-\rho=1+o(1)\), and
\(ADA^\top+\Sigma_U=Q\).

\noindent\emph{Marginal checks.}
For \(D\), \(r_0(D)=1+o(1)\), \(k_b(D)=1\), and
\(R_1(D)=cn(1+o(1))\), so the third condition fails.  For \(Q\),
\(r_0(Q)=p=n^3\), so the first condition fails.

\noindent\emph{Joint check.}
The signal-coordinate block of \(\Sigma_J\) is
\begin{equation}
  G_n=
  \begin{pmatrix}
    1&\sqrt{\epsilon-\rho}\\
    \sqrt{\epsilon-\rho}&\epsilon
  \end{pmatrix}.
  \label{eq:regression-bias-rescue-signal-block}
\end{equation}
Its eigenvalues satisfy \(\lambda_+(G_n)=1+o(1)\) and
\(\lambda_-(G_n)=\rho(1+o(1))\).  Each shoulder block has eigenvalues
\(\epsilon(1\pm\sqrt\epsilon)\), while every remaining nuisance block has
one eigenvalue \(\epsilon(1+o(1))\) and one eigenvalue
\(\rho(1+o(1))\).  Hence, after deleting \(\lambda_+(G_n)\),
\[
  S_1(\Sigma_J)=1+o(1),\qquad
  \sum_{j>1}\lambda_j(\Sigma_J)^2=n^{-3}(1+o(1)).
\]
It follows that \(k_{b_{\mathrm B}}(\Sigma_J)=1\) and
\(R_1(\Sigma_J)=n^3(1+o(1))\), so
\(\Sigma_J\in\BFS\).

\noindent\emph{A common hard signal for the two marginals.}
Let \(v_n=(v_{1,n},v_{2,n})\) be a unit leading eigenvector of \(G_n\),
chosen with positive entries, and put
\begin{equation}
  \gamma_n=v_{2,n}e_1,\qquad
  \beta_n=
  \bigl(v_{1,n}+\sqrt{\epsilon-\rho}\,v_{2,n}\bigr)e_1.
  \label{eq:regression-bias-rescue-signal}
\end{equation}
For the heterogeneous model \(X_2=AX_1+U\), take the single response
\[
  Y=\beta_n^\top X_1+\gamma_n^\top U+\xi,
  \qquad \xi\sim N(0,\tau^2),\quad 0<\tau^2<\infty.
\]
Its coefficient in the fused coordinates is
\((\beta_n-A^\top\gamma_n,\gamma_n)=(v_{1,n}e_1,v_{2,n}e_1)\).
It is therefore a unit vector in the leading eigendirection of
\(\Sigma_J\).  Lemma~\ref{lem:BFS-certifies-regression} applies and yields
benign regression for the fused input.

For the first marginal, the conditional noise variance is at least
\(\tau^2\), while \(R_1(D)=cn(1+o(1))\).  Thus
Lemma~\ref{lem:fixed-probability-variance-lower} shows that its prediction
risk does not converge to zero.

For the second marginal, \(Q=\epsilon I_P\) with \(P=p+1\).  The Gaussian
conditional-mean formula \citep[Chapter~2]{anderson2003multivariate} gives
the population coefficient
\[
  w_{2,n}=\epsilon^{-1}\lambda_+(G_n)v_{2,n}e_1,
  \qquad
  \epsilon\|w_{2,n}\|_2^2\longrightarrow1.
\]
Let \(\Pi_{2,n}\) be the Euclidean projection onto the row space of the
second-marginal training design.  Because \(Q\) is isotropic, the null-space
bias and the fitted-noise component are orthogonal also in prediction norm;
hence, deterministically conditional on the design,
\begin{equation}
  \mathcal E_n^{\mathrm{reg}}(\widehat w_{2,n})
  \geq
  \epsilon\|(I-\Pi_{2,n})w_{2,n}\|_2^2.
  \label{eq:regression-bias-rescue-nullspace-lower}
\end{equation}
To justify the random-subspace step explicitly, write the isotropic design
as \(\sqrt\epsilon Z\), where \(Z\) has iid standard Gaussian entries.
Right-orthogonal invariance of the multivariate normal law
\citep[Chapter~2]{anderson2003multivariate} makes its row space uniform on
the Grassmannian.  Equivalently,
\(U^\top\mathbb E[\Pi_{2,n}]U=\mathbb E[\Pi_{2,n}]\) for every orthogonal
\(U\), so \(\mathbb E[\Pi_{2,n}]=aI_P\).  Taking traces gives
\(a=n/P\), and hence
\(\mathbb E\|\Pi_{2,n}w_{2,n}\|_2^2/
\|w_{2,n}\|_2^2=n/P\to0\).  Markov's inequality and
(\ref{eq:regression-bias-rescue-nullspace-lower}) therefore imply, for
every fixed \(a<1\),
\[
  \mathbb P\!\left\{
    \mathcal E_n^{\mathrm{reg}}(\widehat w_{2,n})\geq a
  \right\}\longrightarrow1.
\]
Both marginals are non-benign whereas their fusion is
benign, establishing the task-level transition
\(\Nreg+\Nreg\to\Breg\).

\medskip
\noindent\textbf{Case 5:
\(\NFS+\NFS
\to\NFS\).}\par\smallskip
\noindent\emph{Target spectra.}
Take
\[
  D=Q=\operatorname{diag}(1,n^{-2}I_m,\rho I_{p-m}).
\]

\noindent\emph{Marginal checks.}
For either marginal, \(r_0=1+o(1)\), \(k_b=1\), and
\[
  S_1=c/n+o(n^{-1}),\qquad R_1=cn(1+o(1)).
\]
Thus both marginals violate the third condition.

\noindent\emph{Joint check.}
For the comparison direct sum, \(k_b(D\oplus Q)=2\) and
\(R_2(D\oplus Q)=2cn(1+o(1))\).  The dense comparison
(\ref{eq:regression-dense-joint-comparison}) transfers this
\(\Theta(n)\) post-cutoff effective rank to \(\Sigma_J\). More explicitly,
the two leading eigenvalues stay \(\Theta(1)\), so \(r_0,r_1=O(1)\), while
the remaining tail has mass \(\Theta(n^{-1})\), largest eigenvalue
\(\Theta(n^{-2})\), and \(r_2=(2c+o(1))n>b_{\mathrm B}n\). Hence the actual
joint cutoff is two and \(V_{b_{\mathrm B}}(\Sigma_J)=\Theta(1)\).

\noindent\emph{Conclusion.}
The third-condition obstruction survives the dense coupling, and
\(\Sigma_J\in\NFS\).  With \(Y=\xi\) and fixed
\(\tau^2>0\), Lemma~\ref{lem:fixed-probability-variance-lower} applies to
both marginals and the joint input.  All three tasks are therefore
non-benign, realizing
\(\Nreg+\Nreg\to\Nreg\). This completes all attainable
spectral and task-level rows.
\end{proof}

\subsection{Forward-reverse spectral equivalence}
\label{app:operator-equivalence-proof}

We begin with three deterministic lemmas. Eigenvalue sequences below are
arranged in decreasing order and padded with zeros when the matrix dimensions
differ.

\begin{lemma}[Candidate-cutoff characterization]
\label{lem:candidate-cutoff-characterization}
Fix \(b>0\). A covariance sequence \(M_n\) belongs to
\(\BFS\) if and only if \(r_0(M_n)=o(n)\) and there is a
sequence \(\ell_n=o(n)\) such that
\begin{equation}
  r_{\ell_n}(M_n)\geq bn,
  \qquad
  R_{\ell_n}(M_n)/n\to\infty.
  \label{eq:candidate-cutoff-characterization}
\end{equation}
\end{lemma}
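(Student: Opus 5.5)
The plan is to prove both directions separately, reusing the transfer arguments from the proof of Lemma~\ref{lem:regression-threshold-invariance}. By that lemma, membership in \(\BFS=\BFS(b_{\mathrm B})\) is equivalent to membership in \(\BFS(b)\) for the fixed \(b\) in the statement. We may therefore work with the cutoff \(k_b\) throughout.

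\emph{Necessity.} This direction is immediate. If \(M_n\in\BFS(b)\), take \(\ell_n=k_b(M_n)\), which is finite eventually and \(o(n)\). By definition of the cutoff, \(r_{\ell_n}(M_n)\geq bn\). The certificate gives \(R_{\ell_n}(M_n)/n\to\infty\) and \(r_0(M_n)=o(n)\).

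\emph{Sufficiency.} Suppose \(r_0=o(n)\) and \(\ell_n=o(n)\) satisfy the candidate condition (\ref{eq:candidate-cutoff-characterization}). Since \(r_{\ell_n}\geq bn\), minimality gives \(k:=k_b(M_n)\leq\ell_n=o(n)\). In particular \(k\) is finite and \(k/n\to0\). It remains to show \(R_k/n\to\infty\) at the true cutoff \(k\leq\ell_n\), and I follow the back-transfer step of the threshold-invariance proof.
\begin{itemize}
  \item First, the tail mass only grows when moving left, so \(S_k\geq S_{\ell_n}\).
  \item Second, by the definition of \(k\), \(r_k\geq bn\). Hence every eigenvalue with index \(j>k\) satisfies \(\lambda_j\leq\lambda_{k+1}\leq S_k/(bn)\).
  \item Splitting the squared tail at \(\ell_n\) then gives
  \[
    \sum_{j>k}\lambda_j^2\leq\sum_{j>\ell_n}\lambda_j^2+(\ell_n-k)\frac{S_k^2}{b^2n^2}.
  \]
  \item The first term equals \(S_{\ell_n}^2/R_{\ell_n}=o(S_{\ell_n}^2/n)\), which is \(o(S_k^2/n)\).
  \item The second term is \(o(n)\cdot S_k^2/n^2=o(S_k^2/n)\).
\end{itemize}
Hence \(R_k=S_k^2/\sum_{j>k}\lambda_j^2\gg n\), and all three conditions of Definition~\ref{def:threshold-specific-full-spectrum} hold for threshold \(b\). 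Lemma~\ref{lem:regression-threshold-invariance} then returns membership in \(\BFS\).

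The only delicate point is degenerate conventions. For example, \(S_k=0\) or a vanishing denominator would make \(R\) undefined. I handle these by noting that \(r_{\ell_n}\geq bn>0\) forces \(\lambda_{\ell_n+1}>0\), hence \(S_k>0\) and \(\sum_{j>k}\lambda_j^2>0\). Beyond this bookkeeping there is no real obstacle: the key step is the one-line bound \(\lambda_{k+1}\leq S_k/(bn)\), which controls the finite block between the true and the candidate cutoff.
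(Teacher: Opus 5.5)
Your proof is correct and follows the same route as the paper. Necessity is obtained by taking \(\ell_n=k_b(M_n)\). Sufficiency uses \(k_b(M_n)\leq\ell_n=o(n)\) together with the back-transfer bound on the squared tail via \(\lambda_{k+1}\leq S_k/(bn)\). Your explicit appeal to threshold invariance to pass between \(\BFS(b)\) and \(\BFS=\BFS(b_{\mathrm B})\) makes precise a step that the paper's proof leaves implicit.
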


\begin{proof}
Necessity follows by taking \(\ell_n=k_b(M_n)\). Conversely, let
\(k_n=k_b(M_n)\). Then \(k_n\leq\ell_n=o(n)\). Write
\(S_k=S_k(M_n)\) and \(Q_k=\sum_{j>k}\lambda_j(M_n)^2\). Since
\(\lambda_{k_n+1}\leq S_{k_n}/(bn)\),
\[
  Q_{k_n}
  \leq Q_{\ell_n}
  +(\ell_n-k_n)\frac{S_{k_n}^2}{b^2n^2}
  =o(S_{k_n}^2/n).
\]
Here \(Q_{\ell_n}=o(S_{\ell_n}^2/n)\) and
\(S_{\ell_n}\leq S_{k_n}\). Thus \(R_{k_n}/n\to\infty\), proving all three
conditions in Definition~\ref{def:threshold-specific-full-spectrum}.
\end{proof}

\begin{lemma}[Uniform spectral comparison]
\label{lem:uniform-spectral-comparison}
Suppose that, for fixed \(0<c\leq C<\infty\),
\begin{equation}
  c\,\lambda_j(M_n)
  \leq\lambda_j(N_n)
  \leq C\,\lambda_j(M_n)
  \quad\text{for every }j,n.
  \label{eq:uniform-ordered-spectrum-comparison}
\end{equation}
Then \(M_n\in\BFS\) if and only if
\(N_n\in\BFS\).
\end{lemma}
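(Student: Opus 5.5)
By symmetry (the hypothesis is symmetric under swapping \(M_n\) and \(N_n\), with constants \(1/C\le 1/c\)) it suffices to show \(M_n\in\BFS\Rightarrow N_n\in\BFS\). Threshold invariance (Lemma~\ref{lem:regression-threshold-invariance}) lets us work with any convenient fixed threshold. Lemma~\ref{lem:candidate-cutoff-characterization} then reduces the task to three checks: \(r_0(N_n)=o(n)\), and a candidate index \(\ell_n=o(n)\) at which \(r_{\ell_n}(N_n)\geq b'n\) and \(R_{\ell_n}(N_n)/n\to\infty\), for some fixed \(b'>0\).

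\emph{Step 1 (ordered comparison of the basic quantities).} Because the comparison holds index by index, for every \(k\) we have
\[
  cS_k(M_n)\leq S_k(N_n)\leq CS_k(M_n),
  \qquad
  c^2\sum_{j>k}\lambda_j(M_n)^2\leq\sum_{j>k}\lambda_j(N_n)^2\leq C^2\sum_{j>k}\lambda_j(M_n)^2 .
\]
Dividing gives
\[
  \frac{c}{C}\,r_k(M_n)\leq r_k(N_n)\leq\frac{C}{c}\,r_k(M_n),
  \qquad
  \frac{c^2}{C^2}\,R_k(M_n)\leq R_k(N_n)\leq\frac{C^2}{c^2}\,R_k(M_n).
\]
The same bounds hold with the roles reversed. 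Vanishing eigenvalues are handled by the same conventions, since \(\lambda_j(N_n)=0\) if and only if \(\lambda_j(M_n)=0\).

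\emph{Step 2 (transfer).} Take \(k=0\): \(r_0(N_n)\leq (C/c)\,r_0(M_n)=o(n)\). Next let \(\ell_n=k_b(M_n)=o(n)\), where \(b=b_{\mathrm B}\). Then
\[
  r_{\ell_n}(N_n)\geq (c/C)\,b\,n,
  \qquad
  R_{\ell_n}(N_n)\geq (c/C)^2 R_{\ell_n}(M_n)\gg n .
\]
Lemma~\ref{lem:candidate-cutoff-characterization} applied at the threshold \(b'=cb/C\) gives \(N_n\in\BFS(b')\). Lemma~\ref{lem:regression-threshold-invariance} then gives \(N_n\in\BFS(b_{\mathrm B})=\BFS\).

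\emph{Main obstacle.} The only subtle point is that the cutoff index itself is not preserved: \(k_b(N_n)\) can differ from \(k_b(M_n)\), because the ratio \(r_k\) moves by up to a constant factor. This is exactly what the candidate-cutoff lemma and threshold invariance absorb. We avoid comparing cutoffs directly and instead use \(k_b(M_n)\) as a candidate for \(N_n\) at a shrunken threshold. We also check the edge case \(k_b(M_n)=+\infty\); this is excluded for \(M_n\in\BFS\) by convention, so it does not arise.
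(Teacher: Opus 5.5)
Your proof is correct and follows essentially the same route as the paper: you compare \(S_k\), \(r_k\), \(R_k\) index by index, then combine the candidate-cutoff lemma with threshold invariance, and you handle the reverse direction by symmetry. The only cosmetic difference is where threshold invariance enters. The paper applies it on the \(M_n\) side, picking a candidate cutoff at the inflated threshold \((C/c)b\) so that \(N_n\) passes directly at threshold \(b\). You instead use \(k_b(M_n)\) as the candidate for \(N_n\) at the shrunken threshold \(cb/C\), and invoke threshold invariance afterwards.
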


\begin{proof}
The uncut effective ranks are comparable within the factor \(C/c\).
Assume \(M_n\in\BFS\). By threshold invariance, choose a
candidate cutoff \(\ell_n=o(n)\) satisfying
\(r_{\ell_n}(M_n)\geq (C/c)bn\) and
\(R_{\ell_n}(M_n)\gg n\). Then
\[
  r_{\ell_n}(N_n)\geq(c/C)r_{\ell_n}(M_n)\geq bn,
  \qquad
  R_{\ell_n}(N_n)\geq(c/C)^2R_{\ell_n}(M_n)\gg n.
\]
Lemma~\ref{lem:candidate-cutoff-characterization} gives
\(N_n\in\BFS\). The reverse implication is symmetric.
\end{proof}

\begin{lemma}[Sum--direct-sum equivalence]
\label{lem:sum-direct-sum-equivalence}
For arbitrary positive-semidefinite matrices \(P_n,Q_n\) of the same
dimension,
\begin{equation}
  P_n+Q_n\in\BFS
  \quad\Longleftrightarrow\quad
  P_n\oplus Q_n\in\BFS.
  \label{eq:sum-direct-sum-equivalence}
\end{equation}
\end{lemma}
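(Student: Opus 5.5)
The plan is to show that each side supplies a candidate cutoff of size \(o(n)\) for the other, and then to conclude with Lemma~\ref{lem:candidate-cutoff-characterization} together with threshold invariance (Lemma~\ref{lem:regression-threshold-invariance}). The \(r_0\) condition needs no cutoff. Traces agree, \(\operatorname{tr}(P_n+Q_n)=\operatorname{tr}(P_n\oplus Q_n)\). For operator norms, \(\|P_n\oplus Q_n\|_{\mathrm{op}}=\max(\|P_n\|_{\mathrm{op}},\|Q_n\|_{\mathrm{op}})\leq\|P_n+Q_n\|_{\mathrm{op}}\leq 2\|P_n\oplus Q_n\|_{\mathrm{op}}\), where the lower bound uses \(P_n+Q_n\succeq P_n,Q_n\). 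Hence the two \(r_0\) values agree within a factor \(2\). The same two tools drive both cutoff arguments. For PSD \(P,Q\), \(\operatorname{tr}(PQ)\geq0\), which gives
\[
\|P\|_{\mathrm F}^2+\|Q\|_{\mathrm F}^2\leq\|P+Q\|_{\mathrm F}^2\leq 2(\|P\|_{\mathrm F}^2+\|Q\|_{\mathrm F}^2).
\]
We also use Weyl/Poincaré interlacing under finite-rank PSD perturbations and compressions, as in the proof of Theorem~\ref{thm:regression-fusion-closure}.

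\emph{Direct sum \(\Rightarrow\) sum.} Fix a large threshold \(b\). By threshold invariance, \(M=P_n\oplus Q_n\) has \(\ell=k_b(M)=o(n)\) with \(R_\ell(M)\gg n\). The eigenvectors of a block-diagonal matrix can be chosen block-aligned. Hence \(M=H\oplus T\), where the head \(H=H_P\oplus H_Q\) has rank \(\ell\) and the tail is \(T=T_P\oplus T_Q\). Applying \(E=(I\;I)\) gives \(P_n+Q_n=E H E^\top+(T_P+T_Q)\). The first term is PSD of rank \(\le\ell\) with trace \(\operatorname{tr}H\). Write \(T':=T_P+T_Q\). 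It satisfies \(\operatorname{tr}T'=\operatorname{tr}T\), \(\|T'\|_{\mathrm{op}}\leq2\|T\|_{\mathrm{op}}\) and \(\|T'\|_{\mathrm F}^2\leq2\|T\|_{\mathrm F}^2\). We now transfer to \(P_n+Q_n\) at index \(\ell\).
\begin{itemize}
\item Weyl's inequality gives \(\lambda_{j+\ell}(P_n+Q_n)\leq\lambda_j(T')\). Therefore \(\lambda_{\ell+1}(P_n+Q_n)\leq2\|T\|_{\mathrm{op}}\) and \(\sum_{j>\ell}\lambda_j(P_n+Q_n)^2\leq2\|T\|_{\mathrm F}^2\).
\item Ky Fan's inequality bounds the top-\(\ell\) eigenvalue sum by \(\operatorname{tr}H+\ell\|T'\|_{\mathrm{op}}\). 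Hence \(S_\ell(P_n+Q_n)\geq\operatorname{tr}T-2\ell\|T\|_{\mathrm{op}}=(1-o(1))\operatorname{tr}T\), using \(\operatorname{tr}T/\|T\|_{\mathrm{op}}\geq bn\) and \(\ell=o(n)\).
\end{itemize}
Consequently \(r_\ell(P_n+Q_n)\geq(b/2)(1-o(1))n\) and \(R_\ell(P_n+Q_n)\gg n\). Lemma~\ref{lem:candidate-cutoff-characterization} then finishes this direction.

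\emph{Sum \(\Rightarrow\) direct sum.} Let \(\ell=k_b(P_n+Q_n)=o(n)\) and let \(\Pi\) project onto the complement of the top \(\ell\) eigenvectors of \(P_n+Q_n\). Compress \(P_n\oplus Q_n\) to \(\operatorname{range}(\Pi)\oplus\operatorname{range}(\Pi)\), a subspace of codimension \(2\ell\). The compression is \(\Pi P_n\Pi\oplus\Pi Q_n\Pi\). Its trace is \(\operatorname{tr}(\Pi(P_n+Q_n)\Pi)=S_\ell(P_n+Q_n)\). Its operator norm is at most \(\|\Pi(P_n+Q_n)\Pi\|_{\mathrm{op}}=\lambda_{\ell+1}(P_n+Q_n)\), since each block is dominated by \(\Pi(P_n+Q_n)\Pi\). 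By the \(\operatorname{tr}(PQ)\geq0\) inequality applied to the compressed blocks, its squared Frobenius norm is at most \(\sum_{j>\ell}\lambda_j(P_n+Q_n)^2\). Poincaré separation \eqref{eq:poincare-separation-fusion}, with \(m=2\ell=o(n)\), then gives the three tail bounds exactly as in \eqref{eq:appendix-tail-transfer}:
\[
r_{2\ell}(P_n\oplus Q_n)\geq bn(1-o(1)),\qquad R_{2\ell}(P_n\oplus Q_n)\gg n.
\]
Again Lemma~\ref{lem:candidate-cutoff-characterization} concludes.

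The main obstacle is the first direction. There \(E^\top E\) is not bounded below, so we cannot simply invoke Lemma~\ref{lem:uniform-spectral-comparison}. The fix is the block alignment of the direct-sum eigenvectors, which preserves the tail trace under \(E\). The remaining cost is the loss of the factor \(2\) in \(r_\ell\), and this is absorbed by starting from a larger threshold via threshold invariance.
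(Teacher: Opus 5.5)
Your proof is correct and follows essentially the same route as the paper: \(r_0\) is compared through equal traces and the factor-\(2\) operator-norm sandwich. The direct-sum-to-sum direction uses a block-aligned head/tail split with Weyl and Ky Fan, and the converse uses a codimension-\(2\ell\) tail argument, with both directions closed by the candidate-cutoff lemma. The only cosmetic difference is in the second direction: you compress \(P_n\oplus Q_n\) to \(\operatorname{range}(\Pi)\oplus\operatorname{range}(\Pi)\) and apply Poincar\'e separation. The paper instead writes \(P_n\oplus Q_n\) as a rank-\(\le 2k\) head plus \(P_n^{1/2}\Pi P_n^{1/2}\oplus Q_n^{1/2}\Pi Q_n^{1/2}\) and uses Weyl. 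These are equivalent, because \(P_n^{1/2}\Pi P_n^{1/2}\) and \(\Pi P_n\Pi\) have the same nonzero spectrum.
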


\begin{proof}
Write \(S=P+Q\) and \(G=P\oplus Q\), suppressing \(n\).
Their traces coincide, and
\begin{equation}
  \max\{\|P\|_{\mathrm{op}},\|Q\|_{\mathrm{op}}\}
  \leq\|S\|_{\mathrm{op}}
  \leq2\max\{\|P\|_{\mathrm{op}},\|Q\|_{\mathrm{op}}\}.
  \label{eq:sum-direct-sum-r0-comparison}
\end{equation}
Hence \(r_0(S)=o(n)\) if and only if \(r_0(G)=o(n)\).

First suppose \(G\in\BFS\). By threshold invariance, take
its cutoff \(k=o(n)\) at threshold \(4b\), and split the spectral head and
tail blockwise as \(P=P_H+P_T\) and \(Q=Q_H+Q_T\). The total head rank is at
most \(k\). For \(T=P_T+Q_T\),
\begin{align}
  \operatorname{tr}(T)&=S_k(G),&
  \|T\|_{\mathrm{op}}&\leq2\lambda_{k+1}(G),\notag\\
  \|T\|_{\mathrm F}^2
  &\leq2\sum_{j>k}\lambda_j(G)^2.
  \label{eq:direct-sum-to-sum-tail}
\end{align}
Thus \(r_0(T)\geq2bn\) and \(R_0(T)\gg n\).
Adding the positive-semidefinite head \(H=P_H+Q_H\) of rank at most \(k\)
preserves a wide tail after \(k\) directions.  Indeed, Weyl's inequalities
and Ky Fan's variational principle
\citep[Sections~4.3 and~4.4]{hornjohnson2013matrix} give
\begin{align}
  S_k(H+T)&\geq\operatorname{tr}(T)-k\|T\|_{\mathrm{op}},\notag\\
  \lambda_{k+1}(H+T)&\leq\|T\|_{\mathrm{op}},&
  \sum_{j>k}\lambda_j(H+T)^2&\leq\|T\|_{\mathrm F}^2.
  \label{eq:low-rank-head-tail-transfer}
\end{align}
For clarity, the first inequality follows because the sum of the largest
\(k\) eigenvalues of \(H+T\) is at most
\(\operatorname{tr}(H)+k\|T\|_{\mathrm{op}}\).  Weyl's rank-\(k\)
interlacing gives
\(\lambda_{k+j}(H+T)\leq\lambda_j(T)\) for every \(j\geq1\), which yields
the other two inequalities after taking \(j=1\) and summing squares.
Because \(k=o(n)\) and \(r_0(T)\gtrsim n\), the first right-hand side is
\(\operatorname{tr}(T)(1-o(1))\). Therefore \(k\) is a valid candidate
cutoff for \(S\), and
Lemma~\ref{lem:candidate-cutoff-characterization} yields
\(S\in\BFS\).

Conversely, suppose \(S\in\BFS\), and take its cutoff
\(k=o(n)\) at threshold \(2b\). Let \(\Pi\) project onto the span of the
eigenvectors of \(S\) after its first \(k\) directions. Define
\[
  P_T=P^{1/2}\Pi P^{1/2},
  \qquad
  Q_T=Q^{1/2}\Pi Q^{1/2}.
\]
The complementary heads have total rank at most \(2k\). The nonzero spectra
of \(P_T\) and \(\Pi P\Pi\) coincide, and similarly for \(Q_T\). Positivity
therefore gives
\begin{align}
  \operatorname{tr}(P_T)+\operatorname{tr}(Q_T)&=S_k(S),\notag\\
  \max\{\|P_T\|_{\mathrm{op}},\|Q_T\|_{\mathrm{op}}\}
  &\leq\lambda_{k+1}(S),\notag\\
  \|P_T\|_{\mathrm F}^2+\|Q_T\|_{\mathrm F}^2
  &\leq\|\Pi S\Pi\|_{\mathrm F}^2
   =\sum_{j>k}\lambda_j(S)^2.
  \label{eq:sum-to-direct-sum-tail}
\end{align}
Hence \(P_T\oplus Q_T\) has effective ranks at least those of the tail of
\(S\).  Moreover,
\[
  G=
  \bigl(P^{1/2}(I-\Pi)P^{1/2}\bigr)
  \oplus
  \bigl(Q^{1/2}(I-\Pi)Q^{1/2}\bigr)
  +(P_T\oplus Q_T),
\]
and the first summand has rank at most \(2k\).  Applying
(\ref{eq:low-rank-head-tail-transfer}) with this head rank shows that
\(2k=o(n)\) is a candidate cutoff for \(G\). The
candidate-cutoff lemma completes the proof.
\end{proof}

\begin{proof}[Proof of Theorem~\ref{thm:regression-forward-reverse-equivalence}]
Let \(G_n=D_n\oplus\Sigma_{U,n}\). The triangular factorization
\begin{equation}
  \Sigma_{J,n}
  =L_{A,n}G_nL_{A,n}^\top,
  \qquad
  L_{A,n}=
  \begin{pmatrix}I&0\\A_n&I\end{pmatrix},
  \qquad
  L_{A,n}^{-1}=
  \begin{pmatrix}I&0\\-A_n&I\end{pmatrix}
  \label{eq:joint-triangular-factorization}
\end{equation}
shows that \(\Sigma_{J,n}\) and \(G_n\) have uniformly comparable ordered
eigenvalues whenever \(\|A_n\|_{\mathrm{op}}=O(1)\).  More explicitly, the
Courant--Fischer principle \citep[Section~4.2]{hornjohnson2013matrix} gives
\[
  \|L_{A,n}^{-1}\|_{\mathrm{op}}^{-2}\lambda_j(G_n)
  \leq \lambda_j(\Sigma_{J,n})
  \leq \|L_{A,n}\|_{\mathrm{op}}^2\lambda_j(G_n),
\]
and both norms are uniformly bounded. Lemma~\ref{lem:uniform-spectral-comparison}
therefore gives
\begin{equation}
  \Sigma_{J,n}\in\BFS
  \quad\Longleftrightarrow\quad
  D_n\oplus\Sigma_{U,n}\in\BFS.
  \label{eq:joint-direct-sum-equivalence}
\end{equation}

Under (\ref{eq:forward-map-conditioning}), the nonzero eigenvalues of
\(A_nD_nA_n^\top\) are those of
\(D_n^{1/2}A_n^\top A_nD_n^{1/2}\). The latter matrix lies between
\(c_AD_n\) and \(C_AD_n\) in Loewner order. After padding by zeros,
Lemma~\ref{lem:uniform-spectral-comparison} gives
\[
  D_n\oplus\Sigma_{U,n}\in\BFS
  \quad\Longleftrightarrow\quad
  (A_nD_nA_n^\top)\oplus\Sigma_{U,n}
  \in\BFS.
\]
Finally, Lemma~\ref{lem:sum-direct-sum-equivalence} identifies the last state
with that of
\(A_nD_nA_n^\top+\Sigma_{U,n}=\Sigma_{2,n}\). This proves the first
equivalence.

For the reverse direction, define \(B_n\) by
(\ref{eq:reverse-conditional-mean-map}) and let
\[
  V_n:=X_1-B_nX_2.
\]
Joint Gaussianity and the conditional-mean formula
\citep[Chapter~2]{anderson2003multivariate} give \(V_n\perp X_2\) and
\[
  X_1=B_nX_2+V_n,\qquad
  \operatorname{Cov}(V_n)
  =D_n-B_n\Sigma_{2,n}B_n^\top\succeq0.
\]
After permuting the two joint coordinate blocks, this is the same triangular
model with forward map \(B_n\). Condition
(\ref{eq:reverse-map-conditioning}) therefore allows the first part of the
proof to be applied with \(X_2\) as the shared variable and \(X_1\) as the
second source. It yields
\(\Sigma_{J,n}\in\BFS\) if and only if
\(\Sigma_{1,n}\in\BFS\). Combining the two conclusions
proves the three-way equivalence.
\end{proof}

\subsection{Case study I: vanishing innovation energy}
\label{app:small-energy-proof}

\begin{proof}[Verification of (\ref{eq:small-energy-spectral-states})]
Let \(p=n^3\), \(m=\lfloor cn\rfloor\), and use
(\ref{eq:small-energy-construction}). For the first marginal,
\[
  r_0(\Sigma_1)=1+n^{-1},\qquad
  S_1(\Sigma_1)=n^{-1},\qquad
  r_1(\Sigma_1)=R_1(\Sigma_1)=p=n^3.
\]
Thus \(k_{b_{\mathrm B}}(\Sigma_1)=1\) eventually and
\(\Sigma_1\in\BFS\).

The second marginal is diagonal. Its eigenvalues consist of
\(1+n^{-8}\), followed by \(m\) copies of \(n^{-2}+n^{-4}\), and
\(p-m\) copies of \(n^{-4}+n^{-8}\). Hence
\begin{align}
  r_0(\Sigma_2)&=1+o(1),\notag\\
  S_1(\Sigma_2)&=(c+1)n^{-1}(1+o(1)),\notag\\
  \sum_{j>1}\lambda_j(\Sigma_2)^2
  &=cn^{-3}(1+o(1)).
  \label{eq:small-energy-second-marginal}
\end{align}
Since \(r_1(\Sigma_2)=(c+1)n(1+o(1))>b_{\mathrm B}n\),
the cutoff equals \(1\), and
\[
  R_1(\Sigma_2)
  =\frac{(c+1)^2}{c}\,n(1+o(1)).
\]
Thus \(\Sigma_2\in\NFS\).

For the joint covariance, a coordinate with
\(\operatorname{Var}(X_{1,j})=s\) and
\(\operatorname{Var}(U_j)=u\) contributes the block
\[
  M(s,u)=
  \begin{pmatrix}s&s\\s&s+u\end{pmatrix}.
\]
Its two eigenvalues are
\begin{equation}
  \lambda_\pm(s,u)
  =\frac{2s+u\pm\sqrt{4s^2+u^2}}{2}.
  \label{eq:small-energy-block-eigenvalues}
\end{equation}
The leading block \((s,u)=(1,n^{-8})\) has
\(\lambda_+=2+o(n^{-1})\) and
\(\lambda_-=\tfrac12n^{-8}(1+o(1))\). For the \(m\) shoulder blocks,
\((s,u)=(n^{-4},n^{-2})\), so
\[
  \lambda_+=n^{-2}(1+o(1)),
  \qquad
  \lambda_-=n^{-4}(1+o(1)).
\]
For the remaining \(p-m\) blocks,
\((s,u)=(n^{-4},n^{-8})\), and
\[
  \lambda_+=2n^{-4}(1+o(1)),
  \qquad
  \lambda_-=\tfrac12n^{-8}(1+o(1)).
\]
It follows that \(r_0(\Sigma_J)=1+o(1)\) and, after the unique leading
eigenvalue is removed,
\begin{align}
  S_1(\Sigma_J)&=(c+2)n^{-1}(1+o(1)),\notag\\
  \sum_{j>1}\lambda_j(\Sigma_J)^2
  &=cn^{-3}(1+o(1)).
  \label{eq:small-energy-joint-tail}
\end{align}
The largest remaining eigenvalue is \(n^{-2}(1+o(1))\), so
\(r_1(\Sigma_J)=(c+2)n(1+o(1))>b_{\mathrm B}n\) and the joint cutoff is
\(1\). Finally,
\[
  R_1(\Sigma_J)
  =\frac{(c+2)^2}{c}\,n(1+o(1)),
\]
which proves \(\Sigma_J\in\NFS\) and
(\ref{eq:small-energy-joint-obstruction}). The norm and trace assertions for
\(\Sigma_U\) follow directly from its diagonal definition.
\end{proof}

\subsection{Case study II: task-level non-closure in the logarithmic gap}
\label{app:log-gap-counterexample-proof}

\begin{proof}[Proof of Theorem~\ref{thm:regression-task-level-nonclosure}]
Write \(h=h_n\), \(p=p_n\), \(\varepsilon=\varepsilon_n\),
\(\eta=\eta_n\), and \(\rho=\rho_n\). We first verify benign
overfitting for the two marginal tasks and then establish a high-probability
lower bound for the fused task.

\paragraph{The two marginal tasks.}
Their covariances are
\[
  \Sigma_{1,n}=\operatorname{diag}(1,\eta I_p),\qquad
  \Sigma_{2,n}=\operatorname{diag}
  (1+\varepsilon,(\eta+\rho)I_p).
\]
Since \(p\eta=nh\), \(p\eta^2=h^2/n=o(1)\), and
\(p\rho=o(1)\), both spectra satisfy
\begin{equation}
  \frac{r_0(\Sigma_{r,n})}{n}=h(1+o(1)),\qquad
  k_{b_{\mathrm B}}(\Sigma_{r,n})=0,\qquad
  R_0(\Sigma_{r,n})=n^2h^2(1+o(1)),
  \quad r\in\{1,2\}.
  \label{eq:log-gap-marginal-spectral-calculation}
\end{equation}
Indeed, the leading eigenvalue is \(1+o(1)\), the spectral sum is
\(nh(1+o(1))\), and the squared spectral sum is \(1+o(1)\).
In particular, \(r_0/(n\log(1+r_0))\to0\), although
\(r_0/n\to\infty\).

For source \(1\), the marginal regression input is
\[
  Y=\sqrt\varepsilon\,X_{1,1}+\zeta_1,
  \qquad
  \zeta_1=-\varepsilon^{-1/2}U_1
  \sim\mathcal N(0,1),
\]
with \(\zeta_1\perp X_1\). Thus
\(w_{1,n}=\sqrt\varepsilon e_1\) and \(\tau_{1,n}^2=1\).
Apply Theorem~\ref{thm:bartlett-regression-bounds} with
\(\delta_n=n^{-2}\). By
(\ref{eq:log-gap-marginal-spectral-calculation}), its bias and variance
terms are respectively
\[
  O(\varepsilon h)=O(h^{-1}),
  \qquad
  O\!\left((\log n)\frac{n}{R_0(\Sigma_{1,n})}\right)
  =O(n^{-1}).
\]
Consequently,
\(\mathcal E_{1,n}^{\mathrm{reg}}\to_{\mathbb P}0\).

For source \(2\),
\[
  \operatorname{Cov}(X_2,Y)
  =D_n\beta_n+\Sigma_{U,n}\gamma_n=0.
\]
The variables are jointly Gaussian, so zero cross-covariance implies
\(Y\perp X_2\) \citep[Chapter~2]{anderson2003multivariate}. Its marginal
coefficient is therefore zero and its marginal noise variance is
\(\operatorname{Var}(Y)=1+\varepsilon\). The same Bartlett bound now has
zero bias term and variance term \(O(n^{-1})\), proving
\(\mathcal E_{2,n}^{\mathrm{reg}}\to_{\mathbb P}0\).

For completeness, the reverse conditional-mean map in this construction is
\[
  B_n=D_n(D_n+\Sigma_{U,n})^{-1}
  =\operatorname{diag}\!\left(
    \frac1{1+\varepsilon},
    \frac{\eta}{\eta+\rho}I_p
  \right).
\]
Since \(\rho/\eta\to0\), eventually
\(\frac12I\preceq B_n\preceq I\); hence both \(B_n^\top B_n\) and
\(A_n^\top A_n=I\) satisfy the uniform two-sided bounds of
Theorem~\ref{thm:regression-forward-reverse-equivalence}.

\paragraph{The fused task.}
The covariance of the first coordinate pair \((X_{1,1},X_{2,1})\) is
\[
  M_\varepsilon=
  \begin{pmatrix}1&1\\1&1+\varepsilon\end{pmatrix}.
\]
Let \((\lambda_-,v_-)\) and \((\lambda_+,v_+)\) be its ordered normalized
eigenpairs. Direct calculation gives
\begin{equation}
  \lambda_\pm
  =\frac{2+\varepsilon\pm\sqrt{4+\varepsilon^2}}2,
  \qquad
  \lambda_-=\frac\varepsilon2+O(\varepsilon^2),
  \qquad
  \lambda_+=2+O(\varepsilon).
  \label{eq:log-gap-signal-block-eigenvalues}
\end{equation}
In the corresponding standardized principal coordinates, write the two
training columns as \(z_-,z_+\sim\mathcal N(0,I_n)\). Because
\(\operatorname{Cov}((X_{1,1},X_{2,1})^\top,Y)
=(\sqrt\varepsilon,0)^\top\), the response admits the exact expansion
\begin{equation}
  y=a_-z_-+a_+z_+,
  \qquad
  a_\pm=\frac{\sqrt\varepsilon\,(v_\pm)_1}{\sqrt{\lambda_\pm}}.
  \label{eq:log-gap-label-principal-coordinates}
\end{equation}
There is no residual noise in the joint input. Hence
\(a_-^2+a_+^2=\operatorname{Var}(Y)=1+\varepsilon\). Moreover,
\(a_+^2\leq\varepsilon/\lambda_+=O(\varepsilon)\), and therefore
\begin{equation}
  a_-^2\longrightarrow1.
  \label{eq:log-gap-difference-label-energy}
\end{equation}

Each of the remaining \(p\) coordinate pairs has covariance block
\(\bigl(\begin{smallmatrix}\eta&\eta\\\eta&\eta+\rho\end{smallmatrix}\bigr)\).
Its two eigenvalues are
\[
  \mu_\pm
  =\frac{2\eta+\rho\pm\sqrt{4\eta^2+\rho^2}}2,
  \qquad
  \mu_+=2\eta(1+o(1)),
  \quad
  \mu_-=\frac\rho2(1+o(1)).
\]
Let \(Z_+\in\mathbb R^{n\times p}\) contain the standardized Gaussian
columns associated with the \(p\) copies of \(\mu_+\), and use
\(Z_-\) for the \(\mu_-\) columns. All these standardized principal
coordinates are mutually independent. The joint Gram matrix is
\[
  K=\lambda_-z_-z_-^\top+\lambda_+z_+z_+^\top
    +\mu_+Z_+Z_+^\top+\mu_-Z_-Z_-^\top.
\]
Put \(A_-=K-\lambda_-z_-z_-^\top\).  The two-sided singular-value bound for
rectangular sub-Gaussian matrices
\citep[Theorem~4.6.1]{vershynin2018highdimensional}, applied to
\(Z_+^\top\), and Gaussian norm concentration
\citep[Theorem~3.1.1]{vershynin2018highdimensional} imply, because
\(p/n\to\infty\),
\[
  \lambda_{\min}(Z_+Z_+^\top)\geq p/2,
  \qquad
  \|z_-\|_2^2\vee\|z_+\|_2^2\leq2n
\]
with probability tending to one. On this event,
\begin{equation}
  A_-\succeq\mu_+Z_+Z_+^\top\succeq cnhI_n,
  \qquad
  \|A_-^{-1}\|_{\mathrm{op}}\leq\frac{C}{nh}.
  \label{eq:log-gap-nuisance-gram-lower}
\end{equation}

Define
\[
  u=\lambda_-z_-^\top A_-^{-1}z_-,
  \qquad
  c_\times=\lambda_-a_+z_-^\top A_-^{-1}z_+.
\]
(\ref{eq:log-gap-signal-block-eigenvalues}) and
(\ref{eq:log-gap-nuisance-gram-lower}), together with
\(|a_+|=O(\sqrt\varepsilon)\), yield
\begin{equation}
  u=O_{\mathbb P}\!\left(\frac\varepsilon h\right)
    =O_{\mathbb P}(h^{-3}),
  \qquad
  c_\times=O_{\mathbb P}\!\left(
    \frac{\varepsilon^{3/2}}h
  \right)=O_{\mathbb P}(h^{-4}).
  \label{eq:log-gap-survival-quantities}
\end{equation}

Let \(\widehat a_-\) be the standardized population amplitude of the
minimum-norm estimator in the \(v_-\) direction. Since the corresponding
design column is \(\sqrt{\lambda_-}z_-\),
\(\widehat a_-=\lambda_-z_-^\top K^{-1}y\). Applying the
Sherman--Morrison identity \citep[Section~0.7.4]{hornjohnson2013matrix} to
\(K=A_-+\lambda_-z_-z_-^\top\) and using
(\ref{eq:log-gap-label-principal-coordinates}) gives the exact identity
\begin{equation}
  \widehat a_-=\frac{a_-u+c_\times}{1+u},
  \qquad
  a_- -\widehat a_-=\frac{a_- -c_\times}{1+u}.
  \label{eq:log-gap-exact-difference-survival}
\end{equation}
By (\ref{eq:log-gap-difference-label-energy}) and
(\ref{eq:log-gap-survival-quantities}),
\((a_- -\widehat a_-)^2\to_{\mathbb P}1\).

Finally, in standardized principal coordinates the noiseless population
excess risk is the sum of the squared amplitude errors over all eigendirections.
It is therefore bounded below by the contribution of \(v_-\):
\[
  \mathcal E_{J,n}^{\mathrm{reg}}
  \geq(a_- -\widehat a_-)^2.
\]
It follows that
\(\mathbb P\{\mathcal E_{J,n}^{\mathrm{reg}}\geq1/2\}\to1\), completing
the proof.
\end{proof}

\section{Proofs for the Classification Results}
\label{app:classification-proofs}

\subsection{The \texorpdfstring{\(\Delta\)}{Delta}-criterion and classification benignity}
\label{app:classification-delta-equivalence}
\label{app:classification-su-cn-scales}

\begin{lemma}[Nuisance Gram and variance trace]
\label{lem:nuisance-gram-variance}
Let \(z_j\stackrel{\mathrm{iid}}\sim N(0,I_n)\), let
\(\mu_1\geq\mu_2\geq\cdots\geq0\), and define
\[
  A_-:=\sum_j\mu_jz_jz_j^\top,
  \qquad H:=\sum_j\mu_j^2z_jz_j^\top,
  \qquad B:=A_-^{-1},\qquad C:=BHB.
\]
There are universal constants \(b_{\mathrm C},c_{\mathrm C},C>1\) and
\(0<c<1\) such
that the following holds. Let \(k=k_{-t}^\star(b_{\mathrm C})\), put
\(S=S_k^{(-t)}\) and \(V=k/n+n/R_k^{(-t)}\), and suppose
\(k<n/c_{\mathrm C}\). Then, with probability at least
\(1-Ce^{-n/C}\), \(A_-\) is invertible and
\begin{equation}
  c\frac{n}{S}\leq\operatorname{tr}(B)\leq C\frac{n}{S},
  \qquad
  \|B\|_{\mathrm F}\leq C\frac{\sqrt n}{S},
  \qquad
  cV\leq\operatorname{tr}(C)\leq CV.
  \label{eq:nuisance-gram-variance-bounds}
\end{equation}
All constants are uniform over the nuisance spectrum.
\end{lemma}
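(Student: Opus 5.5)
The proof will be a direct transcription of the Gaussian Gram-matrix estimates of \citet[Lemmas~10, 11, 16, and 17]{bartlett2020benign} into the notation of the signal-deleted spectrum. No regression signal enters: \(A_-\) is exactly the Gram matrix \(\mathbf X\mathbf X^\top\) of a Gaussian design with covariance \(\operatorname{diag}(\mu_1,\mu_2,\ldots)\). Correspondingly, \(\operatorname{tr}(C)=\operatorname{tr}[(\mathbf X\mathbf X^\top)^{-1}\mathbf X\Sigma\mathbf X^\top(\mathbf X\mathbf X^\top)^{-1}]\) is their variance trace for that design. The cutoff \(k=k_{-t}^\star(b_{\mathrm C})\) coincides with their \(k^\star\) computed for the nuisance spectrum. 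So the plan is to choose \(b_{\mathrm C}\) and \(c_{\mathrm C}\) at least as large as the constants those lemmas require, and then read off the three bounds in (\ref{eq:nuisance-gram-variance-bounds}) on a common event of probability \(1-Ce^{-n/C}\).

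First I split \(A_-=A_k+A_{>k}\), where \(A_{>k}=\sum_{j>k}\mu_jz_jz_j^\top\) and \(A_k\) is the head of rank at most \(k\). Since \(r_k^{(-t)}\geq b_{\mathrm C}n\), their Lemma~10 (sub-exponential concentration of \(\sum_{j>k}\mu_jz_jz_j^\top\) around \(S I_n\)) gives, with probability \(1-Ce^{-n/C}\),
\[
  \tfrac12 S\, I_n\preceq A_{>k}\preceq 2S\, I_n .
\]
This already shows that \(A_-\) is invertible.

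For the trace of \(B\), the upper bound is immediate: \(B\preceq A_{>k}^{-1}\), so \(\operatorname{tr}(B)\leq 2n/S\) and \(\|B\|_{\mathrm F}\leq\sqrt n\,\|B\|_{\mathrm{op}}\leq 2\sqrt n/S\). For the lower bound, \(A_k\) has rank at most \(k<n/c_{\mathrm C}\). Hence \(B\) restricted to the orthogonal complement of \(\operatorname{range}(A_k)\), a subspace of dimension at least \(n-k\geq n/2\), equals the compression of \(A_{>k}^{-1}\)'s inverse there. By Cauchy interlacing, at least \(n-k\) eigenvalues of \(B\) are at least \(1/\lambda_{\max}(A_{>k})\geq 1/(2S)\). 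This gives \(\operatorname{tr}(B)\geq (n-k)/(2S)\geq c\,n/S\).

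For \(\operatorname{tr}(C)\), I will write \(\operatorname{tr}(C)=\sum_j\mu_j^2 z_j^\top B^2 z_j\) and invoke their Lemma~11 for the upper bound and Lemma~16 (part 2) for the lower bound. Both hold on events of probability \(1-Ce^{-n/C}\), and together they give
\[
  \operatorname{tr}(C)\asymp \frac{k}{n}+\frac{n\sum_{j>k}\mu_j^2}{S^2}.
\]
Their Lemma~17 identifies the minimum over the split index with the value at \(k^\star\); in the present notation this is exactly \(V=k/n+n/R_k^{(-t)}\). The main obstacle is bookkeeping rather than new mathematics. I must check that the probabilities in those lemmas are genuinely \(1-Ce^{-n/C}\), since some are stated with a free \(\delta\) that I will set to \(e^{-n/C}\). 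I must also check that their hypothesis \(k<n/c_1\) and the \(\operatorname{rank}>n\) condition are implied by \(k<n/c_{\mathrm C}\) together with \(r_k^{(-t)}\geq b_{\mathrm C}n\); the latter forces at least \(b_{\mathrm C}n\) nonzero tail eigenvalues. Finally, a union bound over the finitely many events gives the stated probability, with constants uniform over the nuisance spectrum because every invoked lemma is.
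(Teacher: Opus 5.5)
Your proposal is correct and follows the paper's proof essentially step for step: the Lemma~10 sandwich of the tail Gram matrix gives invertibility and the upper bounds on $\operatorname{tr}(B)$ and $\|B\|_{\mathrm F}$, a min--max argument on the null space of the rank-$k$ head gives the lower bound on $\operatorname{tr}(B)$, and Lemmas~11, 16(2) and 17 give the two-sided bound on $\operatorname{tr}(C)$. One wording fix in the lower bound on $\operatorname{tr}(B)$: it is $A_-$, not $B$, whose compression to that null space coincides with the compression of $A_{>k}$. Min--max then shows that at least $n-k$ eigenvalues of $A_-$ are at most $\lambda_{\max}(A_{>k})$, which is exactly the conclusion you use.
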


\begin{proof}
The matrix \(C\) is exactly the variance-trace matrix studied by
\citet{bartlett2020benign}, with their covariance spectrum replaced by the
signal-deleted nuisance spectrum. We spell out the reduction so that no
additional random-matrix estimate is implicit.

Let \(T_k=\sum_{j>k}\mu_jz_jz_j^\top\). By definition of the cutoff,
\(r_k^{(-t)}=S/\mu_{k+1}\geq b_{\mathrm C}n\). Choose
\(b_{\mathrm C}\) at least as large as the universal threshold in
\citet[Lemma~10]{bartlett2020benign}. Part 3 of that lemma gives, on an
event of probability at least \(1-2e^{-n/C}\),
\begin{equation}
  cS I_n\preceq T_k\preceq CS I_n.
  \label{eq:nuisance-tail-gram-sandwich}
\end{equation}
Since \(A_-\succeq T_k\), the lower inequality implies
\(\|B\|_{\mathrm{op}}\leq C/S\), and hence
\(\operatorname{tr}(B)\leq Cn/S\) and
\(\|B\|_{\mathrm F}\leq\sqrt n\|B\|_{\mathrm{op}}
\leq C\sqrt n/S\).
For the reverse trace bound, the head \(A_--T_k\) has rank at most \(k\).
On its null space, of dimension at least \(n-k\), the Rayleigh quotient of
\(A_-\) is at most \(CS\) by
(\ref{eq:nuisance-tail-gram-sandwich}). The min--max principle
\citep[Section~4.2]{hornjohnson2013matrix} therefore shows that at least
\(n-k\) eigenvalues of \(A_-\) are at most \(CS\). Because
\(k<n/c_{\mathrm C}\),
\(\operatorname{tr}(B)\geq(n-k)/(CS)\geq cn/S\).

It remains to control \(C\). In the notation of
\citet{bartlett2020benign},
\[
  C=(X_-X_-^\top)^{-1}X_-\Sigma_-X_-^\top
    (X_-X_-^\top)^{-1},
\]
because \(X_-X_-^\top=A_-\) and
\(X_-\Sigma_-X_-^\top=H\). Apply their Lemma~11 with
\(l=k=k^\star\). Since \(\mu_{k+1}r_k^{(-t)}=S\), it yields
\[
  \operatorname{tr}(C)
  \leq C\left(\frac{k}{n}
    +\frac{n\sum_{j>k}\mu_j^2}{S^2}\right)
  =CV.
\]
For the lower bound, part 2 of their Lemma~16 at \(k=k^\star\), followed by
their Lemma~17, identifies the same minimum over \(l\leq k^\star\) with a
constant multiple of \(k/n+n/R_k^{(-t)}=V\). Hence
\(\operatorname{tr}(C)\geq cV\). Their three events have exponentially
small failure probability; a union bound proves
(\ref{eq:nuisance-gram-variance-bounds}). Finally,
\(r_k^{(-t)}\geq b_{\mathrm C}n>n\) forces the positive tail rank to exceed
\(n\), so \(A_-\) is invertible almost surely (and also on the displayed
high-probability event).
\end{proof}

\begin{lemma}[Regular-regime SU and CN scales]
\label{lem:classification-su-cn-scales}
Under the assumptions of
Theorem~\ref{thm:classification-main-characterization}, suppose that
\(k_{-t}^\star<n/c_{\mathrm C}\). With \(s_n,V_n\)
defined in (\ref{eq:classification-spectral-scales}),
\begin{equation}
  \mathrm{SU}(\widehat w)\asymp_{\mathbb P}s_n,
  \qquad
  \mathrm{CN}(\widehat w)^2=O_{\mathbb P}(V_n).
  \label{eq:app-classification-su-cn-upper}
\end{equation}
More precisely, there are \(0<c_s<C_s<\infty\) such that
\[
  \mathbb P\!\left\{
    c_ss_n\leq\mathrm{SU}(\widehat w)\leq C_ss_n
  \right\}\longrightarrow1.
\]
There are constants \(c,p>0\), depending only on
\(b_{\mathrm C},c_{\mathrm C}\), and \(\bar\tau\), such that
\begin{equation}
  \mathbb P\!\left\{
    \mathrm{CN}(\widehat w)^2\geq cV_n
  \right\}\geq p-o(1).
  \label{eq:app-classification-cn-lower}
\end{equation}
\end{lemma}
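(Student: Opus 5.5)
The plan is to write the interpolator in closed form and split off the signal coordinate with a Sherman--Morrison step, in the same way as the fused-task argument in Appendix~\ref{app:log-gap-counterexample-proof}. In standardized coordinates, let \(z\in\mathbb R^n\) be the signal column, so the design column is \(\sqrt{\lambda_t}z\) with \(z=(Z_{t,i})_i\). Let \(A_-=\sum_j\mu_jz_jz_j^\top\) be the nuisance Gram matrix and \(K=A_-+\lambda_tzz^\top\). By (\ref{eq:minnorm-closed-form}),
\[
  \mathrm{SU}(\widehat w)=\lambda_t z^\top K^{-1}\mathbf y,
  \qquad
  \mathrm{CN}(\widehat w)^2=\mathbf y^\top K^{-1}HK^{-1}\mathbf y .
\]
Put \(B=A_-^{-1}\) and \(u=\lambda_tz^\top Bz\). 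Sherman--Morrison then gives
\[
  \mathrm{SU}(\widehat w)=\frac{\lambda_t z^\top B\mathbf y}{1+u},
  \qquad
  K^{-1}\mathbf y=B\mathbf y-\frac{\lambda_t(z^\top B\mathbf y)}{1+u}\,Bz .
\]
The nuisance columns are independent of \((z,\mathbf y)\), since the labels depend only on \(Z_t\) and \(G\). So I would condition on the nuisance design and work on the high-probability event of Lemma~\ref{lem:nuisance-gram-variance}. On that event \(\operatorname{tr}B\asymp n/S_n\), \(\|B\|_{\mathrm F}\lesssim\sqrt n/S_n\), and \(\operatorname{tr}C\asymp V_n\).

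\emph{Survival.} Write \(\mathbf y=\kappa z+\mathbf e\) with \(\kappa=\mathbb E[Y Z_t]=\rho_\tau\sqrt{2/\pi}\), which is bounded away from zero uniformly in \(\tau\le\bar\tau\). The residual \(\mathbf e\) has iid entries that are uncorrelated with \(z\) and bounded. By Hanson--Wright for \(z\) and a conditional second-moment bound for the rows \((z_i,e_i)\),
\[
  z^\top Bz=\operatorname{tr}B\,(1+o_{\mathbb P}(1)),
  \qquad
  z^\top B\mathbf y=\kappa\operatorname{tr}B+O_{\mathbb P}(\|B\|_{\mathrm F}).
\]
Both statements use \(\|B\|_{\mathrm F}/\operatorname{tr}B\lesssim n^{-1/2}\). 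Hence \(u\asymp_{\mathbb P} n\lambda_t/S_n\), and
\[
  \mathrm{SU}(\widehat w)\asymp_{\mathbb P}\frac{n\lambda_t/S_n}{1+n\lambda_t/S_n}=s_n .
\]
The constants come from \(\kappa\), from \(c,C\) in Lemma~\ref{lem:nuisance-gram-variance}, and from the concentration constants, which gives the two-sided high-probability statement.

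\emph{Contamination upper bound.} Set \(\mathbf r:=K^{-1}\mathbf y\), so that \(\mathrm{CN}(\widehat w)^2=\mathbf r^\top H\mathbf r\). The expansion above reads
\[
  \mathbf r=B\bigl(\mathbf y-\theta z\bigr),
  \qquad
  \theta=\frac{\lambda_tz^\top B\mathbf y}{1+u}=\mathrm{SU}(\widehat w)/\sqrt{\lambda_t}\cdot\sqrt{\lambda_t}.
\]
Therefore
\[
  \mathrm{CN}(\widehat w)^2=(\mathbf y-\theta z)^\top C(\mathbf y-\theta z)\le 2\,\mathbf y^\top C\mathbf y+2\theta^2 z^\top Cz .
\]
Since \(|y_i|=1\) and \(z\) is independent of \(C\), the first term has conditional mean at most \(\operatorname{tr}C\) up to cross terms. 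The cleaner route is \(\mathbf y^\top C\mathbf y\le\|C\|_{\mathrm{op}}n\), but that is too crude, so instead I would use \(\mathbf y=\kappa z+\mathbf e\) with \(\mathbb E[\mathbf e\mathbf e^\top\mid z]\) diagonal and bounded. This gives conditional expectation \(\lesssim\operatorname{tr}C\). Next, \(\theta=\lambda_t z^\top B\mathbf y/(1+u)\lesssim_{\mathbb P} u/(1+u)\le1\), so the second term is \(O_{\mathbb P}(\operatorname{tr}C)\). Markov's inequality then yields \(\mathrm{CN}^2=O_{\mathbb P}(V_n)\).

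\emph{Contamination lower bound.} This is the main obstacle. I would lower bound \(\mathbf r^\top H\mathbf r\) conditionally on the nuisance design and on \(z\), using only the randomness of the component of \(\mathbf y-\theta z\) orthogonal to \(z\). I expect this to be the hardest step because \(\theta\) depends on \(\mathbf y\), and I would attempt three things:
\begin{enumerate}
  \item \textbf{Decompose.} Write \(\mathbf y-\theta z=(\kappa-\theta)z+\mathbf e\), where \(\mathbf e\) has entries whose conditional variance given \(z_i\) is bounded below uniformly. The conditional variance of \(\operatorname{sign}(z_i+\tau G)\) given \(z_i\) is small only when \(|z_i|\) is large and \(\tau\) is small, so I would restrict to coordinates with \(|z_i|\le 1\). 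This applies even when \(\tau=0\) once \(z\) is replaced by a symmetrized version. For \(\tau=0\), a fallback is that \(\mathbf y-\theta z=\operatorname{sign}(z)-\theta z\) has a nonvanishing component orthogonal to \(z\), of norm \(\asymp\sqrt n\), that is spread across coordinates.
  \item \textbf{Second moment.} Show that \(\mathbb E[\mathbf v^\top C\mathbf v]\gtrsim\operatorname{tr}C\) for the random vector \(\mathbf v\) built from that orthogonal component.
  \item \textbf{Transfer.} Apply Paley--Zygmund as in Lemma~\ref{lem:fixed-probability-variance-lower}, with the fourth moment controlled by bounded entries. Then use the fact that conditioning on the scalar \(\theta\) removes only one direction, so the trace lower bound from Lemma~\ref{lem:nuisance-gram-variance} survives up to \(\|C\|_{\mathrm{op}}\lesssim\operatorname{tr}C/\)const. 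If that fails, replace \(C\) by its compression to \(z^\perp\) and use \(\operatorname{tr}(P_{z^\perp}CP_{z^\perp})\ge\operatorname{tr}C-\|C\|_{\mathrm{op}}\).
\end{enumerate}
This yields \(\mathbb P\{\mathrm{CN}^2\ge cV_n\}\ge p-o(1)\).
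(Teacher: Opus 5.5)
Your Sherman--Morrison identities, the survival estimate, and the Markov upper bound on $\mathrm{CN}(\widehat w)^2$ are essentially the paper's argument. A small correction there: $\mathbb E[\mathbf e\mathbf e^\top\mid z]$ is not diagonal, since $\mathbb E[e_i\mid z_i]=\mathbb E[y_i\mid z_i]-\kappa z_i\neq0$. You do not need it, because $\mathbb E[\mathbf y^\top C\mathbf y\mid C]=\operatorname{tr}C$ holds directly.

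The genuine gap is the lower bound (\ref{eq:app-classification-cn-lower}), and none of your routes closes it.

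\emph{Wrong conditioning.} You condition on $z$ and the nuisance. The lemma needs constants uniform over $0\le\tau_n\le\bar\tau$, but $\operatorname{Var}(y_i\mid z_i)$ is small whenever $|z_i|\gg\tau_n$, not only for large $|z_i|$. So restricting to $|z_i|\le1$ gives nothing as $\tau_n\to0$, and at $\tau_n=0$ there is no conditional randomness at all.

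\emph{Dependence of $\theta$ on $C$.} More seriously, $\theta=\mathrm{SU}(\widehat w)$ depends on $B$, so $\mathbf y-\theta z$ is not independent of $C$. Treating $\theta$ as an unknown scalar that ``removes only one direction'' fails in general. Take $k_{-t}^\star=1$, one head nuisance eigenvalue equal to the tail mass $S$, and a flat tail of rank $p\gg n^2$. This sequence is regular with $V_n\asymp1/n$. Here $C=S^2(Bz_1)(Bz_1)^\top+E$, where $z_1$ is the head nuisance column, the rank-one part has trace $\asymp1/n$, and $\|E\|_{\mathrm{op}}\lesssim1/p$. Then $\operatorname{tr}C-\|C\|_{\mathrm{op}}=o(V_n)$, so the compression fallback is vacuous. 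Worse, $\min_a(\mathbf y-az)^\top C(\mathbf y-az)=O_{\mathbb P}(n/p)=o_{\mathbb P}(V_n)$, so any worst case over $\theta$ is vacuous. The lower bound genuinely requires knowing where $\theta$ sits.

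\emph{The missing idea.} Replace $\theta$ by a proxy measurable with respect to the nuisance alone, and keep $z$ random. With $u=\lambda_t\operatorname{tr}B$, put $a^\circ=\kappa_\tau u/(1+u)$ and $w=\mathbf y-a^\circ z$. Conditionally on $(B,C)$ only, $w$ has iid centered coordinates with
\[
  \mathbb E w_i^2=1-2\kappa_\tau a^\circ+(a^\circ)^2\ge1-\kappa_\tau^2\ge1-\tfrac2\pi .
\]
This floor comes from the sign map not being linear in $z_i$, that is, from the randomness of $z$ itself rather than from label noise given $z$. It therefore holds uniformly, including at $\tau=0$.

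Hence $\mathbb E[w^\top Cw\mid C]\ge(1-2/\pi)\operatorname{tr}C$. Bounded fourth moments give $\mathbb E[(w^\top Cw)^2\mid C]\lesssim(\operatorname{tr}C)^2+\|C\|_{\mathrm F}^2\le2(\operatorname{tr}C)^2$. Conditional Paley--Zygmund then gives $w^\top Cw\ge c_0\operatorname{tr}C$ with conditional probability at least $p_0$, with no condition on the shape of $C$.

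Your own survival concentration gives $\delta:=\mathrm{SU}(\widehat w)-a^\circ=o_{\mathbb P}(1)$. Expand $\mathrm{CN}(\widehat w)^2=w^\top Cw-2\delta z^\top Cw+\delta^2z^\top Cz$, and bound the cross term by Cauchy--Schwarz in the $C$-seminorm using $z^\top Cz,\,w^\top Cw=O_{\mathbb P}(\operatorname{tr}C)$. This yields $\mathrm{CN}(\widehat w)^2=w^\top Cw+o_{\mathbb P}(\operatorname{tr}C)$. Combined with $\operatorname{tr}C\ge cV_n$ from Lemma~\ref{lem:nuisance-gram-variance}, this gives (\ref{eq:app-classification-cn-lower}).
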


\begin{proof}
Write \(z=z_t\), and let \(z_j\sim N(0,I_n)\) denote the standardized
training column in nuisance direction \(j\).  Define
\[
  A_-:=\sum_j\mu_jz_jz_j^\top,
  \quad B:=A_-^{-1},
  \quad H:=\sum_j\mu_j^2z_jz_j^\top,
  \quad C:=BHB.
\]
The nuisance variables, and hence \((B,C)\), are independent of \((z,y)\).
The full Gram matrix is \(A=A_-+\lambda_tzz^\top\).  The
Sherman--Morrison rank-one inverse formula
\citep[Section~0.7.4]{hornjohnson2013matrix} therefore gives the exact
identities
\begin{align}
  \mathrm{SU}(\widehat w)
  &=\frac{\lambda_tz^\top By}
          {1+\lambda_tz^\top Bz},
  \label{eq:app-exact-su}\\
  \mathrm{CN}(\widehat w)^2
  &=(y-\mathrm{SU}(\widehat w)z)^\top
    C(y-\mathrm{SU}(\widehat w)z).
  \label{eq:app-exact-cn}
\end{align}

Let \(\kappa_\tau=\sqrt{2/\pi}/\sqrt{1+\tau^2}\).  To verify the scalar
identity used below, set \(W=(Z+\tau G)/\sqrt{1+\tau^2}\).  Gaussian
conditioning \citep[Chapter~2]{anderson2003multivariate} gives
\(\mathbb E[Z\mid W]=W/\sqrt{1+\tau^2}\), and therefore
\[
  \mathbb E[Z\operatorname{sign}(Z+\tau G)]
  =\frac{\mathbb E|W|}{\sqrt{1+\tau^2}}
  =\kappa_\tau.
\]
Independence across samples consequently yields
\begin{equation}
  \mathbb E[z^\top By\mid B]=\kappa_\tau\operatorname{tr}(B),
  \qquad
  \mathbb E[z^\top Bz\mid B]=\operatorname{tr}(B).
  \label{eq:app-su-conditional-means}
\end{equation}
The variables \(z_i\pm y_i\) are independent across \(i\), centered, and
uniformly sub-Gaussian. Polarization followed by the Hanson--Wright inequality
\citep{rudelson2013hansonwright}, together
with the usual Gaussian quadratic-form bound, gives
\begin{align}
  z^\top By
  &=\kappa_\tau\operatorname{tr}(B)+O_{\mathbb P}(\|B\|_{\mathrm F}),
  \notag\\
  z^\top Bz
  &=\operatorname{tr}(B)+O_{\mathbb P}(\|B\|_{\mathrm F}).
  \label{eq:app-su-concentration}
\end{align}

Lemma~\ref{lem:nuisance-gram-variance}, applied to the signal-deleted
nuisance covariance, gives
\begin{equation}
  \operatorname{tr}(B)\asymp_{\mathbb P}\frac{n}{S_n},
  \qquad
  \frac{\|B\|_{\mathrm F}}{\operatorname{tr}(B)}
    =O_{\mathbb P}(n^{-1/2}),
  \qquad
  \operatorname{tr}(C)\asymp_{\mathbb P}V_n.
  \label{eq:app-bartlett-classification-traces}
\end{equation}
The first two relations and
(\ref{eq:app-exact-su})-(\ref{eq:app-su-concentration}) show that, with
\(u=\lambda_t\operatorname{tr}(B)\),
\[
  \mathrm{SU}(\widehat w)
  =\kappa_\tau\frac{u}{1+u}+o_{\mathbb P}\!\left(
    \frac{u}{1+u}
  \right).
\]
Since \(\kappa_\tau\) is bounded above and away from zero and
\(u\asymp_{\mathbb P}n\lambda_t/S_n\), this proves the survival statement
in (\ref{eq:app-classification-su-cn-upper}).

For contamination, positive semidefiniteness and
(\ref{eq:app-exact-cn}) give
\[
  \mathrm{CN}(\widehat w)^2
  \leq2y^\top Cy+2\mathrm{SU}(\widehat w)^2z^\top Cz.
\]
Conditionally on \(C\), both quadratic forms have expectation
\(\operatorname{tr}(C)\).  Conditional Markov and
\(\mathrm{SU}(\widehat w)=O_{\mathbb P}(1)\) therefore yield
\(\mathrm{CN}^2=O_{\mathbb P}(\operatorname{tr}C)\), proving the second
upper bound in view of (\ref{eq:app-bartlett-classification-traces}).

It remains to prove the lower bound without imposing relative concentration
on \(C\).  Put
\[
  a^\circ:=\kappa_\tau\frac{u}{1+u},
  \qquad Q:=(y-a^\circ z)^\top C(y-a^\circ z).
\]
Conditionally on the nuisance variables, \(a^\circ\) is deterministic and
the coordinates of \(y-a^\circ z\) are independent and centered.  Hence
\begin{equation}
  \mathbb E[Q\mid B,C]
  =\bigl(1-2\kappa_\tau a^\circ+(a^\circ)^2\bigr)
   \operatorname{tr}(C)
  \geq(1-\kappa_\tau^2)\operatorname{tr}(C).
  \label{eq:app-cn-Q-first-moment}
\end{equation}
Their fourth moments are uniformly bounded.  If
\(w=y-a^\circ z\), independence and centering give the standard
quadratic-form expansion
\[
  \mathbb E[(w^\top Cw)^2\mid B,C]
  \leq C_0\bigl\{(\operatorname{tr}C)^2+\|C\|_{\mathrm F}^2\bigr\}
  \leq 2C_0(\operatorname{tr}C)^2,
\]
where the last inequality uses \(C\succeq0\).  Applying the
Paley--Zygmund inequality \citep[Lemma~19]{paley1932some} conditionally,
with threshold parameter \(1/2\), supplies constants \(c_0,p_0>0\) for
which
\[
  \mathbb P\{Q\geq c_0\operatorname{tr}(C)\mid B,C\}\geq p_0.
\]
Finally, (\ref{eq:app-su-concentration}) implies
\(\delta:=\mathrm{SU}(\widehat w)-a^\circ=o_{\mathbb P}(1)\).
Conditional Markov gives
\[
  z^\top Cz=O_{\mathbb P}(\operatorname{tr}C),
  \qquad
  (y-a^\circ z)^\top C(y-a^\circ z)
  =O_{\mathbb P}(\operatorname{tr}C),
\]
and Cauchy--Schwarz in the \(C\)-seminorm bounds the cross term by the
geometric mean of these two quantities.  Since
\[
  (y-\mathrm{SU}(\widehat w)z)^\top
  C(y-\mathrm{SU}(\widehat w)z)
  =Q-2\delta z^\top C(y-a^\circ z)+\delta^2z^\top Cz,
\]
(\ref{eq:app-exact-cn}) therefore yields
\[
  \mathrm{CN}(\widehat w)^2
  =Q+o_{\mathbb P}(\operatorname{tr}C).
\]
Combining the last two displays with the lower relation in
(\ref{eq:app-bartlett-classification-traces}) proves
(\ref{eq:app-classification-cn-lower}).
\end{proof}

\medskip
\noindent\textbf{Proof of
Theorem~\ref{thm:classification-main-characterization}.}\par\smallskip
\begin{proof}
Let \(\widehat w\) be the minimum-norm interpolator.  In the regular regime,
Lemma~\ref{lem:classification-su-cn-scales} gives
\[
  \mathrm{SU}(\widehat w)\asymp_{\mathbb P}s_n,
  \qquad
  \mathrm{CN}(\widehat w)^2=O_{\mathbb P}(V_n),
\]
and constants \(c,p>0\) such that
\[
  \mathbb P\!\left\{
    \mathrm{CN}(\widehat w)^2\geq cV_n
  \right\}\geq p-o(1).
\]
The upper bound uses only conditional Markov. The fixed-positive-probability
lower bound uses the Paley--Zygmund inequality
\citep{paley1932some} and does not assume relative concentration of the
contamination matrix.

\noindent\emph{Sufficiency.}
On the preceding positive-survival event, \(\Gamma\) is defined and the
upper bounds imply
\[
  \Gamma(\widehat w)^2
  =O_{\mathbb P}\!\left(\frac{V_n}{s_n^2}\right)
  =O_{\mathbb P}(\Delta_n).
\]
Thus \(\Delta_n\to0\) gives
\(\Gamma(\widehat w)\to_{\mathbb P}0\), and
(\ref{eq:classification-risk-gamma}) yields classification benign
overfitting.  This implication is global: because
\(\Delta_n\geq k_{-t}^\star/n\), the premise \(\Delta_n\to0\) itself puts
the sequence in the regular regime eventually.  Moreover,
\(\Delta_n\geq n/R_n\) gives \(R_n/n\to\infty\). Since \(R_n\) is at most
the number of positive nuisance eigenvalues beyond the cutoff, the nuisance
rank exceeds \(n\) eventually. The Gaussian design therefore has full row
rank almost surely, so the interpolator used above is feasible.

\noindent\emph{Necessity in the regular regime.}
The proof of the survival estimate gives positive constants \(c_s,C_s\)
such that
\[
  \mathbb P\!\left\{
    c_ss_n\leq\mathrm{SU}(\widehat w)\leq C_ss_n
  \right\}\longrightarrow1.
\]
Intersecting this event with the CN lower event gives, with probability at
least \(p-o(1)\), a positive survival and
\[
  \Gamma(\widehat w)^2
  \geq \frac{c}{C_s^2}\frac{V_n}{s_n^2}
  =\frac{c}{C_s^2}\Delta_n.
\]
If \(\Delta_n\not\to0\), some subsequence has
\(\Delta_n\geq\varepsilon>0\), and along that subsequence \(\Gamma\) stays
away from zero with fixed positive probability.  Since
\(\rho_{\tau_n}\geq(1+\bar\tau^2)^{-1/2}>0\), the excess-risk map in
(\ref{eq:classification-risk-gamma}) is uniformly strictly increasing in
\(\Gamma\).  The excess risk therefore cannot converge to zero in
probability.  This proves the converse.
\end{proof}

\subsection{Survival-contamination transfer identity}
\label{app:classification-fusion-mixing}

Proposition~\ref{prop:classification-fusion-balance} records an algebraic
identity for \(\Delta\).  We use it only to separate the two mechanisms by
which fusion changes classification.  When the signal-deleted cutoffs of a
marginal input \(i\in\{1,2\}\) and the joint input are finite, and
\(V_{i,n}>0\),
\begin{equation}
  \frac{\Delta_{J,n}}{\Delta_{i,n}}
  =
  \underbrace{\frac{V_{J,n}}{V_{i,n}}}_{
    \text{contamination transfer}}
  \times
  \underbrace{\frac{s_{i,n}^2}{s_{J,n}^2}}_{
    \text{survival transfer}}
  \label{eq:app-classification-mixing-decomposition}
\end{equation}
The first factor records how concatenation changes the signal-deleted
nuisance spectrum through \(k_{-t}^\star\) and \(R_{k_{-t}^\star}\).
The second records how the total nuisance mass and signal eigenvalue change
the surviving signal.  Neither factor is monotone: a broader joint tail can
reduce contamination, while a larger nuisance mass can simultaneously
reduce survival.

\medskip
\noindent\textbf{Proof of
Proposition~\ref{prop:classification-fusion-balance}.}\par\smallskip
\begin{proof}
Since \(\Delta_{r,n}=V_{r,n}/s_{r,n}^2\),
(\ref{eq:app-classification-mixing-decomposition}) is exact and equals
(\ref{eq:classification-fusion-balance-identity}).  Applying
Lemma~\ref{lem:classification-su-cn-scales} input by
input gives
\[
  \mathrm{SU}(\widehat w_r)\asymp_{\mathbb P}s_{r,n},\qquad
  \mathrm{CN}(\widehat w_r)^2=O_{\mathbb P}(V_{r,n}),
  \qquad r\in\{i,J\},
\]
together with the matching fixed-positive-probability CN lower bound.
Theorem~\ref{thm:classification-main-characterization} then shows, in the
regular regime, that the fused classifier is benign exactly when
\(\Delta_{J,n}\to0\), equivalently when the product on the right-hand side
of (\ref{eq:app-classification-mixing-decomposition}) tends to zero.

The factors in (\ref{eq:app-classification-mixing-decomposition}) should be
read as deterministic spectral-scale transfers, not as a claim that the
random ratio \(\mathrm{CN}_J^2/\mathrm{CN}_i^2\) concentrates.  The
classification criterion needs only the CN upper bound for sufficiency and
the fixed-positive-probability lower bound for necessity.
\end{proof}

\subsection{Six classification interaction patterns}
Here is the proof of Theorem~\ref{thm:classification-six-patterns}. We will deviate from the notation in the main text to simplify the proof. Let \(X_1\) and \(X_2\) be two inputs, and let \(Y\) be the label. We will construct a joint distribution of \((X_1,X_2,Y)\) such that the marginal distributions of \((X_1,Y)\) and \((X_2,Y)\) have the desired properties, and the joint distribution of \((X_1,X_2,Y)\) has the desired interaction pattern.
\label{app:classification-transition-witnesses}

\begin{proof}
\medskip
\noindent\textbf{Step 1: common dense nuisance coupling.}\par\smallskip
For the five rows other than \(\Ncls+\Ncls\to\Bcls\), let
\(d_1,\ldots,d_P\) and \(q_1,\ldots,q_P\) be the two target nuisance
spectra in Table~\ref{tab:classification-witness-scales}, with the following
explicit common-dimension convention. In the \(\Bcls+\Bcls\to\Ncls\) row take
\(P=n^4\) and append \(n^{-10}\) to the second list until it has length
\(P\); in the \(\Bcls+\Ncls\to\Bcls\) row take \(P=n^2\) and use the same padding for
the second list.  The other three lists already have equal length.  For the
first padded row, the padding contributes respectively
\(O(n^{-6}),O(n^{-16}),O(n^{-4})\) to
\(\sum q_j,\sum q_j^2,\sum q_j/d_j\); for the second it contributes
\(O(n^{-8}),O(n^{-18}),O(n^{-7})\). Each is negligible relative to the
corresponding unpadded quantity, so the padding affects neither the displayed
spectral scales nor the coupling normalization below.
Define
\begin{equation}
  \chi_n:=\left(\sum_{j=1}^P\frac{q_j}{d_j}\right)^{-1/2},
  \qquad
  a_j:=\chi_n\sqrt{\frac{q_j}{d_j}},
  \label{eq:classification-dense-coupling-weights}
\end{equation}
and set
\begin{equation}
  \begin{aligned}
    D&=\operatorname{diag}(1,d_1,\ldots,d_P),
    &A&=\operatorname{diag}(1,a_1,\ldots,a_P),\\
    \Sigma_U&=\operatorname{diag}
    \bigl(1,(1-\chi_n^2)q_1,\ldots,(1-\chi_n^2)q_P\bigr).
  \end{aligned}
  \label{eq:classification-dense-shared-map}
\end{equation}
All \(a_j\) are strictly positive, \(\sum_j a_j^2=1\), and the second
marginal nuisance variances are exactly \(q_j\).  The \(j\)-th joint
nuisance block is
\begin{equation}
  M_j=
  \begin{pmatrix}
    d_j&\chi_n\sqrt{d_jq_j}\\
    \chi_n\sqrt{d_jq_j}&q_j
  \end{pmatrix},
  \qquad
  (1-\chi_n)(d_j\oplus q_j)
  \preceq M_j\preceq
  (1+\chi_n)(d_j\oplus q_j).
  \label{eq:classification-dense-block-comparison}
\end{equation}
For the five rows in table order excluding \(\Ncls+\Ncls\to\Bcls\), the sums defining
\(\chi_n^{-2}\) are bounded below, up to constants, by
\(n^2,n,n^4,n,n^{9/2}\),
respectively.  Thus \(\chi_n\to0\), so the ordered joint nuisance spectrum
is uniformly \(1+o(1)\) comparable to the multiset union of the two target
spectra.

\medskip
\noindent\textbf{Step 2: common label and signal eigendirections.}\par\smallskip
Write \(X_{1,\mathrm{sig}}=G\) and \(U_{\mathrm{sig}}=H\), where
\(G,H\sim N(0,1)\) are independent.  Then
\(X_{2,\mathrm{sig}}=G+H\), and the joint signal block equals
\[
  M=\begin{pmatrix}1&1\\1&2\end{pmatrix}.
\]
Let \((\lambda_+,v_+)\) be its leading normalized eigenpair, with
\(v_+=(v_1,v_2)^\top\), and choose
\(\gamma=v_2\) and \(\beta=v_1+v_2\).  Use the common label
\begin{equation}
  Y=\operatorname{sign}(\beta G+\gamma H+\sigma G_0),
  \qquad G_0\sim N(0,1),\quad \sigma\geq0\ \text{fixed}.
  \label{eq:classification-shared-signal-template}
\end{equation}
The joint coefficient is \((\beta-\gamma,\gamma)^\top=v_+\).  Thus it is a
joint signal eigenvector, and both marginal coefficients lie in their scalar
signal coordinates.  After input-specific standardization, all
three reductions have the normalized form
(\ref{eq:classification-one-sparse-model}), with bounded probit noise levels.
Hence all three inputs satisfy the one-sparse assumption.  Deleting
the joint signal leaves only
\(\lambda_-=(3-\sqrt5)/2\) as one additional constant-order nuisance
direction.

\medskip
\noindent\textbf{Step 3: five substitutions under the common template.}
\par\smallskip
(\ref{eq:classification-dense-coupling-weights}) and
(\ref{eq:classification-dense-shared-map}) give
\[
  a_j^2d_j=\chi_n^2q_j,
  \qquad
  \operatorname{Var}(X_{2,j})
  =a_j^2d_j+(1-\chi_n^2)q_j=q_j,
  \qquad
  \sum_ja_j^2=1.
\]
Furthermore, conjugating each nuisance block by
\(\operatorname{diag}(d_j^{-1/2},q_j^{-1/2})\) produces
\(\bigl(\begin{smallmatrix}1&\chi_n\\\chi_n&1\end{smallmatrix}\bigr)\).
This re-verifies (\ref{eq:classification-dense-block-comparison}).  For a
flat nuisance block \(a^{[m]}\), its mass and effective rank are \(ma\) and
\(m\), respectively.

\medskip
\noindent\textbf{Case 1: \(\Bcls+\Bcls\to\Bcls\).}\par\smallskip
\noindent\emph{Calculation.}
Each marginal has \(S_0=n\), \(R_0=n^2\), and cutoff zero.  The joint
cutoff is at most one, and after deleting the possible
\(\lambda_-\) direction its tail has \(S\asymp n\) and
\(R\asymp n^2\).

\noindent\emph{Conclusion.}
\(\Delta_1,\Delta_2,\Delta_J=\Theta(n^{-1})\), so all three
inputs are benign.

\medskip
\noindent\textbf{Case 2: \(\Bcls+\Ncls\to\Bcls\).}\par\smallskip
\noindent\emph{Marginal calculation.}
The first marginal has the preceding flat block, while the second has
\((n^{-1})^{[p_n]}\).  The latter has cutoff zero,
\(S_0=p_n/n\), \(R_0=p_n\), and
\(\Delta_2\to1/\rho\), whereas \(\Delta_1=\Theta(n^{-1})\).

\noindent\emph{Joint calculation.}
The joint cutoff is at most one; its diffuse first tail gives
\(S\asymp n\), \(R\asymp n^2\), and
\(\Delta_J=\Theta(n^{-1})\).

\noindent\emph{Conclusion.}
The first source's wide tail rescues the fused classifier.

\medskip
\noindent\textbf{Case 3: \(\Bcls+\Ncls\to\Ncls\).}\par\smallskip
\noindent\emph{Marginal calculation.}
The first nuisance block
\((n^{-2})^{[n^2]}\) has \(S_0=1\), \(R_0=n^2\), and
\(\Delta_1=\Theta(n^{-1})\).  The second block \(1^{[n^2]}\) has
\(S_0=R_0=n^2\), giving \(\Delta_2\asymp n\).

\noindent\emph{Joint calculation.}
The \(n^2\) unit directions remain in the joint nuisance spectrum, so
\(\Delta_J\asymp n\).

\noindent\emph{Conclusion.}
The second marginal and the fused classifier are both non-benign.

\medskip
\noindent\textbf{Case 4: \(\Ncls+\Ncls\to\Ncls\).}\par\smallskip
\noindent\emph{Marginal calculation.}
Both marginal blocks are
\((n^{-1})^{[p_n]}\), hence \(\Delta_i\to1/\rho\).  After the extra
constant direction is removed, the joint tail has mass \(2\rho+o(1)\),
effective rank \(2\rho n(1+o(1))\), and cutoff one.

\noindent\emph{Conclusion.}
\(\Delta_J\to1/(2\rho)>0\), so non-benignity persists after fusion.

\medskip
\noindent\textbf{Case 5: \(\Bcls+\Bcls\to\Ncls\).}\par\smallskip
\noindent\emph{First marginal.}
Let \(q=\lfloor\sqrt n\rfloor\).  The first block
\((n^{-2})^{[n^4]}\) has \(S_0=n^2\), \(R_0=n^4\), and
\(\Delta_1\asymp n^{-1}\).

\noindent\emph{Second marginal.}
The second nuisance spectrum is
\((n^2)^{[q]}\uplus(n^{-2})^{[n^2]}\).  Before all \(q\) heads are
deleted, \(r_k=O(q)=o(n)\); afterward the flat tail has mass one and
effective rank \(n^2\).  Therefore its cutoff is \(q\) and
\(\Delta_2\asymp q/n+n/n^2\asymp n^{-1/2}\).

\noindent\emph{Joint calculation.}
In the joint nuisance spectrum the same \(q\) heads precede the constant
direction \(\lambda_-\) and the first source's diffuse tail.  At
\(k=q\), the remaining mass is
\(S_q\asymp n^2\), its squared mass is \(O(1)\), and hence
\(R_q\asymp n^4\).  Consequently
\[
  \Delta_J
  \asymp(1+n)^2\left(n^{-1/2}+n^{-3}\right)
  \asymp n^{3/2}.
\]

\noindent\emph{Conclusion.}
Both marginal interaction quantities vanish, while the joint quantity
diverges.  All cutoffs in Cases 1--5 are \(0\), \(1\), or \(q=o(n)\).

\medskip
\noindent\textbf{Step 4: scale-dependent rescue.}\par\smallskip
\noindent\textbf{Case 6: \(\Ncls+\Ncls\to\Bcls\).}\par\smallskip
\noindent\emph{Parameters and covariance construction.}
Fix
\[
  0<r<\tfrac12,\qquad
  0<\ell<\tfrac12-r,\qquad
  \rho>b_{\mathrm C},\qquad M>2,
\]
let \(m=n^2\), \(p=p_n\), and
\(L=n^\ell\).  Put \(d_{\mathrm{tail}}=Ln^{-1/2}\) and
\(c_n=n^{-2}\), and take
\begin{align*}
  D&=\operatorname{diag}(n^{-r},d_{\mathrm{tail}}I_m),\\
  A&=\operatorname{diag}
  (n^{1/4+r/2},c_n I_m),\\
  \Sigma_U&=\operatorname{diag}
  \bigl(n^{-M},I_p,n^{-M}I_{m-p}\bigr).
\end{align*}
This is exactly the scale-dependent builder used in the experiments.  Every
entry of \(A\) is nonzero, while its nuisance restriction satisfies
\(\|A_{-t}\|_{\mathrm F}^2=mc_n^2\sim n^{-2}\).  If
\(e_n:=c_n^2d_{\mathrm{tail}}=n^{\ell-9/2}\), then source 2 has \(p\)
nuisance eigenvalues \(1+e_n\) and \(m-p\) nuisance eigenvalues
\(n^{-M}+e_n\).  These are the two groups displayed in
Table~\ref{tab:classification-witness-scales}.

\noindent\emph{Signal checks.}
Write \(d=n^{-r}\), \(a=n^{1/4+r/2}\), and \(u=n^{-M}\).  The joint
signal block is
\[
  G=\begin{pmatrix}d&ad\\ad&a^2d+u\end{pmatrix}.
\]
Let \((\lambda_{t,J},v_+)\) be its leading normalized eigenpair, with
\(v_+=(v_1,v_2)^\top\), and choose
\[
  \gamma_1=v_2,
  \qquad
  \beta_1=v_1+av_2,
\]
with all other entries zero.  Use the common label
\(Y=\operatorname{sign}(\beta^\top X_1+\gamma^\top U+\xi)\), where
\(\xi\sim N(0,\sigma^2)\) is independent and \(\sigma<\infty\) is fixed.
Then the coefficient induced by the common
score in the joint input is
\[
  b_J=(\beta-A^\top\gamma,\gamma)^\top
  =v_+
\]
inside the signal block.  It is therefore a joint eigendirection; dividing by
\(\sqrt{\lambda_{t,J}}\) gives the normalized coefficient used in
(\ref{eq:classification-one-sparse-model}).  The marginal coefficients are
also supported on their first coordinate because all matrices are diagonal.
Moreover,
  \(\lambda_{t,1}=n^{-r}\),
  \(\lambda_{t,2}=n^{1/2}+n^{-M}\), and
  \(\lambda_{t,J}\sim n^{1/2}\).  More explicitly,
  \[
    \beta_1^2d\sim\lambda_{t,J},
    \qquad
    \frac{\lambda_{t,J}^2v_2^2}{a^2d+u}
      \sim\lambda_{t,J},
    \qquad
    b_J^\top\Sigma_Jb_J=\lambda_{t,J}.
  \]
  These are the signal variances explained by sources \(1\), \(2\), and
  the joint input, respectively, and all are of order \(n^{1/2}\).
  The corresponding residual-to-signal variance ratios are uniformly bounded
  (and in the first and joint inputs they vanish), while the
  additional variance \(\sigma^2\) is fixed.  Hence all three standardized
  reductions satisfy the one-sparse assumptions with bounded \(\tau_n\).

\noindent\emph{Marginal interaction quantities.}
After deleting the respective signal directions, source 1 has
\(m\) eigenvalues \(Ln^{-1/2}\).  Thus its cutoff is zero,
\(S_0\sim Ln^{3/2}\), \(R_0\sim n^2\), and
\(\Delta_1\asymp n^{2(r+\ell)}\).  For source 2,
\(me_n=n^{\ell-5/2}=o(1)\) and \(mn^{-M}=o(1)\).  Its nuisance mass and
squared mass are therefore both \(\rho n+o(n)\), so its cutoff is zero,
\(R_0\sim\rho n\), and \(\Delta_2\to1/\rho\).

\noindent\emph{Joint interaction quantity.}
For a nuisance innovation variance \(u_j\in\{1,n^{-M}\}\), the corresponding
joint block is
\[
  K_j=\begin{pmatrix}
    d_{\mathrm{tail}}&c_nd_{\mathrm{tail}}\\
    c_nd_{\mathrm{tail}}&c_n^2d_{\mathrm{tail}}+u_j
  \end{pmatrix}.
\]
Summing \(\operatorname{tr}(K_j)\) over all blocks, and including the
negligible smaller eigenvalue of \(G\), gives
\(S_0\sim Ln^{3/2}\).  Since the sum of the squared eigenvalues equals the
sum of \(\operatorname{tr}(K_j^2)\), direct expansion gives
\[
  \sum_{j=1}^m\operatorname{tr}(K_j^2)
  =\Theta\!\left(md_{\mathrm{tail}}^2+p\right)
  =\Theta(n^{1+2\ell}).
\]
The largest nuisance eigenvalue is \(\Theta(1)\), so
\(r_0\asymp n^{3/2+\ell}\gg n\) and the joint cutoff is zero.  Consequently,
\[
  S_0\sim Ln^{3/2},\qquad
  R_0\sim n^2,\qquad
  \Delta_J\asymp n^{2\ell-1}\longrightarrow0.
\]

\noindent\emph{Conclusion.}
Because \(r+\ell>0\), both marginal interaction quantities stay away from
zero, whereas the joint quantity vanishes.
Theorem~\ref{thm:classification-main-characterization} proves
\(\Ncls+\Ncls\to\Bcls\), completing all six rows.
\end{proof}

\section{Proofs for the Cross-Task Separation Results}
\label{app:cross-task-separation-proofs}

\subsection{Fusion rescue from two non-benign sources}
\label{app:nonbenign-fusion-rescue}

\begin{proof}[Proof of
Theorem~\ref{thm:cross-task-separation}, part 1]
Fix constants
\[
  0<r<\frac12,\qquad
  0<\ell<\frac12-r,\qquad
  \rho>\max\{b_{\mathrm C},b_{\mathrm B}\},\qquad M>2,
\]
and put \(m=\lfloor n^2\rfloor\), \(p=\lfloor\rho n\rfloor\), and
\(L=n^\ell\).  We use exactly the scale-dependent construction in
Case 6 of Appendix~\ref{app:classification-transition-witnesses}.  Namely,
\[
  D=\operatorname{diag}(n^{-r},Ln^{-1/2}I_m),
\]
the signal entry of \(A\) is \(n^{1/4+r/2}\), every nuisance entry is
\(n^{-2}\), and \(\Sigma_U\) has signal variance \(n^{-M}\), followed by
\(p\) unit nuisance variances and \(m-p\) nuisance variances \(n^{-M}\).
Thus \(\|A_{-t}\|_{\mathrm F}^2\sim n^{-2}\), and the second marginal
nuisance variances are \(1+n^{\ell-9/2}\) and
\(n^{-M}+n^{\ell-9/2}\) in the two respective groups.  For reference, the
block calculation in
Appendix~\ref{app:classification-transition-witnesses} gives explicitly
\[
  S_{J,-t}\asymp Ln^{3/2},\qquad
  \sum_j\mu_{J,j}^2=\Theta(n^{1+2\ell}),\qquad
  R_{J,-t}\asymp n^2,\qquad k^\star_{-t,J}=0.
\]

Let \((\lambda_{t,J},v_+)\) be the leading normalized eigenpair of the
two-dimensional joint signal block.  For this cross-task result, rescale the
experimental classification direction by \(\lambda_{t,J}^{-1/2}\): if
\(v_+=(v_1,v_2)^\top\), take
\(\gamma_1=v_2/\sqrt{\lambda_{t,J}}\) and
\(\beta_1=(v_1+n^{1/4+r/2}v_2)/\sqrt{\lambda_{t,J}}\), with all remaining
entries zero.  Also take a fixed \(\sigma^2>0\).  The common score is
\[
  L_n=\beta_n^\top X_1+\gamma_n^\top U+\xi,
  \qquad \xi\sim N(0,\sigma^2).
\]
The resulting signal variances explained by \(X_1\), \(X_2\), and the
joint input converge to positive constants, and their normalized
probit noise levels remain bounded.  In particular, the same score supplies
a one-sparse eigendirection in all three inputs.  This scalar
normalization changes neither the covariance construction nor any
\(\Delta\)-order used below.

\paragraph{Classification states.}
The signal-deleted calculations in Case 6 of
Appendix~\ref{app:classification-transition-witnesses} give
\[
  \Delta_{1,n}\asymp n^{2(r+\ell)},\qquad
  \Delta_{2,n}\longrightarrow\frac1\rho,\qquad
  \Delta_{J,n}\asymp n^{2\ell-1}.
\]
All three cutoffs equal zero.  Since \(r+\ell>0\) and
\(\ell<1/2\), Theorem~\ref{thm:classification-main-characterization}
therefore yields
\[
  X_1:\Ncls,\qquad
  X_2:\Ncls,\qquad
  (X_1,X_2):\Bcls.
\]

\paragraph{Regression states.}
For completeness, consider a one-sparse regression reduction with signal
eigenvalue \(\lambda_t\), standardized population amplitude \(\alpha\),
nuisance inverse Gram matrix \(B\), and independent residual vector
\(\varepsilon\).  The Sherman--Morrison formula
\citep[Section~0.7.4]{hornjohnson2013matrix} gives the exact fitted signal amplitude
\begin{equation}
  \widehat\alpha
  =
  \frac{\alpha u+\lambda_tz^\top B\varepsilon}{1+u},
  \qquad
  u:=\lambda_tz^\top Bz.
  \label{eq:cross-task-regression-survival}
\end{equation}
When the nuisance tail has mass \(S\), cutoff zero, and effective rank much
larger than \(n\), the weighted Gaussian Gram bound
\citep[Lemma~10]{bartlett2020benign} gives
\(cSI_n\preceq A_-\preceq CSI_n\) with probability tending to one.
Together with \(\|z\|_2^2/n\to_{\mathbb P}1\), this yields
\[
  u\asymp_{\mathbb P}\frac{n\lambda_t}{S},
\]
Moreover, conditionally on \(B,z\), if the residual variance is bounded,
\[
  \mathbb E\!\left[
    (\lambda_tz^\top B\varepsilon)^2\mid B,z
  \right]
  \leq C\lambda_t^2\|B\|_{\mathrm{op}}^2\|z\|_2^2
  =O_{\mathbb P}\!\left(\frac{\lambda_t^2n}{S^2}\right).
\]
Conditional Chebyshev therefore gives
\[
  \lambda_tz^\top B\varepsilon
  =O_{\mathbb P}\!\left(\frac{\lambda_t\sqrt n}{S}\right).
\]

For \(X_1\), \(\lambda_{t,1}=n^{-r}\) and
\(S_1\asymp Ln^{3/2}\).  Hence
\[
  u_1\asymp_{\mathbb P}n^{-1/2-r-\ell}\to0,
  \qquad
  \lambda_{t,1}z^\top B_1\varepsilon_1=o_{\mathbb P}(1).
\]
Because \(\alpha_1^2\to1\),
(\ref{eq:cross-task-regression-survival}) shows that the signal bias stays
bounded away from zero.  Thus \(X_1\) is regression non-benign.

For \(X_2\), deleting its leading signal eigenvalue leaves \(p\) unit
eigenvalues and negligible remainder.  Its full-spectrum Bartlett cutoff is
one and
\[
  \frac{n}{R_1(\Sigma_2)}\longrightarrow\frac1\rho.
\]
The input-specific residual variance is bounded below by the fixed
\(\sigma^2\).  Lemma~\ref{lem:fixed-probability-variance-lower} therefore
implies \(X_2:\Nreg\).

For the joint input,
\(\lambda_{t,J}\asymp n^{1/2}\), whereas its nuisance mass satisfies
\(S_J\asymp Ln^{3/2}\).  The direct nuisance-block calculation in Case 6
gives the same Gram scale, and hence
\[
  u_J\asymp_{\mathbb P}
  \frac{n\lambda_{t,J}}{S_J}
  \asymp n^{-\ell}\to0.
\]
The residual term in (\ref{eq:cross-task-regression-survival}) is
\(o_{\mathbb P}(1)\), while \(\alpha_J=1\).  The joint regression signal
therefore has asymptotically unit bias, proving
\((X_1,X_2):\Nreg\). This establishes the first table in
Theorem~\ref{thm:cross-task-separation}.
\end{proof}

\subsection{Task separation from two benign sources}
\label{app:benign-marginal-task-separation}

\begin{proof}[Proof of
Theorem~\ref{thm:cross-task-separation}, part 2]
Let \(h=\sqrt{\log n}\), \(p=n^3\), and define
\[
  \varepsilon=h^{-2},\qquad
  \eta=\frac{h}{n^2},\qquad
  \rho=n^{-10}.
\]
Take
\begin{equation}
  D=\operatorname{diag}(1,\eta I_p),\qquad
  A=I_{p+1},\qquad
  \Sigma_U=\operatorname{diag}(\varepsilon,\rho I_p).
  \label{eq:cross-task-benign-marginal-covariance}
\end{equation}
The first joint coordinate block is
\[
  M_\varepsilon=
  \begin{pmatrix}1&1\\1&1+\varepsilon\end{pmatrix}.
\]
Let \((\lambda_-,v_-)\) be its smaller normalized eigenpair.  Write
\(v_-=((v_-)_1,(v_-)_2)^\top\), and choose
\begin{equation}
  \gamma_1=\frac{(v_-)_2}{\sqrt{\lambda_-}},
  \qquad
  \beta_1=\frac{(v_-)_1+(v_-)_2}{\sqrt{\lambda_-}},
  \label{eq:cross-task-low-signal-score}
\end{equation}
with all other entries zero and with \(\xi=0\).  Then
\[
  b_J=(\beta-\gamma,\gamma)^\top
  =v_-/\sqrt{\lambda_-},
  \qquad
  b_J^\top\Sigma_Jb_J=1.
\]
Thus the common score \(L_n=\beta^\top X_1+\gamma^\top U\) is exactly the
standardized low-eigenvalue joint signal.  Since
\[
  \lambda_-=\frac{\varepsilon}{2}+O(\varepsilon^2),\qquad
  (v_-)_1+(v_-)_2=\lambda_-(v_-)_1,
\]
both marginal predictable signal variances are \(\Theta(\varepsilon)\),
whereas their residual variances converge to one.

\paragraph{Marginal regression is benign.} Both marginal covariances have
\[
  \frac{r_0(\Sigma_r)}{n}=h(1+o(1)),\qquad
  k_{b_{\mathrm B}}(\Sigma_r)=0,\qquad
  R_0(\Sigma_r)=n^2h^2(1+o(1)),
  \quad r\in\{1,2\}.
\]
Their population coefficient norms satisfy
\(\|b_r\|_2^2\|\Sigma_r\|_{\mathrm{op}}=\Theta(\varepsilon)\).
Apply Theorem~\ref{thm:bartlett-regression-bounds} with
\(\delta_n=n^{-2}\).  The bias and variance upper bounds are
\[
  O(\varepsilon h)=O(h^{-1}),
  \qquad
  O\!\left((\log n)\frac{n}{R_0(\Sigma_r)}\right)=O(n^{-1}),
\]
respectively.  Therefore
\(X_1,X_2:\Breg\).

\paragraph{Joint regression is non-benign.}
After deleting the exact signal direction \(v_-\), the joint nuisance
spectrum consists of \(\lambda_+=2+O(\varepsilon)\), \(p\) eigenvalues
\(\mu_+=2\eta(1+o(1))\), and \(p\) eigenvalues
\(\mu_-=\rho/2(1+o(1))\).  Its nuisance Gram matrix is therefore bounded
below by \(cnhI_n\) with probability tending to one: apply the rectangular
Gaussian singular-value bound
\citep[Theorem~4.6.1]{vershynin2018highdimensional} to the \(p\) columns
with variance \(\mu_+\), noting that \(p/n\to\infty\) and
\(p\mu_+\asymp nh\).  With
\(B\) denoting its inverse,
\[
  u_J:=\lambda_-z^\top Bz
  =O_{\mathbb P}\!\left(\frac{\varepsilon}{h}\right)
  =O_{\mathbb P}(h^{-3}).
\]
There is no joint residual noise, so
(\ref{eq:cross-task-regression-survival}) gives
\(\widehat\alpha_J=u_J/(1+u_J)\to_{\mathbb P}0\), while the true
standardized amplitude equals one.  Consequently
\[
  \mathcal E_{J,n}^{\mathrm{reg}}
  \geq(1-\widehat\alpha_J)^2
  \xrightarrow{\mathbb P}1,
\]
and the fused regression problem is non-benign.

\paragraph{Marginal classification is benign.}
For either marginal, normalize its scalar predictable score to write
\[
  Y^{\mathrm{cls}}=\operatorname{sign}(Z+\tau_rG),
  \qquad \tau_r=\Theta(h).
\]
Although \(\tau_r\) is not uniformly bounded, benignity follows directly.
Let \(B_r\) be the inverse nuisance Gram matrix. Standard Gaussian sample
covariance concentration \citep[Chapter~5]{vershynin2018highdimensional} for
the \(p\) flat nuisance directions gives
\[
  \operatorname{tr}(B_r)\asymp_{\mathbb P}h^{-1},
  \qquad
  \|B_r\|_{\mathrm F}
  =O_{\mathbb P}\!\left((\sqrt n\,h)^{-1}\right).
\]
Moreover
\(\kappa_{\tau_r}=\sqrt{2/\pi}/\sqrt{1+\tau_r^2}
=\Theta(h^{-1})\).  The polarization calculation in
(\ref{eq:app-su-concentration}) therefore yields
\[
  z^\top B_ry
  =\kappa_{\tau_r}\operatorname{tr}(B_r)
   +O_{\mathbb P}(\|B_r\|_{\mathrm F})
  =\Theta_{\mathbb P}(h^{-2})
   +O_{\mathbb P}((\sqrt n\,h)^{-1})>0
\]
with probability tending to one; call this event \(E_{r,n}\). On
\(E_{r,n}\), the fitted classifier has positive survival.  The Gaussian
sign identity (\ref{eq:classification-risk-gamma}) then puts its correlation
with the label in \([0,1]\), and hence its risk is at most \(1/2\).
The marginal Bayes
correlation is \(\Theta(\sqrt\varepsilon)=\Theta(h^{-1})\), so
\[
  0\leq\mathcal E_{r,n}^{\mathrm{cls}}
  \leq\frac12-\mathcal R_{r,n}^{\mathrm{cls},\star}=O(h^{-1})
  \qquad\text{on }E_{r,n}.
\]
On the complement, the excess risk is always at most one. Hence, for every
fixed \(\delta>0\),
\[
  \mathbb P\!\left\{
    \mathcal E_{r,n}^{\mathrm{cls}}>\delta
  \right\}
  \leq \mathbb P(E_{r,n}^{\mathrm c})
    +\mathbf 1\{Ch^{-1}>\delta\}\longrightarrow0.
\]
Thus \(X_1,X_2:\Bcls\). Moreover,
\(\mathcal R_{r,n}^{\mathrm{cls},\star}\to1/2\) and, because the excess risk
vanishes,
\(\mathcal R_{r,n}^{\mathrm{cls}}(\widehat w_n)\to_{\mathbb P}1/2\): this marginal benignity is
driven by an asymptotically uninformative task.

\paragraph{Joint classification is benign.} The joint classification model is exactly one-sparse with signal
\(\lambda_{t,J}=\lambda_-\sim(2h^2)^{-1}\) and zero residual noise.
After deleting this direction, the nuisance spectrum above satisfies
\[
  S_0\asymp nh,\qquad
  \sum_j\mu_j^2=\Theta(1),\qquad
  R_0\asymp n^2h^2,\qquad k_{-t,J}^\star=0.
\]
Consequently
\[
  \Delta_{J,n}
  \asymp
  \left(1+h^3\right)^2\frac1{nh^2}
  \asymp\frac{h^4}{n}
  \longrightarrow0.
\]
Theorem~\ref{thm:classification-main-characterization} proves
\((X_1,X_2):\Bcls\), completing the second table in
Theorem~\ref{thm:cross-task-separation}.
\end{proof}

\clearpage
\bibliography{main}
\bibliographystyle{plainnat}

\end{document}